\documentclass{article}

\usepackage[main,preprint]{neurips_2026}

\usepackage[utf8]{inputenc}
\usepackage[T1]{fontenc}
\usepackage{amsmath,amssymb,amsthm}
\usepackage{mathtools}
\usepackage{algorithm}
\usepackage{algorithmic}
\usepackage{etoc}
\usepackage{graphicx}
\usepackage{booktabs}
\usepackage{multirow}
\usepackage{makecell}
\usepackage{hyperref}
\usepackage{subcaption}
\usepackage{wrapfig}
\usepackage{float}
\usepackage{placeins}  %
\usepackage{xcolor}
\usepackage{microtype}
\usepackage{enumitem}
\setlist[itemize]{topsep=2pt,partopsep=0pt,parsep=0pt,itemsep=1pt,leftmargin=*}
\setlist[enumerate]{topsep=2pt,partopsep=0pt,parsep=0pt,itemsep=1pt,leftmargin=*}
\usepackage[capitalize,nameinlink]{cleveref}

\newtheorem{theorem}{Theorem}
\newtheorem{proposition}{Proposition}
\newtheorem{corollary}{Corollary}
\theoremstyle{definition}

\theoremstyle{remark}
\newtheorem{remark}{Remark}

\newcommand{\R}{\mathbb{R}}
\newcommand{\E}{\mathbb{E}}

\newcommand{\bw}{\mathbf{w}}
\newcommand{\bg}{\mathbf{g}}
\newcommand{\bG}{\mathbf{G}}
\newcommand{\bc}{\mathbf{c}}
\newcommand{\bd}{\mathbf{d}}

\newcommand{\btheta}{\boldsymbol{\theta}}

\newcommand{\cone}{\mathrm{cone}^{+}}
\newcommand{\diag}{\mathrm{diag}}

\crefname{lemma}{Lemma}{Lemmas}
\crefname{theorem}{Theorem}{Theorems}
\crefname{proposition}{Proposition}{Propositions}
\crefname{corollary}{Corollary}{Corollaries}
\crefname{remark}{Remark}{Remarks}
\crefname{section}{Section}{Sections}
\crefname{appendix}{Appendix}{Appendices}
\crefname{table}{Table}{Tables}
\crefname{figure}{Figure}{Figures}
\crefname{equation}{Eq.}{Eqs.}
\crefname{algorithm}{Algorithm}{Algorithms}

\newcounter{diagaxis}

\crefformat{diagaxis}{#2#1#3}
\Crefformat{diagaxis}{#2#1#3}
\crefmultiformat{diagaxis}{#2#1#3}{ and~#2#1#3}{, #2#1#3}{, and~#2#1#3}
\Crefmultiformat{diagaxis}{#2#1#3}{ and~#2#1#3}{, #2#1#3}{, and~#2#1#3}
\crefrangeformat{diagaxis}{#3#1#4--#5#2#6}
\Crefrangeformat{diagaxis}{#3#1#4--#5#2#6}

\makeatletter
\def\@fnsymbol#1{\ensuremath{\ifcase#1\or \dagger \or \ddagger \or \mathsection \or \mathparagraph \or \| \or ** \or \dagger\dagger \or \ddagger\ddagger \else\@ctrerr\fi}}
\makeatother

\title{TOPPO: Rethinking PPO for Multi-Task Reinforcement Learning with Critic Balancing}

\author{%
  Yuanpeng Li \\
  UC Irvine \\
  \And
  Gefei Lin \\
  George Washington University \\
  \And
  Annie Qu \\
  UC Santa Barbara \\
  \And
  Rui Miao\thanks{Corresponding to Rui Miao <\url{rui.miao@utdallas.edu}>} \\
  UT Dallas
}

\begin{document}
\maketitle
\vspace{-0.8cm}
\begin{center}
Code Repository: To be released.
\end{center}
\begin{abstract}
Soft Actor-Critic (SAC) and its variants dominate Multi-Task Reinforcement Learning (MTRL) due to their off-policy sample efficiency, while on-policy methods such as Proximal Policy Optimization (PPO) remain underexplored.  We diagnose that PPO in MTRL suffers from a previously overlooked issue: critic-side gradient ill-conditioning, which may cause tail tasks to stall while easy tasks dominate the value function's updates. To address this, we propose TOPPO (Tail-Optimized PPO), a reformulation of PPO via Critic Balancing---a set of modules that improve gradient conditioning and balance learning dynamics across tasks. Unlike prior approaches that rely on modular architectures or large models, TOPPO targets the optimization bottleneck within PPO itself. Empirically, TOPPO achieves stronger mean and tail-task performance than published SAC-family and ARS-family baselines while using substantially fewer parameters and environment steps on Meta-World+ benchmark. Notably, TOPPO matches or surpasses strong SAC baselines early in training and maintains superior performance at full budget.  Ablations confirm the effectiveness of each module in TOPPO and provide insights into their interactions. Our results demonstrate that, with proper optimization, on-policy methods can rival or exceed off-policy approaches in MTRL, challenging the prevailing reliance on SAC and highlighting critic-side gradient conditioning as the central bottleneck.

\end{abstract}

\vspace{-0.45cm}
\begin{figure}[!h]
\centering
\includegraphics[width=0.97\linewidth]{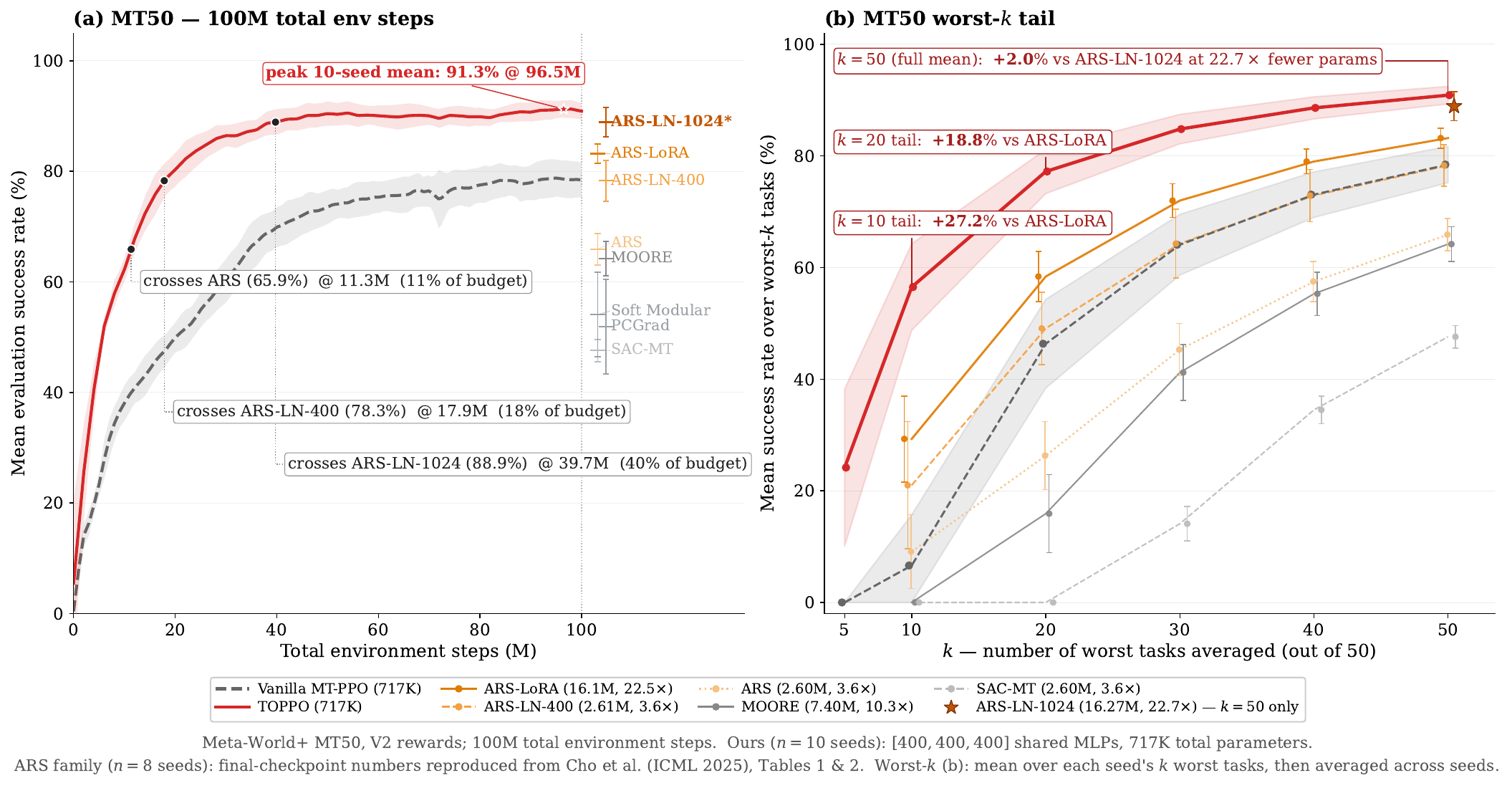}
\vspace{-0.2cm}
\caption{TOPPO (717K params) on Meta-World V2 MT50 matches the ARS family at $3.6$–$22.7\times$ fewer parameters. (a) Training curve. (b) Worst-$k$ tail tasks; $k{=}50$ is the standard MT50 mean.}
\label{fig:train-curve}
\end{figure}
\newpage
\etocdepthtag.toc{maintext}
\section{Introduction}\label{sec:intro}
\begin{table}[!b]
  \vspace{-0.7cm}
\centering
\caption{MT-PPO diagnosis, TOPPO module and mechanism. The main failure is critic-side ill-conditioning \cref{diag:scale,diag:colinear,diag:unfair}. %
The actor row is a residual combiner choice after per-task advantage normalization removes most actor-side scale spread.}

\label{tab:three-layer-map}
\footnotesize
\setlength{\tabcolsep}{3pt}
\renewcommand{\arraystretch}{1.22}
\begin{tabular}{@{}>{\raggedright\arraybackslash}p{0.15\linewidth}
                >{\raggedright\arraybackslash}p{0.3\linewidth}
                >{\raggedright\arraybackslash}p{0.52\linewidth}@{}}
\toprule
Diagnosis & Module / stage & Mechanism and evidence \\
\midrule
\multicolumn{3}{@{}l}{\textit{Critic-side (primary failure)}} \\[-1pt]
\settowidth{\hangindent}{\cref{diag:scale}:~}\hangafter=1\cref{diag:scale}: Scale Spread
& \textbf{PopArt}\par
  \textcolor{gray}{value-target layer;~\citep{vanhasselt2016popart,hessel2019popartmultitask}}
& Compresses critic-Gram diagonal scale. %
\par
  \textcolor{gray}{Evidence: \cref{fig:f3-early,fig:f3,cor:fg-popart,tab:intervention-isolation,app:v1-sensitivity}.} \\
\settowidth{\hangindent}{\cref{diag:colinear}:~}\hangafter=1\cref{diag:colinear}: Co-Linear Collapse
& \textbf{LayerNorm-Critic (LN-c)}\par
  \textcolor{gray}{critic representation layer~\citep{ba2016layernorm}; RL precedent~\citep{lee2024simba,hansen2024tdmpc2}}
& Regularizes the critic feature Jacobian and lowers critic-Gram off-diagonal cosine collapse while leaving scale structure intact.\par
  \textcolor{gray}{Evidence: \cref{fig:f3-early,fig:f3,fig:fae}.} \\
\settowidth{\hangindent}{\cref{diag:unfair}:~}\hangafter=1\cref{diag:unfair}: Unfair Aggregation
& \textbf{FairGrad-Critic (FG-c)}\par
  \textcolor{gray}{$\alpha{=}1$ critic gradient combiner $\bG^c\bw=\bw^{-1}$~\citep{ban2024fairgrad,mo2000fairness,kelly1997charging}}
& Pins the combined-gradient norm at $\sqrt{K}$-level and absorbs positive task-wise rescaling at the aggregator output.\par
  \textcolor{gray}{Evidence: \cref{thm:scale-invariance,fig:f1,fig:f4}.} \\
\midrule
\multicolumn{3}{@{}l}{\textit{Actor-side (secondary failure)}} \\[-1pt]
Direction Conflict
& \textbf{PCGrad-Actor (PCGrad-a)}\par
  \textcolor{gray}{actor gradient combiner on $\bg_i^a$~\citep{yu2020pcgrad}}
& Actor norm spread is small, so the actor-side combiner targets negative-cosine pairs rather than scale.\par
  \textcolor{gray}{Evidence: \cref{fig:f3b,fig:f3a,tab:per-side}.} \\
\bottomrule
\end{tabular}
\end{table}

Soft Actor-Critic (SAC)~\citep{haarnoja2018sac} has been the default backbone for Multi-Task Reinforcement Learning (MTRL), due to its off-policy nature and sample efficiency under online data-collection budgets: the replay buffer reuses past experiences, and the slow target network softens per-task TD-error magnitudes across history so that cross-task gradient imbalance is not directly visible to the optimizer.
On Meta-World~\citep{yu2020metaworld}, the canonical multi-task manipulation benchmark, this preference is near-universal: architectural variants (Soft Modularization~\citep{yang2020softmodules}, CARE~\citep{sodhani2021care}, PaCo~\citep{sun2022paco}, MOORE~\citep{hendawy2023moore}, ARS~\citep{cho2025ars}) and gradient-aggregation methods (PCGrad~\citep{yu2020pcgrad}, CAGrad~\citep{liu2021cagrad}, FAMO~\citep{liu2023famo}, FairGrad~\citep{ban2024fairgrad}, Nash-MTL~\citep{navon2022nashmtl}, loss-level reweighting~\citep{kendall2018uncertainty,liu2019dwa}) all build on SAC-style training~\citep{metaworldplus2025}.
Despite this, every published non-ARS multi-task SAC method floors at $0\%$ success rate on the ten hardest MT50 tasks, while the strongest ARS variant clears the floor only at $29.3\%$ and at a substantially higher parameter cost.

In contrast, on-policy methods such as Proximal Policy Optimization (PPO)~\citep{schulman2017ppo} remain underexplored in MTRL. The original Meta-World Multi-Task PPO (MT-PPO) baseline scored only $\sim\!30\%$ on MT10 with a $(512,512)$ MLP, and this comparison has not been revisited under architectures matched to modern MTRL methods. PPO is also a useful testbed for cross-task gradient ill-conditioning: its on-policy GAE exposes task imbalance at each minibatch, without replay-buffer smoothing or target-network lag. We therefore revisit MT-PPO with a $[400{\times}3]$ backbone, matching CAGrad/CARE/PaCo/MOORE, and find that even vanilla MT-PPO lifts the worst-10 tail above the zero floor of prior non-ARS methods while using far fewer parameters.

Although vanilla MT-PPO already breaks the floor, the hardest tasks remain far from solved, and we trace the remaining gap to a self-reinforcing trap on the critic.
Early in training, a few easier tasks learn fast and produce large critic gradients, pulling the shared critic network toward features that reduce their TD errors. Harder tasks that contribute little to this aggregate update must then fit their value functions through shared features optimized primarily for other tasks, which biases their advantage estimates rather than simply making them vanish. The actor updates on those tasks then follow this distorted signal; in the online loop, the resulting policies collect worse or less informative rollouts, leaving the next critic update with little useful signal from the same tasks.
Crucially, loss-level or actor-side reweighting methods such as DWA~\citep{liu2019dwa} and UW~\citep{kendall2018uncertainty} cannot break this loop when they leave the critic-gradient geometry uncorrected.

Empirically, per-task critic gradient norms span $\sim\!497\times$ across tasks, while per-task actor gradient norms span only $\sim\!4.1\times$ after per-task advantage normalization (\cref{fig:f3b}).
This asymmetry is stable across training, and concurrent benchmarking work also identifies the critic, not the actor, as where cross-task gradient conflict concentrates in MT-PPO~\citep{joshi2025mtbench}.
The critic ill-conditioning then decomposes into three aspects (\cref{tab:three-layer-map}): (\cref{diag:scale}) per-task gradient-norm spread, (\cref{diag:colinear}) representation-level co-linear collapse in the per-task Gram matrix, and (\cref{diag:unfair}) unfair aggregation of an already-imbalanced gradient set.

To address these three aspects of critic ill-conditioning together with the residual actor-side gradient conflict, we propose TOPPO (Tail-Optimized PPO), which incorporates four modules at four distinct stages of the per-minibatch update (\cref{tab:three-layer-map}).
PopArt normalizes per-task value targets via a per-task affine head, putting each task's squared TD loss on a comparable scale before the critic gradients are formed.
LN-c adds LayerNorm in the critic to improve hidden-feature conditioning.
FairGrad ($\alpha{=}1$) reweights per-task critic gradients with an $\alpha$-fair objective, selecting a scale-invariant aggregate critic update.
Finally, PCGrad-a handles the remaining actor-side conflict by projecting negative-cosine actor-gradient pairs after advantage normalization.

In our experiments, with only $717$K parameters, TOPPO reaches $90.88 \pm 1.59\%$ final-checkpoint mean success on MT50, outperforming ARS-LN-1024 by $1.98\%$ with $22.7\times$ fewer parameters (\cref{tab:combined-headline-tail}). It also reaches $56.50 \pm 7.76\%$ on the worst-10 tail tasks, $27.2\%$ above ARS-LoRA with $22.5\times$ fewer parameters, while published non-ARS multi-task SAC methods report $0.0\%$. TOPPO matches ARS-LN-1024's full-budget mean at $\sim\!40\%$ of the $100$M-step budget (\cref{fig:train-curve}).

\paragraph{Summary of contributions.} Our main contribution is a critic-centered diagnosis of MT-PPO failure and a stage-specific reformulation across the target, critic, and aggregation pathways.
\begin{itemize}\itemsep0pt\parskip0pt
\item We diagnose three major aspects of MT-PPO critic ill-conditioning (\cref{diag:scale,diag:colinear,diag:unfair}) and a minor aspect of actor ill-conditioning, and identify their empirical signatures in MT50 (\cref{fig:f3,fig:fae}).
\item We propose TOPPO, a four-module intervention (PopArt, LN-c, FairGrad ($\alpha{=}1$), PCGrad-a), applied at distinct stages of the per-minibatch update (\cref{tab:three-layer-map,sec:method:framework}).
\item Theoretically, we show that FairGrad with $\alpha{=}1$ fixes the combined-gradient $L^2$ energy at $\sqrt{K}$ and is invariant to positive task-wise rescaling at the aggregator output, explaining the predictable composition of PopArt and FG-c. We also introduce a projected Newton solver that makes FG-c tractable at $K{=}50$ ($\sim\!50$--$1{,}450\times$ faster than a generic \texttt{scipy} nonlinear-solver baseline).
\item We show through experiments on Meta-World+ that TOPPO outperforms existing methods, and ablations show that all four modules are needed for the full headline performance.
\end{itemize}

\section{Preliminaries}\label{sec:method:prelim}

We consider Markov Decision Processes (MDPs) $\{\mathcal{M}_i\}_{i=1}^K$ sharing the same state space $\mathcal{S}$ and action space $\mathcal{A}$ across $K$ tasks indexed by $i$, with initial state $s_0\sim \mu_i$, task-specific transition dynamics $P_i(s' \mid s, a)$ and reward $r_i(s, a)$~\citep{yu2020metaworld,metaworldplus2025}. In online MTRL, the agent jointly learns a task-aware policy $\pi_{\btheta^a}(a \mid s, i)$ and state-value critic $V_{\btheta^c}(s, i)$ with parameters $\btheta^a$ and $\btheta^c$ shared across all $K$ tasks. The objective is to maximize the average cross-task expected discounted return $\frac{1}{K}\sum_{i=1}^K \E_{\pi_{\btheta^a},\, P_i, \mu_i}\!\big[\sum_t \gamma^t r_i(s_t, a_t)\big]$ with some discount factor $\gamma \in (0, 1)$. 

In MT-PPO, for each task $i$, we collect rollouts $\{(s_t, a_t, r_i(s_t, a_t))\}$ from $\mathcal{M}_i$ under the old policy $\pi_{\btheta^a_{\text{old}}}$, then calculate the importance ratio $\rho_t^{(i)}(\btheta^a) = \pi_{\btheta^a}(a_t \mid s_t, i) \,/\, \pi_{\btheta^a_{\text{old}}}(a_t \mid s_t, i)$, the per-task TD residual $\delta_t^{(i)} = r_i(s_t, a_t) + \gamma V_{\btheta^c_{\text{old}}}(s_{t+1}, i) - V_{\btheta^c_{\text{old}}}(s_t, i)$, the GAE advantage $\hat A_t^{(i)} = \sum_{l \geq 0} (\gamma \lambda)^l \delta_{t+l}^{(i)}$, and the return target $\hat V_t^{(i),\text{targ}} = \hat A_t^{(i)} + V_{\btheta^c_{\text{old}}}(s_t, i)$~\citep{schulman2016gae}. The vanilla MT-PPO objective averages per-task losses, each combining a clipped policy surrogate, a squared value loss, and an entropy bonus~\citep{schulman2017ppo}:
\begin{equation}\label{eq:ppo}
\begin{aligned}
L(\btheta) &\;=\; K^{-1}\sum\nolimits_{i=1}^K L_i(\btheta), \qquad \text{where }
L_i(\btheta) \;=\; L_i^a(\btheta^a) + c_v\, L_i^c(\btheta^c) - c_e\, H_i(\btheta^a), \\[2pt]
L_i^a(\btheta^a) &\;=\; -\,\E_t\!\left[\min\!\big(\rho_t^{(i)}(\btheta^a)\, \hat A_t^{(i)},\; \mathrm{clip}(\rho_t^{(i)}(\btheta^a), 1{-}\epsilon, 1{+}\epsilon)\, \hat A_t^{(i)}\big)\right], \\[2pt]
L_i^c(\btheta^c) &\;=\; \E_t\!\left[\big(V_{\btheta^c}(s_t, i) - \hat V_t^{(i),\text{targ}}\big)^2\right], \qquad
H_i(\btheta^a) \;=\; \E_t\!\big[\mathcal{H}(\pi_{\btheta^a}(\cdot \mid s_t, i))\big],
\end{aligned}
\end{equation}
where $\E_t$ denotes the empirical average over steps in the rollout batch, $c_v, c_e > 0$ are scalar coefficients, and $\mathcal{H}(\bullet)$ is the entropy of a distribution. 
Per-task advantage normalization (computed within each task's slice of a minibatch) is applied to $\hat A^{(i)}$ before the actor surrogate. This is a task-wise variant of the per-minibatch normalization option studied in PPO implementation analyses~\citep{andrychowicz2021whatmatters}. Here it is used to remove actor-side cross-task scale differences, following the MT-PPO convention in Meta-World+~\citep{metaworldplus2025}.

Across all $K$ tasks, $\btheta^a$ and $\btheta^c$ are shared, so every gradient step must aggregate $K$ per-task gradients over shared parameters into a single update direction. The relative magnitudes and pairwise alignment of these gradients determine whether the aggregated gradient represents the multi-task objective fairly or is dominated by a subset of tasks. 

To diagnose the gradient ill-conditioning, we capture the per-task gradients $\bg_i^a := \nabla_{\btheta^a} L_i^a$ and $\bg_i^c := \nabla_{\btheta^c} L_i^c$ on the shared parameters for actor and critic, respectively. We omit global scalar coefficients such as $c_v$ because they do not affect cross-task relative geometry. We also consider the Gram matrices $\bG^\bullet \in \R^{K\times K}$, $\bG^\bullet_{ij} = \langle \bg_i^\bullet, \bg_j^\bullet\rangle$ ($\bullet \in \{a, c\}$, i.e., actor and critic), where the diagonal $\bG^\bullet_{ii} = \|\bg_i^\bullet\|^2$ records per-task scale, and the off-diagonal cosines $\bG^\bullet_{ij}/\sqrt{\bG^\bullet_{ii}\bG^\bullet_{jj}}$ record pairwise direction conflict. Throughout, we contrast each aggregator against the default average $\bar{\bg}^\bullet = \frac{1}{K}\sum_i \bg_i^\bullet$ applied to each shared parameter set for actor and critic.

\section{Gradient Ill-conditioning}\label{sec:method:pathology}
At each shared-parameter update, MT-PPO aggregates $K$ per-task gradients into a single direction. We diagnose this aggregation along three fairness axes: preventing raw-scale domination, preserving task-specific directions, and allocating update budget according to gradient geometry rather than a few dominant tasks. Vanilla MT-PPO's critic fails along all three axes, as summarized in \cref{tab:three-layer-map} and detailed below.

\begin{enumerate}[label=\textbf{(D\arabic*)},itemsep=0pt,parsep=0pt,topsep=0pt]
\item \refstepcounter{diagaxis}\label{diag:scale}\textbf{Per-task scale spread.}
Large diagonal entries $\bG^c_{ii}$ make the mean
$\bar{\bg}^c := K^{-1}\sum_i \bg_i^c$ inherit the scale of high-gradient tasks, leaving small-gradient tasks with little influence.

\item \refstepcounter{diagaxis}\label{diag:colinear}\textbf{Co-linear collapse.}
A high mean off-diagonal $|\!\cos\!| = |\bG^c_{ij}|/\sqrt{\bG^c_{ii}\bG^c_{jj}}$ for $i\neq j$ indicates that per-task critic gradients have collapsed toward a small set of shared-feature directions.

\item \refstepcounter{diagaxis}\label{diag:unfair}\textbf{Unfair aggregation.}
Even when scale spread and co-linearity are reduced, the default mean update can still allocate most of the combined direction to a few tasks. The aggregation rule therefore needs to be geometry-aware, not merely scale-normalized.
\end{enumerate}

\begin{wrapfigure}[17]{r}{0.65\textwidth}
\centering
\vspace{-0.8cm}
\includegraphics[width=\linewidth]{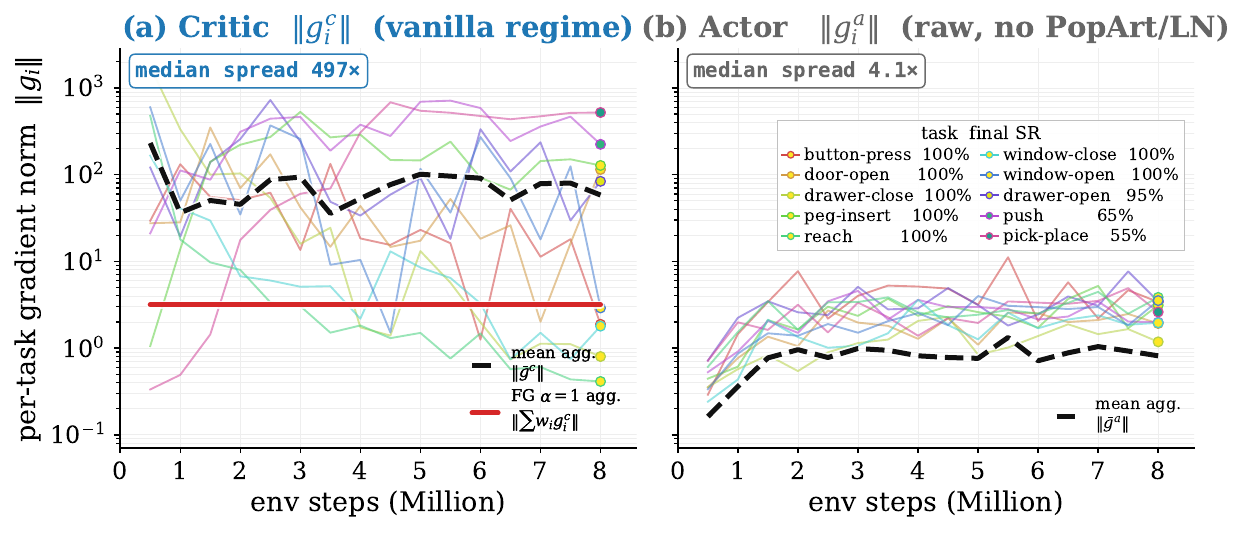}
\vspace{-0.7cm}
\caption{Per-task gradient norm for critic vs.\ actor in MT10 tasks from $0.5$M--$8.0$M steps. Thin colored lines: per-task $\|\bg_i^\bullet\|$. Overlaid bold lines: aggregated combined-gradient norm under two aggregators. (a) Critic-side: the mean aggregator $\|\bar{\bg}^c\| = \|K^{-1}\!\sum_i \bg_i^c\|$ (black dashed), the FG-c ($\alpha{=}1$) aggregator $\|\bd^c\| = \|\!\sum_i w_i \bg_i^c\| = \sqrt{K}$ (red) by \cref{thm:scale-invariance}. (b) Actor-side: the mean aggregator $\|\bar{\bg}^a\| = \|K^{-1}\!\sum_i \bg_i^a\|$ (black dashed).}
\label{fig:f3b}
\end{wrapfigure}

\paragraph{Critic is the dominant issue.}
In our diagnostics, the per-task gradient-norm spread on the critic,
$\max_i\|\bg_i^c\|/\min_i\|\bg_i^c\|$, is two orders of magnitude larger than the actor-side spread,
$\max_i\|\bg_i^a\|/\min_i\|\bg_i^a\|$, and this gap persists across training (\cref{fig:f3b}; full diagnostics in \cref{app:ablations,app:mech-figs}). This asymmetry is expected: per-task advantage normalization absorbs much of the reward-scale variation in the actor surrogate, whereas the critic fits unnormalized value targets unless explicitly corrected. We therefore focus the diagnosis and interventions on the critic side. Actor-side aggregation is secondary in our work, and we use PCGrad~\citep{yu2020pcgrad} as a representative actor-gradient aggregator.

\paragraph{Self-reinforcing tail trap.}
Early critic imbalance is damaging because it can turn small initial gaps into a self-reinforcing failure mode. In vanilla MT-PPO, the critic update averages task gradients, so the shared critic parameters $\btheta^c$ are driven mainly by tasks with large critic gradients (\cref{diag:scale,diag:colinear,diag:unfair}). Small-gradient tasks must then fit their value functions through shared features shaped primarily by other tasks' TD errors, yielding biased advantages $\hat A_t^{(i)}$ and distorted actor updates. The resulting policies collect less-informative data, which further reduces useful critic signal for the same tasks in later updates. This feedback loop explains why actor-side reweighting or gradient surgery alone gives limited gains without Critic Balancing. Empirically, vanilla MT-PPO's worst-10 MT50 tasks average only $6.6\%$ success at the training budget, and its five hardest tasks remain below $20\%$ (\cref{tab:worst-k,tab:per-task-mt50}). This motivates correcting critic gradient ill-conditioning early in training; see \cref{fig:f3-early} for \textit{early-stage} Gram diagnostics and \cref{fig:ladder} for TOPPO's effect.

\section{Method: TOPPO}\label{sec:method:framework}

\begin{wrapfigure}[17]{r}{0.59\textwidth}
\vspace{-1.8cm}
\begin{minipage}{\linewidth}
\begin{algorithm}[H]
\caption{\textbf{TOPPO}: a PPO collect-and-update phase. See full version in \cref{alg:toppo}.}
\label{alg:toppo-compact}
\begin{algorithmic}[1]
\REQUIRE Tasks $\{\mathcal{M}_i\}_{i=1}^K$, $\btheta = (\btheta^a, \btheta^c)$ with \textbf{[LN-c]} and PopArt $\{(\sigma_i, \mu_i)\}$, repeats $R$, max grad-norm $\tau$
\STATE Rollout $\mathcal{B} = \bigcup_i \mathcal{B}_i$ by $\pi_{\btheta^a_{\text{old}}}$; calculate $\hat A_t^{(i)}, \hat V_t^{(i),\text{targ}}$
\STATE \textbf{[PopArt]} update $(\mu_i, \sigma_i)$, PopArt-renorm affine head
\FOR{$r = 1$ \TO $R$ (recompute adv.\ if $r > 1$)}
  \FOR{each stratified minibatch $\mathcal{B}^{\text{mb}} \subset \mathcal{B}$}
    \STATE normalize advantages within each task; use critic targets $(\hat V_t^{(i),\text{targ}}-\mu_i)/\sigma_i$ as in \cref{sec:method:popart}
    \STATE for each $i$: compute $\bg_i^a$ and $\bg_i^{c,\mathrm{PopArt}}$ from the actor and PopArt critic losses
    \STATE \textbf{[FG-c]} $\bd^c\! \gets\!\! \sum_i\! w_i \bg_i^{c,\mathrm{PopArt}}$ from \cref{alg:ynewton}
    \STATE \textbf{[PCGrad-a]} $\bd^a \gets \mathrm{PCGrad}(\{\bg_i^a\})$
    \STATE rescale $(\bd^a, \bd^c)$ by $\min(1,\, \tau / \|(\bd^a, \bd^c)\|_2)$; Adam step on $\btheta$
  \ENDFOR
\ENDFOR
\end{algorithmic}
\end{algorithm}
\end{minipage}
\vspace{-1.5em}
\end{wrapfigure}

To address the gradient ill-conditioning discussed in \cref{sec:method:pathology}, we propose TOPPO, which consists of three critic-side modules and an actor-side gradient aggregator, listed in \cref{tab:three-layer-map} and laid out in \cref{alg:toppo-compact}. 

On the critic side, \textbf{PopArt} normalizes per-task value targets, putting each task's TD loss on the same scale before computing critic-side gradients (\cref{diag:scale}); \textbf{LN-c} (LayerNorm in the critic network) regularizes the hidden-activation conditioning that produces co-linear collapse (\cref{diag:colinear}); \textbf{FG-c ($\alpha{=}1$)} reweights per-task gradients at the aggregator so that the pre-clipping combined-gradient norm is pinned at $\sqrt{K}$ (\cref{diag:unfair}).

On the actor side, per-task advantage normalization makes reward-scale variation much smaller, leaving direction conflict as the main residual issue. Therefore, we apply PCGrad to actor-gradient aggregation (\textbf{PCGrad-a}).
Empirical comparisons with alternative actor-side aggregators are reported in \cref{app:ablations,tab:per-side}.

In subsections below, we discuss the three critic-side interventions in more detail.

\subsection{PopArt: per-task value-target normalization}\label{sec:method:popart}
In heterogeneous environments the per-task value-target variation can be extreme,
so the squared TD loss $L_i^c$ inherits an order-of-magnitude spread quadratically across tasks (\cref{diag:scale}).

PopArt~\citep{vanhasselt2016popart,hessel2019popartmultitask} addresses this directly at the value-target level by inserting a per-task affine layer between the shared critic network and the value prediction. For the ``Art'' (Adaptively Rescaling Targets) part, the shared critic outputs a single normalized scalar $z_{\btheta^c}(s, i) \in \R$, and a per-task scale-and-shift $(\sigma_i, \mu_i)$ maps it back to the raw value range by
\[
V_{\btheta^c}(s, i) \;=\; \sigma_i\, z_{\btheta^c}(s, i) + \mu_i,
\qquad \text{where } (\mu_i, \sigma_i^2) \;\approx\; \big(\,\widehat{\mathbb{E}}_t[\hat V_t^{(i),\text{targ}}],\; \widehat{\mathrm{Var}}_t[\hat V_t^{(i),\text{targ}}]\,\big).
\]
These running estimates of each task's target mean and variance are accumulated online via Welford's algorithm~\citep{welford1962} (canonical PopArt uses an EMA, see \cref{app:popart-welford}).
The shared critic is then trained to predict normalized per-step targets $(\hat V_t^{(i),\text{targ}} - \mu_i)/\sigma_i$, which have approximately zero mean and unit variance per task by construction, so every task's value-loss sits on the same scale.

The ``Pop'' (Preserving Outputs Precisely) part is a small bookkeeping step: whenever $(\mu_i, \sigma_i)$ update, the per-task affine head is simultaneously re-parameterized so the raw prediction $V_{\btheta^c}(s, i)$ does not jump~\citep{vanhasselt2016popart}, and the underlying critic remains a single shared MLP.

PopArt target normalization acts as a task-wise rescaling of critic gradients. Let $\bg_i^{c,\text{PopArt}}$ be the shared-critic gradient from the PopArt-normalized critic loss, and let $\bg_i^{c,\text{raw}}$ be the corresponding gradient from the unnormalized-target loss. For shared critic parameters $\btheta^c$ and a fixed PopArt statistic $\sigma_i$ during the update,
\begin{equation}\label{eq:popart-rescale}
\bg_i^{c,\text{PopArt}} \;=\; \sigma_i^{-2}\, \bg_i^{c,\text{raw}},
\end{equation}
with one factor of $\sigma_i^{-1}$ from the normalized residual and another $\sigma_i^{-1}$ from the chain rule. This rescaling is observed empirically in \cref{fig:f3-early}.

\subsection{LN-c: LayerNorm on the critic network}\label{sec:method:lnc}
Pre-activation LayerNorm~\citep{ba2016layernorm} is applied to the critic's hidden linear layers to stabilize task-wise hidden activation scales before forming $\bG^c$. In early-stage critic Gram diagnostics, critic LN reduces the mean off-diagonal $|\!\cos\!|$ from $0.34$ to $0.20$ while roughly preserving diagonal magnitude (\cref{fig:f3-early}; full diagnostics in \cref{fig:f3,fig:fae}). This is consistent with prior RL reports on LayerNorm~\citep{lee2024simba,hansen2024tdmpc2}. In the MT50 cumulative ablation ladder, LN-c gives the largest single-step gain (\cref{fig:ladder}).

\subsection{FairGrad ($\alpha{=}1$): per-task critic-gradient aggregation}\label{sec:method:fgc}
\begin{wrapfigure}[14]{r}{0.6\textwidth}
\centering
\vspace{-0.8cm}
\includegraphics[width=\linewidth]{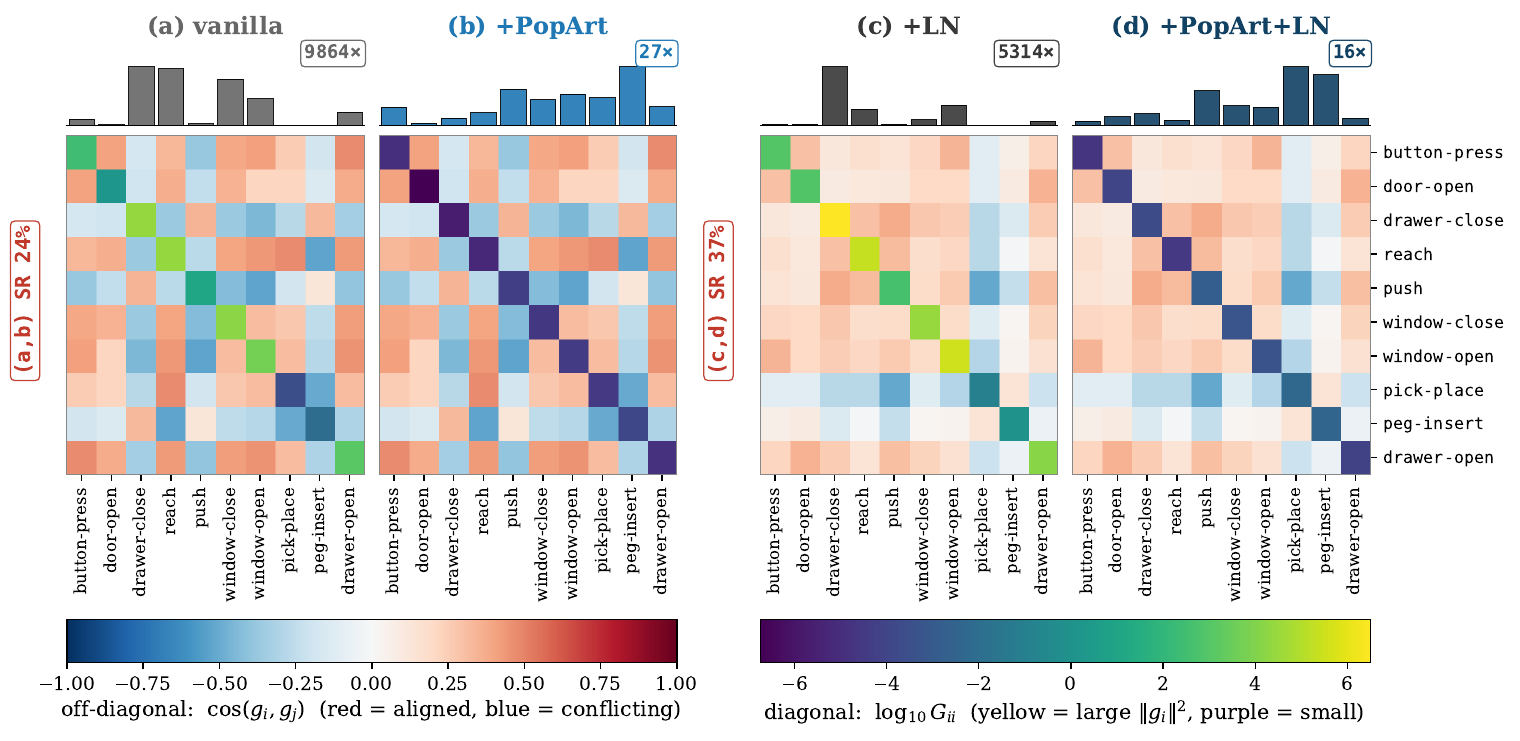}
\vspace{-0.5cm}
\caption{Early-stage ($\sim\!0.5$M steps) MT10 Gram matrices of critic gradients. Off-diagonal values are $\cos\angle(\bg_i^c,\bg_j^c)$ and diagonal $\log_{10}\bG^c_{ii}$. Histograms: $\|\bg_i^c\|/K$ and magnitude-spread ratio. 
See \cref{fig:f3} for mid/late stages.}
\label{fig:f3-early}
\end{wrapfigure}

To address unfair aggregation (\cref{diag:unfair}), we replace the critic mean update with FairGrad aggregation,
$\bd^c=\sum_i w_i \bg_i^c$, where $\bw\in\R^K_{>0}$ is chosen by an $\alpha$-fairness objective over task contributions~\citep{ban2024fairgrad}. 

Intuitively, FairGrad downweights tasks whose gradients already dominate the combined direction. For example, consider orthogonal gradients $\bg_1,\bg_2$ with norms $10$ and $1$, the mean direction aligns $99.5\%$ with $\bg_1$, whereas FairGrad ($\alpha{=}1$) gives $w_i=1/\|\bg_i\|$, making the two weighted gradients have equal norms.
Formally, FairGrad maximizes the $\alpha$-fairness utility
\begin{equation}\label{eq:alpha-utility}
U_\alpha(x) \;=\; \begin{cases} (1-\alpha)^{-1}\, x^{1-\alpha}, & 0<\alpha \neq 1, \\ \log x, & \alpha = 1, \end{cases}
\end{equation}
over task contributions $x_i := (\bG^c\bw)_i$, where $x_i$ is task $i$'s alignment with the combined direction $\bd^c$. Under the standard FairGrad scaling convention, the KKT condition is
$\bG^c\bw = \bw^{-1/\alpha}$ componentwise. 
With $\alpha=1$, the KKT condition reduces to $\bG^c\bw=\bw^{-1}$. The resulting combined critic gradient has fixed $L_2$ norm $\sqrt{K}$ on every minibatch and is invariant to positive task-wise rescaling $\bg_i \mapsto c_i\bg_i$. Formally, we have \cref{thm:scale-invariance} below.

\begin{theorem}[FairGrad ($\alpha{=}1$): fixed norm and scale invariance]
\label{thm:scale-invariance}
Let $\bg_1,\ldots,\bg_K\in\R^d$ be linearly independent, with Gram matrix
$\bG=[\langle \bg_i,\bg_j\rangle]_{ij}\succ0$. Let $\bw\in\R^K_{>0}$ be the unique positive solution of
$\bG\bw=\bw^{-1}$, where the inverse is componentwise. Then $\bd=\sum_{i=1}^K w_i\bg_i$ has fixed norm 
$\|\bd\|^2=\bw^\top\bG\bw=K$.
Moreover, it is invariant to arbitrary positive rescaling. Specifically, for any
$\bc\in\R^K_{>0}$ and $\tilde\bg_i=c_i\bg_i$, the rescaled Gram matrix
$\tilde\bG$ has positive solution $\tilde\bw=\bw\oslash\bc$, and
$\sum_i \tilde w_i\tilde\bg_i=\sum_i w_i\bg_i$.
For $0<\alpha\neq1$, this exact scale invariance does not hold under arbitrary non-uniform $\bc$. Proof is in \cref{app:scale-invariance-proof}.
\end{theorem}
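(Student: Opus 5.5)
The plan has three parts: verify the fixed norm directly from the KKT system, prove scale invariance by exhibiting a solution of the rescaled system and invoking uniqueness, and disprove invariance for $\alpha\neq1$ with an explicit example. The theorem takes existence and uniqueness of the positive solution as a hypothesis, but for completeness I would justify it once. The map $\Phi(\bw)=\tfrac12\bw^\top\bG\bw-\sum_i\log w_i$ is strictly convex on $\R^K_{>0}$, since $\bG\succ0$ and $-\log$ is strictly convex. It is also coercive: it tends to $+\infty$ as any $w_i\to0^+$ because of the log barrier, and as $\|\bw\|\to\infty$ because the quadratic dominates. Hence $\Phi$ has a unique minimizer, and its stationarity condition $\nabla\Phi(\bw)=\bG\bw-\bw^{-1}=0$ is exactly the KKT system.

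For the fixed norm, expand $\|\bd\|^2=\sum_{i,j}w_iw_j\langle\bg_i,\bg_j\rangle=\bw^\top\bG\bw$. Substituting $\bG\bw=\bw^{-1}$ gives $\bw^\top\bw^{-1}=\sum_i w_i\cdot w_i^{-1}=K$.

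For scale invariance, write $C=\diag(\bc)$. The rescaled Gram matrix is then $\tilde\bG=C\bG C$, which is still positive definite, so the existence-and-uniqueness argument above applies to it. Set $\tilde\bw=\bw\oslash\bc=C^{-1}\bw$, which is positive. Then
\[
\tilde\bG\tilde\bw=C\bG\bw=C\bw^{-1}=(C^{-1}\bw)^{-1}=\tilde\bw^{-1},
\]
so $\tilde\bw$ solves the rescaled system, and by uniqueness it is the solution. The combined direction is unchanged because $\tilde w_i\tilde\bg_i=(w_i/c_i)(c_i\bg_i)=w_i\bg_i$ term by term.

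For the negative statement with $\alpha\neq1$, the KKT system is $\bG\bw=\bw^{-1/\alpha}$. I would first try the ansatz $\tilde\bw=C^{-1}\bw$. It gives $\tilde\bG\tilde\bw=C\bw^{-1/\alpha}$, whereas the rescaled system requires $\tilde\bw^{-1/\alpha}=C^{1/\alpha}\bw^{-1/\alpha}$. These agree only if $c_i=c_i^{1/\alpha}$ for every $i$, i.e.\ $\bc=\mathbf 1$. This only shows that one particular candidate fails; it does not yet show the aggregate direction changes, because the true rescaled solution could differ from the ansatz. Closing this gap is the main obstacle, and I would settle it with an explicit counterexample.

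Take orthonormal $\bg_1,\bg_2$, so $\bG=I$ and $w_i=1$, giving $\bd=\bg_1+\bg_2$. Rescale with $\bc=(c,1)$, $c\neq1$. Then $\tilde\bG=\diag(c^2,1)$, and the rescaled system decouples:
\begin{itemize}
\item For task 1: $c^2\tilde w_1=\tilde w_1^{-1/\alpha}$, so $\tilde w_1=c^{-2\alpha/(\alpha+1)}$.
\item For task 2: $\tilde w_2=\tilde w_2^{-1/\alpha}$, so $\tilde w_2=1$.
\end{itemize}
The rescaled aggregate is therefore $\tilde w_1 c\,\bg_1+\bg_2=c^{(1-\alpha)/(1+\alpha)}\bg_1+\bg_2$. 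The exponent $(1-\alpha)/(1+\alpha)$ is nonzero for $\alpha\neq1$, so this differs from $\bg_1+\bg_2$ whenever $c\neq1$. The change is in direction, not merely in magnitude, since the coefficient on $\bg_1$ changes while the coefficient on $\bg_2$ stays at 1. This completes the failure of exact scale invariance for $\alpha\neq1$.
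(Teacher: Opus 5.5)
Your proof is correct. For existence and uniqueness, the identity $\|\bd\|^2=K$, and the rescaled solution $\tilde\bw=\bw\oslash\bc$, it is essentially the paper's Steps 1--3. The routes differ only on the $\alpha\neq1$ claim.

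The paper closes the gap you flag after the ansatz computation by linear independence, the same observation it records at the end of Step 3b. If the rescaled aggregate equalled the original, then $\sum_i (c_i\tilde w_i - w_i)\bg_i=0$ would force $\tilde\bw=\bw\oslash\bc$. So the true rescaled solution would have to be your ansatz, and your own computation $c_i^{1-1/\alpha}=1$ then gives the contradiction. That argument is fully general: for every linearly independent family and every $\bc\neq\mathbf 1$, the $\alpha\neq1$ aggregate changes. Had you seen this one-line step, your first approach would already have given this stronger, universal conclusion.

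Your orthonormal counterexample proves only an existential failure. That suffices for the statement as written, and it buys something the paper's argument does not: it computes the rescaled solution exactly and shows the update changes \emph{direction}. The paper's contradiction also fires for uniform $\bc=t\mathbf 1$ with $t\neq1$, where $\tilde\bw=t^{-2\alpha/(\alpha+1)}\bw$ and the aggregate is merely multiplied by $t^{(1-\alpha)/(1+\alpha)}$. So the paper's argument cannot separate a pure magnitude change from a rotation. Your example confirms that non-uniform rescaling genuinely rotates the combined gradient when $\alpha\neq1$, which is what the word ``non-uniform'' in the statement is pointing at.
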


This fixed-norm property (\cref{thm:scale-invariance}) also makes the critic update scale predictable across minibatches. Empirically, FG-c ($\alpha{=}1$) stabilizes the critic-to-actor combined-gradient norm ratio $\|\bd^c\|/\|\bar{\bg}^a\|$ during training (\cref{fig:f3b}), where $\bar{\bg}^a = \frac{1}{K}\sum_i \bg_i^a$.

Solving $\bG^c\bw=\bw^{-1}$ is the main computational bottleneck for FG-c. Existing FairGrad/Nash-MTL solvers~\citep{ban2024fairgrad,navon2022nashmtl} are too slow for per-minibatch MT50 critic updates. The fixed-budget SGD solver in the FairGrad RL release is faster; however, it does not reliably satisfy the KKT condition (\cref{tab:solver}). We therefore propose a projected Newton solver with a closed-form diagonal warm start, which reaches the target KKT tolerance with much lower wall-clock time than a generic \texttt{scipy} nonlinear-solver baseline under our implementation (see details in \cref{app:solver}). 

\begin{remark}[Singular Grams and solver fallbacks]
Near-singular Grams $\bG^c$ are handled by diagonal Jacobian stabilization, diagonal warm start and logged fallbacks; see \cref{alg:ynewton} and \cref{app:solver}.
\end{remark}

\begin{remark}[Value clipping]
PPO value clipping is applied before forming $\bG^c$, so clipped-out samples contribute zero to the realized minibatch gradients. We log the clip-hit fraction from this actual PPO value update, not from a counterfactual unclipped objective. This does not affect \cref{thm:scale-invariance}, which is an algebraic statement about the positive-definite Gram matrix passed to the FG-c solver.
\end{remark}

\begin{corollary}[PopArt rescaling is absorbed at the FairGrad ($\alpha{=}1$) output]\label{cor:fg-popart}
Within one PPO minibatch, before PopArt statistics $(\mu_i,\sigma_i)$ are refreshed, PopArt rescales each shared-critic gradient by $\bg_i^{c,\mathrm{PopArt}}=\sigma_i^{-2}\bg_i^{c,\mathrm{raw}}$ (\cref{eq:popart-rescale}). If the raw critic Gram is positive definite, then by \cref{thm:scale-invariance} the $\alpha{=}1$ FairGrad combined critic gradient is identical on raw and PopArt-rescaled gradients.
\end{corollary}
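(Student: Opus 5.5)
The plan is to reduce the corollary directly to the scale-invariance part of \cref{thm:scale-invariance}, with rescaling vector $\bc$ given by $c_i=\sigma_i^{-2}$. Throughout, we work within a single PPO minibatch in which the PopArt statistics $(\mu_i,\sigma_i)$ are held fixed. This is exactly the regime in which \cref{eq:popart-rescale} holds, so $\bg_i^{c,\mathrm{PopArt}}=c_i\,\bg_i^{c,\mathrm{raw}}$ with every $c_i>0$, because PopArt scales satisfy $\sigma_i>0$.

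The first step is to check the hypotheses of \cref{thm:scale-invariance} for both gradient families. The raw Gram $\bG^{\mathrm{raw}}$ is positive definite by assumption, so the raw gradients are linearly independent. The rescaled Gram is $\tilde\bG=\diag(\bc)\,\bG^{\mathrm{raw}}\,\diag(\bc)$. It is a congruence of a positive definite matrix by an invertible diagonal matrix, hence also positive definite, so the PopArt-rescaled gradients are linearly independent as well.

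The second step is to get a unique positive solution of $\bG\bw=\bw^{-1}$ in each case. The theorem presupposes such a solution, so I would either cite FairGrad \citep{ban2024fairgrad} or note it briefly. Existence and uniqueness follow because the solution is the unique maximizer of the strictly concave function $\sum_i\log w_i-\tfrac12\bw^\top\bG\bw$ on $\R^K_{>0}$: its first-order condition is exactly $\bw^{-1}=\bG\bw$, and the objective tends to $-\infty$ at the boundary of the orthant and at infinity.

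The third step is the conclusion. \cref{thm:scale-invariance} gives that the rescaled solution is $\tilde\bw=\bw\oslash\bc$, that is, $\tilde w_i=\sigma_i^2 w_i$. It then gives $\sum_i\tilde w_i\bg_i^{c,\mathrm{PopArt}}=\sum_i w_i\bg_i^{c,\mathrm{raw}}$, which is the claimed identity of the $\alpha{=}1$ FairGrad combined critic gradients. For transparency, I would also verify the key identity inline: $\tilde\bG\tilde\bw=\diag(\bc)\bG\bw=\bc\odot\bw^{-1}=\tilde\bw^{-1}$. Uniqueness then identifies $\tilde\bw$ as the solver's output.

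The main obstacle is not the algebra but pinning down the scope. \cref{eq:popart-rescale} needs $\sigma_i$ fixed and the PopArt head reparameterized consistently. It also needs the ``shared-critic gradient'' to exclude the per-task affine head parameters, since only the shared trunk gradients pick up the exact $\sigma_i^{-2}$ factor. Value clipping must also act identically in both parameterizations. I would state these as standing conditions inherited from \cref{sec:method:popart}, namely a fixed minibatch, fixed statistics, and gradients taken with respect to the shared $\btheta^c$ only, rather than re-derive them.
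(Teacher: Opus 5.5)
Your proposal is correct and follows the paper's route: the corollary is \cref{thm:scale-invariance} applied with $c_i=\sigma_i^{-2}>0$. Your congruence argument for $\diag(\bc)\,\bG\,\diag(\bc)\succ 0$, the strict-concavity existence/uniqueness argument (the paper's strictly convex $\Phi_1$ with the sign flipped), and the check $\tilde\bG\tilde\bw=\tilde\bw^{-1}$ with $\tilde w_i=\sigma_i^2 w_i$ reproduce Steps~1 and~3 of the appendix proof. Your scope caveats (fixed statistics within the minibatch, gradients taken only with respect to the shared $\btheta^c$, consistent value clipping) are appropriate and match the paper's own reading of the corollary as an aggregator-output identity that takes \cref{eq:popart-rescale} as given.
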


\cref{cor:fg-popart} is an aggregator-output identity, not a trajectory-equivalence claim. It absorbs only PopArt's multiplicative gradient-rescaling channel, not target normalization, running statistics, optimizer state, or downstream actor effects. The small PopArt marginal gain on V2 is consistent with FG-c absorbing this rescaling channel, while the V1 ablation in \cref{sec:exp:ablations} shows that PopArt's non-rescaling effects can still matter.

\section{Experiments}\label{sec:experiments}

\subsection{Setup}\label{sec:exp:setup}

\begin{wrapfigure}[8]{r}{0.65\textwidth}
\vspace{-2.5cm}
\centering
\includegraphics[width=\linewidth]{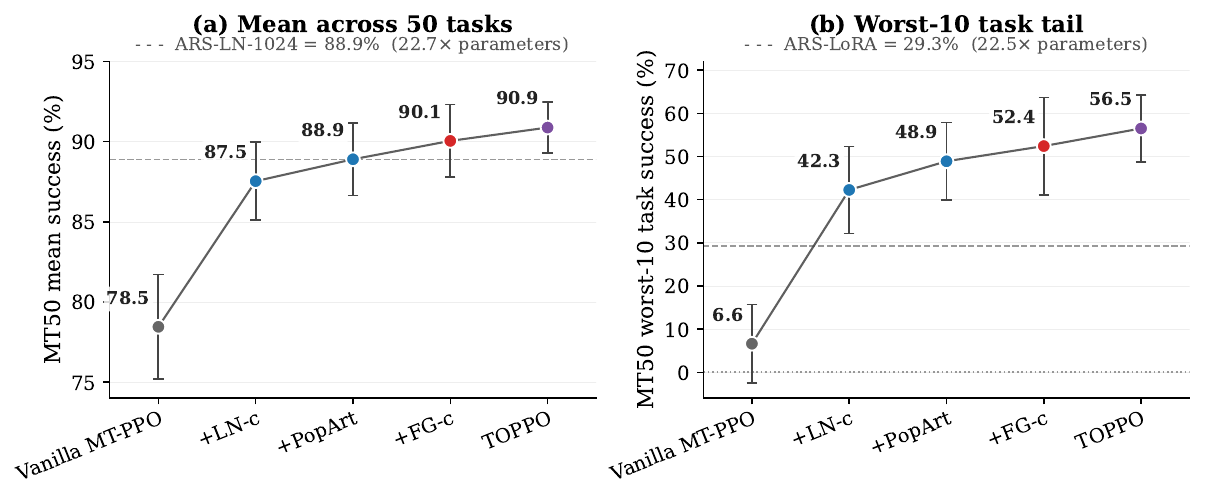}
\vspace{-0.5cm}
\caption{MT50 ablation ladder. (a) Final mean. (b) Final worst-10 tail tasks. $n{=}10$ seeds, $\pm$~std; dashed lines mark the strongest published baseline for each metric.}
\label{fig:ladder}
\end{wrapfigure}

\textbf{Benchmark.} We train and evaluate all models on Meta-World V2~\citep{yu2020metaworld,metaworldplus2025}, MT10 ($K{=}10$) and MT50 ($K{=}50$) over all tasks with a task identifier. We use V2 reward throughout the paper unless specifically labeled V1.

\textbf{Architecture.} TOPPO, our PPO-based method, adopts a fully task-shared $[400{\times}3]$ ReLU MLP, the standard Meta-World multi-task backbone adopted by most prior works, including the SAC-family in Meta-World+~\citep{metaworldplus2025} and the structured-RL methods CAGrad / CARE / PaCo / MOORE / FAMO / FairGrad, with separate actor and critic of the same size. The only per-task structure is the input one-hot and the action log standard deviation $\sigma_i$. 
ARS~\citep{cho2025ars} uses architectures with deeper $[400{\times}4]$ MLPs by default, and wider $[1024{\times}4]$ MLPs for ARS-LN-1024 (a scaled version). Our TOPPO has $685$K (MT10) / $717$K (MT50) parameters, which are $22.5$--$22.7\times$ smaller than the strongest ARS variants (See \cref{tab:params} and \cref{app:setup-details}).

\textbf{Training.} Our models are trained with $20$M (MT10) / $100$M (MT50) total environment steps ($2$M per task), matching the ARS / MOORE / PaCo / Meta-World+ convention. Our TOPPO \emph{headline} configuration is PCGrad-a $+$ FG-c ($\alpha{=}1$) $+$ PopArt $+$ LN-c; the \emph{vanilla MT-PPO} baseline removes all four interventions while retaining standard MT-PPO machinery (per-task advantage normalization, stratified minibatches, value clip, post-aggregation $\|\cdot\| \leq 1$ clip), so the contrast isolates the four interventions. Full hyperparameters are in \cref{tab:hyperparameters}; PPO-specific adaptations of SAC-based gradient-surgery methods are in \cref{app:ppo-adaptations}.

\textbf{Baselines and reporting.} For comparability, each comparison keeps the reward version and reporting statistic fixed. For V2 reward, baseline rows are from Tables~1 and~2 of~\citet{cho2025ars}. We follow the same final-checkpoint mean $\pm$ standard deviation (std) protocol, using $10$ seeds for our rows versus $8$ in the published baselines. For the V1 head-to-head with Meta-World+~\citep{metaworldplus2025}, both our rows and the baseline rows follow the Meta-World+ $10$-seed final-checkpoint Interquartile Mean (IQM)\,$\pm$ std (\cref{tab:v1-mwp}). Baseline provenance, step budgets, protocol differences, 4-metric disclosure, and per-task success rates are audited in \cref{app:param-audit,tab:step-convention,tab:protocol,tab:4metric,app:per-task}.

\textbf{Code and reproducibility.} Code Repository: To be released.

\subsection{Main Results}\label{sec:exp:headline}

\paragraph{MT50 mean and Worst-$k$ tail tasks.} On MT50, TOPPO beats the strongest published baseline on both ends of the metric: $1.98\%\uparrow$ on the mean vs.\ ARS-LN-1024 at $22.7\times$ fewer parameters (\cref{tab:headline}), and $27.2\%\uparrow$ on the worst-10 tail tasks vs.\ ARS-LoRA at $22.5\times$ fewer parameters (\cref{tab:worst-k}). All published non-ARS multi-task SAC methods score $0.0\%$ on the worst-10 tail tasks.

\paragraph{MT10: every seed clears 99\%, the final-vs-best gap is oscillation.} The MT10 \emph{final}-checkpoint mean success rate in \cref{tab:headline} sits a few percent below ARS-LoRA, but every seed in fact reaches $\geq 99\%$ at its own best checkpoint (see per-seed best columns in \cref{tab:4metric}). This gap reflects PPO's well-known late-training instability, for which linear learning-rate annealing is a documented code-level practice~\citep{engstrom2020implementation,andrychowicz2021whatmatters}. To obtain an \emph{honest comparison}, we \emph{deliberately} do \emph{not} apply such a learning rate schedule, to keep training-time hyperparameters \emph{identical} across all our configurations and align with the published-baseline protocol that reports a single fixed-schedule final-checkpoint number. We report \emph{final} as the headline statistic as it is the most conservative and matches the protocol used in most published baselines (see \cref{tab:protocol}). For more detailed results, see per-seed argmax-step distribution and the full $4$-metric breakdown in \cref{tab:bestckpt-steps,tab:4metric}.

\begin{table}[t]
\centering
\caption{Meta-World V2 headline success rate and Worst-$k$ tail tasks. Values are final-checkpoint success rates (\%); gray entries are std where available. ``--'' denotes source-unpublished entries.}
\label{tab:combined-headline-tail}%
\label{tab:headline}%
\label{tab:worst-k}
\scriptsize
\setlength{\tabcolsep}{3.5pt}
\resizebox{\textwidth}{!}{\begin{tabular}{l r r@{\,}l r r@{\,}l r@{\,}l r@{\,}l r@{\,}l r@{\,}l r@{\,}l}
\toprule
 & \multicolumn{3}{c}{MT10} & \multicolumn{13}{c}{MT50} \\
\cmidrule(lr){2-4} \cmidrule(lr){5-17}
 & & & & & \multicolumn{12}{c}{Worst-$k$ tail-tasks success rate (\%)} \\
\cmidrule(lr){6-17}
Algorithm & Params & \multicolumn{2}{c}{Success rate \%} & Params & \multicolumn{2}{c}{$k{=}5$} & \multicolumn{2}{c}{$k{=}10$} & \multicolumn{2}{c}{$k{=}20$} & \multicolumn{2}{c}{$k{=}30$} & \multicolumn{2}{c}{$k{=}40$} & \multicolumn{2}{c}{$k{=}50$ (All)} \\
\midrule
SAC-MT & 2517K & $69.4$ & {\fontsize{5pt}{6pt}\selectfont\textcolor{gray}{$\pm\phantom{0}0.8$}} & 2597K & $0.0$ & {\fontsize{5pt}{6pt}\selectfont\textcolor{gray}{$\pm\phantom{0}0.0$}} & $0.0$ & {\fontsize{5pt}{6pt}\selectfont\textcolor{gray}{$\pm\phantom{0}0.0$}} & $0.0$ & {\fontsize{5pt}{6pt}\selectfont\textcolor{gray}{$\pm\phantom{0}0.0$}} & $14.1$ & {\fontsize{5pt}{6pt}\selectfont\textcolor{gray}{$\pm\phantom{0}3.1$}} & $34.5$ & {\fontsize{5pt}{6pt}\selectfont\textcolor{gray}{$\pm\phantom{0}2.5$}} & $47.6$ & {\fontsize{5pt}{6pt}\selectfont\textcolor{gray}{$\pm\phantom{0}2.0$}} \\
MT-MH-SAC & 2546K & $77.2$ & {\fontsize{5pt}{6pt}\selectfont\textcolor{gray}{$\pm11.9$}} & 2754K & $0.0$ & {\fontsize{5pt}{6pt}\selectfont\textcolor{gray}{$\pm\phantom{0}0.0$}} & $0.0$ & {\fontsize{5pt}{6pt}\selectfont\textcolor{gray}{$\pm\phantom{0}0.0$}} & $0.0$ & {\fontsize{5pt}{6pt}\selectfont\textcolor{gray}{$\pm\phantom{0}0.0$}} & $14.1$ & {\fontsize{5pt}{6pt}\selectfont\textcolor{gray}{$\pm\phantom{0}1.7$}} & $34.0$ & {\fontsize{5pt}{6pt}\selectfont\textcolor{gray}{$\pm\phantom{0}2.0$}} & $47.2$ & {\fontsize{5pt}{6pt}\selectfont\textcolor{gray}{$\pm\phantom{0}1.7$}} \\
Soft Modularization & 3454K & $74.4$ & {\fontsize{5pt}{6pt}\selectfont\textcolor{gray}{$\pm10.5$}} & 3534K & $0.0$ & {\fontsize{5pt}{6pt}\selectfont\textcolor{gray}{$\pm\phantom{0}0.0$}} & $0.0$ & {\fontsize{5pt}{6pt}\selectfont\textcolor{gray}{$\pm\phantom{0}0.0$}} & $1.8$ & {\fontsize{5pt}{6pt}\selectfont\textcolor{gray}{$\pm\phantom{0}3.7$}} & $23.7$ & {\fontsize{5pt}{6pt}\selectfont\textcolor{gray}{$\pm12.3$}} & $42.6$ & {\fontsize{5pt}{6pt}\selectfont\textcolor{gray}{$\pm\phantom{0}9.5$}} & $54.1$ & {\fontsize{5pt}{6pt}\selectfont\textcolor{gray}{$\pm\phantom{0}7.6$}} \\
PCGrad-SAC & 2517K & $74.8$ & {\fontsize{5pt}{6pt}\selectfont\textcolor{gray}{$\pm13.7$}} & 2597K & $0.0$ & {\fontsize{5pt}{6pt}\selectfont\textcolor{gray}{$\pm\phantom{0}0.0$}} & $0.0$ & {\fontsize{5pt}{6pt}\selectfont\textcolor{gray}{$\pm\phantom{0}0.0$}} & $0.0$ & {\fontsize{5pt}{6pt}\selectfont\textcolor{gray}{$\pm\phantom{0}0.0$}} & $21.0$ & {\fontsize{5pt}{6pt}\selectfont\textcolor{gray}{$\pm12.9$}} & $39.9$ & {\fontsize{5pt}{6pt}\selectfont\textcolor{gray}{$\pm10.7$}} & $51.9$ & {\fontsize{5pt}{6pt}\selectfont\textcolor{gray}{$\pm\phantom{0}8.5$}} \\
PaCo & -- & $82.4$ & {\fontsize{5pt}{6pt}\selectfont\textcolor{gray}{$\pm14.2$}} & -- & $0.0$ & {\fontsize{5pt}{6pt}\selectfont\textcolor{gray}{$\pm\phantom{0}0.0$}} & $0.0$ & {\fontsize{5pt}{6pt}\selectfont\textcolor{gray}{$\pm\phantom{0}0.0$}} & $4.6$ & {\fontsize{5pt}{6pt}\selectfont\textcolor{gray}{$\pm\phantom{0}8.2$}} & $26.1$ & {\fontsize{5pt}{6pt}\selectfont\textcolor{gray}{$\pm15.0$}} & $44.6$ & {\fontsize{5pt}{6pt}\selectfont\textcolor{gray}{$\pm11.2$}} & $55.6$ & {\fontsize{5pt}{6pt}\selectfont\textcolor{gray}{$\pm\phantom{0}9.1$}} \\
MOORE & 6890K & $86.0$ & {\fontsize{5pt}{6pt}\selectfont\textcolor{gray}{$\pm\phantom{0}4.8$}} & 7403K & $0.0$ & {\fontsize{5pt}{6pt}\selectfont\textcolor{gray}{$\pm\phantom{0}0.0$}} & $0.0$ & {\fontsize{5pt}{6pt}\selectfont\textcolor{gray}{$\pm\phantom{0}0.0$}} & $15.9$ & {\fontsize{5pt}{6pt}\selectfont\textcolor{gray}{$\pm\phantom{0}7.0$}} & $41.2$ & {\fontsize{5pt}{6pt}\selectfont\textcolor{gray}{$\pm\phantom{0}5.0$}} & $55.3$ & {\fontsize{5pt}{6pt}\selectfont\textcolor{gray}{$\pm\phantom{0}3.9$}} & $64.2$ & {\fontsize{5pt}{6pt}\selectfont\textcolor{gray}{$\pm\phantom{0}3.1$}} \\
SMT & -- & $86.8$ & {\fontsize{5pt}{6pt}\selectfont\textcolor{gray}{$\pm\phantom{0}8.6$}} & -- & $0.0$ & {\fontsize{5pt}{6pt}\selectfont\textcolor{gray}{$\pm\phantom{0}0.0$}} & $0.0$ & {\fontsize{5pt}{6pt}\selectfont\textcolor{gray}{$\pm\phantom{0}0.0$}} & $8.0$ & {\fontsize{5pt}{6pt}\selectfont\textcolor{gray}{$\pm\phantom{0}8.9$}} & $26.8$ & {\fontsize{5pt}{6pt}\selectfont\textcolor{gray}{$\pm13.1$}} & $45.0$ & {\fontsize{5pt}{6pt}\selectfont\textcolor{gray}{$\pm\phantom{0}9.9$}} & $56.0$ & {\fontsize{5pt}{6pt}\selectfont\textcolor{gray}{$\pm\phantom{0}8.0$}} \\
ARS & 2517K & $97.3$ & {\fontsize{5pt}{6pt}\selectfont\textcolor{gray}{$\pm\phantom{0}2.1$}} & 2597K & \multicolumn{2}{c}{--} & $9.1$ & {\fontsize{5pt}{6pt}\selectfont\textcolor{gray}{$\pm\phantom{0}6.6$}} & $26.3$ & {\fontsize{5pt}{6pt}\selectfont\textcolor{gray}{$\pm\phantom{0}6.1$}} & $45.3$ & {\fontsize{5pt}{6pt}\selectfont\textcolor{gray}{$\pm\phantom{0}4.7$}} & $57.5$ & {\fontsize{5pt}{6pt}\selectfont\textcolor{gray}{$\pm\phantom{0}3.6$}} & $65.9$ & {\fontsize{5pt}{6pt}\selectfont\textcolor{gray}{$\pm\phantom{0}2.9$}} \\
ARS-LN (400) & 2530K & $98.6$ & {\fontsize{5pt}{6pt}\selectfont\textcolor{gray}{$\pm\phantom{0}1.5$}} & 2611K & \multicolumn{2}{c}{--} & $21.0$ & {\fontsize{5pt}{6pt}\selectfont\textcolor{gray}{$\pm11.4$}} & $49.1$ & {\fontsize{5pt}{6pt}\selectfont\textcolor{gray}{$\pm\phantom{0}6.5$}} & $64.3$ & {\fontsize{5pt}{6pt}\selectfont\textcolor{gray}{$\pm\phantom{0}6.2$}} & $72.9$ & {\fontsize{5pt}{6pt}\selectfont\textcolor{gray}{$\pm\phantom{0}4.7$}} & $78.3$ & {\fontsize{5pt}{6pt}\selectfont\textcolor{gray}{$\pm\phantom{0}3.7$}} \\
ARS-LN (1024) & -- & \multicolumn{2}{c}{--} & 16267K & \multicolumn{2}{c}{--} & \multicolumn{2}{c}{--} & \multicolumn{2}{c}{--} & \multicolumn{2}{c}{--} & \multicolumn{2}{c}{--} & $88.9$ & {\fontsize{5pt}{6pt}\selectfont\textcolor{gray}{$\pm\phantom{0}2.6$}} \\
ARS-LoRA & 3878K & $\mathbf{99.5}$ & {\fontsize{5pt}{6pt}\selectfont\textcolor{gray}{$\pm\phantom{0}0.8$}} & 16123K & \multicolumn{2}{c}{--} & $29.3$ & {\fontsize{5pt}{6pt}\selectfont\textcolor{gray}{$\pm\phantom{0}7.7$}} & $58.4$ & {\fontsize{5pt}{6pt}\selectfont\textcolor{gray}{$\pm\phantom{0}4.5$}} & $72.0$ & {\fontsize{5pt}{6pt}\selectfont\textcolor{gray}{$\pm\phantom{0}3.0$}} & $79.0$ & {\fontsize{5pt}{6pt}\selectfont\textcolor{gray}{$\pm\phantom{0}2.2$}} & $83.2$ & {\fontsize{5pt}{6pt}\selectfont\textcolor{gray}{$\pm\phantom{0}1.8$}} \\
\midrule
Ours: Vanilla MT-PPO & 684K & $85.2$ & {\fontsize{5pt}{6pt}\selectfont\textcolor{gray}{$\pm\phantom{0}6.5$}} & 716K & $0.0$ & {\fontsize{5pt}{6pt}\selectfont\textcolor{gray}{$\pm\phantom{0}0.0$}} & $6.6$ & {\fontsize{5pt}{6pt}\selectfont\textcolor{gray}{$\pm\phantom{0}9.1$}} & $46.3$ & {\fontsize{5pt}{6pt}\selectfont\textcolor{gray}{$\pm\phantom{0}8.0$}} & $64.1$ & {\fontsize{5pt}{6pt}\selectfont\textcolor{gray}{$\pm\phantom{0}5.4$}} & $73.1$ & {\fontsize{5pt}{6pt}\selectfont\textcolor{gray}{$\pm\phantom{0}4.1$}} & $78.5$ & {\fontsize{5pt}{6pt}\selectfont\textcolor{gray}{$\pm\phantom{0}3.3$}} \\
\quad +LN-c & 685K & $96.2$ & {\fontsize{5pt}{6pt}\selectfont\textcolor{gray}{$\pm\phantom{0}3.4$}} & 717K & $6.9$ & {\fontsize{5pt}{6pt}\selectfont\textcolor{gray}{$\pm11.4$}} & $42.3$ & {\fontsize{5pt}{6pt}\selectfont\textcolor{gray}{$\pm10.1$}} & $68.9$ & {\fontsize{5pt}{6pt}\selectfont\textcolor{gray}{$\pm\phantom{0}5.9$}} & $79.2$ & {\fontsize{5pt}{6pt}\selectfont\textcolor{gray}{$\pm\phantom{0}4.0$}} & $84.4$ & {\fontsize{5pt}{6pt}\selectfont\textcolor{gray}{$\pm\phantom{0}3.0$}} & $87.5$ & {\fontsize{5pt}{6pt}\selectfont\textcolor{gray}{$\pm\phantom{0}2.4$}} \\
\quad +LN-c +PopArt & 685K & $96.8$ & {\fontsize{5pt}{6pt}\selectfont\textcolor{gray}{$\pm\phantom{0}1.5$}} & 717K & $18.0$ & {\fontsize{5pt}{6pt}\selectfont\textcolor{gray}{$\pm12.5$}} & $48.9$ & {\fontsize{5pt}{6pt}\selectfont\textcolor{gray}{$\pm\phantom{0}9.0$}} & $72.3$ & {\fontsize{5pt}{6pt}\selectfont\textcolor{gray}{$\pm\phantom{0}5.6$}} & $81.5$ & {\fontsize{5pt}{6pt}\selectfont\textcolor{gray}{$\pm\phantom{0}3.8$}} & $86.1$ & {\fontsize{5pt}{6pt}\selectfont\textcolor{gray}{$\pm\phantom{0}2.8$}} & $88.9$ & {\fontsize{5pt}{6pt}\selectfont\textcolor{gray}{$\pm\phantom{0}2.3$}} \\
\quad +LN-c +PopArt +FG-c & 685K & $94.8$ & {\fontsize{5pt}{6pt}\selectfont\textcolor{gray}{$\pm\phantom{0}5.1$}} & 717K & $22.3$ & {\fontsize{5pt}{6pt}\selectfont\textcolor{gray}{$\pm17.3$}} & $52.4$ & {\fontsize{5pt}{6pt}\selectfont\textcolor{gray}{$\pm11.3$}} & $75.1$ & {\fontsize{5pt}{6pt}\selectfont\textcolor{gray}{$\pm\phantom{0}5.7$}} & $83.4$ & {\fontsize{5pt}{6pt}\selectfont\textcolor{gray}{$\pm\phantom{0}3.8$}} & $87.6$ & {\fontsize{5pt}{6pt}\selectfont\textcolor{gray}{$\pm\phantom{0}2.8$}} & $90.1$ & {\fontsize{5pt}{6pt}\selectfont\textcolor{gray}{$\pm\phantom{0}2.3$}} \\
\textbf{TOPPO} & 685K & $97.2$ & {\fontsize{5pt}{6pt}\selectfont\textcolor{gray}{$\pm\phantom{0}3.1$}} & 717K & $\mathbf{24.2}$ & {\fontsize{5pt}{6pt}\selectfont\textcolor{gray}{$\pm14.1$}} & $\mathbf{56.5}$ & {\fontsize{5pt}{6pt}\selectfont\textcolor{gray}{$\pm\phantom{0}7.8$}} & $\mathbf{77.2}$ & {\fontsize{5pt}{6pt}\selectfont\textcolor{gray}{$\pm\phantom{0}4.0$}} & $\mathbf{84.8}$ & {\fontsize{5pt}{6pt}\selectfont\textcolor{gray}{$\pm\phantom{0}2.6$}} & $\mathbf{88.6}$ & {\fontsize{5pt}{6pt}\selectfont\textcolor{gray}{$\pm\phantom{0}2.0$}} & $\mathbf{90.9}$ & {\fontsize{5pt}{6pt}\selectfont\textcolor{gray}{$\pm\phantom{0}1.6$}} \\
\bottomrule
\end{tabular}}
\end{table}

\paragraph{Cumulative ladder.} \cref{fig:ladder} traces how the four intervention effects accumulate on MT50. Vanilla MT-PPO with LN-c and PopArt already reaches mean parity with ARS-LN-1024 and exceeds ARS-LoRA on the worst-10 tail tasks. Modules FG-c ($\alpha{=}1$) and PCGrad-a then add mechanism-aligned residual gains. Adding LN-c is the largest single jump on both metrics. Note that this effect ladder is just a path-wise description, not a unique additive decomposition, as the 3 critic interventions interact (\cref{cor:fg-popart,sec:exp:ablations}).

\paragraph{Convergence speed and parameter efficiency.}
On MT50, \cref{fig:train-curve}a shows TOPPO crossing every published baseline's $100$M-step final mean within $\sim\!40\%$ of its own training budget, the slowest crossing (ARS-LN-$1024$) at $\sim\!40$M steps; ARS-LoRA at $\sim\!23$M, ARS-LN-$400$ at $\sim\!18$M, MOORE at $\sim\!11$M. With $22.7\times$ fewer parameters than ARS-LN-$1024$, TOPPO sits on the parameter and sample efficiency frontier among the compared V2 methods despite being on-policy.

\subsection{Ablation and analysis}\label{sec:exp:ablations}

We isolate each component's contribution via three ablations. Full details are in \cref{app:ablations}.

\textbf{Critic-side intervention cube} (LN-c $\times$ PopArt $\times$ FG-c, $8$ cells $\times$ $10$ seeds, MT50 V2). Each component is mechanism-distinct and they interact. The interaction signature suggested by \cref{cor:fg-popart} appears at training-outcome scale: \emph{PopArt's worst-10 marginal collapses from $6.64\%\uparrow$ (over LN-c) to $0.22\%\downarrow$ (over LN-c+FG-c)}. This is consistent with FairGrad ($\alpha{=}1$) absorbing PopArt's per-task gradient rescaling at the gradient aggregator (\cref{tab:intervention-isolation}).

\textbf{Per-side combiner asymmetry.} Actor-side scale spread is small after per-task advantage normalization, leaving direction conflict as the main residual issue (\cref{fig:f3b}). This motivates our PCGrad-a/FG-c asymmetric design: PCGrad-a stabilizes worst-10 tail-tasks seed std from $11.29\%$ to $7.76\%$ (\cref{tab:worst-k}); FG-a on top of FG-c underperforms default-actor FG-c ($89.2\%$ vs.\ $90.0\%$, \cref{tab:per-side}). FG-c vs.\ PCGrad-c is a tie within seed noise. We adopt FG-c on the critic for the $\sqrt{K}$-energy identity and scale-invariance of \cref{thm:scale-invariance} (see all per-side combiners in \cref{app:cone-containment}).

\textbf{V1 reward sensitivity.} PopArt's non-rescaling channels (running statistics, Adam moments, downstream actor) are predicted to become individually load-bearing under per-task heterogeneous reward scales. In the MT50 V1 sweep (\cref{app:v1-sensitivity,tab:v1-mwp}), the PopArt marginal over LN-c+FG-c is $17.01\%\uparrow$ on IQM (vs.\ $0.32\%\uparrow$ on V2 IQM, \cref{tab:v1-sensitivity}) and over LN-c (no FG-c) is $12.26\%\uparrow$ (vs.\ $1.35\%\uparrow$ on V2 IQM). The V2 collapse signature predicted by \cref{cor:fg-popart} is therefore inverted on V1: only the per-task gradient rescaling channel is absorbed by FG-c, while the others are not. Equivalently, the V1 ladder has a different shape: PopArt rather than LN-c is the largest single step, and FG-c without PopArt slightly hurts ($2.29\%\downarrow$ over LN-c on V1 IQM, vs.\ $1.68\%\uparrow$ on V2 IQM).

\textbf{V1 head-to-head with external baselines.} Beyond mechanism, the same V1 sweep set against the Meta-World+ MT50 SAC-family baselines on V1 (\cref{tab:v1-mwp}, both panels reported under the Meta-World+ IQM\,$\pm$\,std convention) shows TOPPO IQM at $79.3\%$, $17.6\%\uparrow$ vs.\ the strongest published V1 baseline (MOORE, $61.8\%$) at $14.5\times$ fewer trainable parameters ($717$K vs.\ $10{,}385$K); SAC-PCGrad on V1 lands $33.5\%\downarrow$ vs.\ TOPPO. The fully stacked TOPPO row remains the strongest configuration, but the V1 ladder differs from V2 in the way described above, with PopArt becoming more important under heterogeneous rewards.

\begin{table}[t]
\centering
\caption{Meta-World V1 (MT50): TOPPO vs.\ Meta-World+ baselines. Values are final-checkpoint IQM\,$\pm$\,std (10 seeds). Top: our V1 ablation; bottom: published V1 baselines.}
\label{tab:v1-mwp}
\footnotesize
\setlength{\tabcolsep}{3pt}
\begin{tabular}{@{}l@{\hspace{6pt}}c@{\hspace{6pt}}r@{\,}l@{\hspace{6pt}}c@{\hspace{14pt}}l@{}}
\toprule
Method & Params & \multicolumn{2}{c}{Success rate \%} & TOPPO step to match & Backbone \\
\midrule
Ours: Vanilla MT-PPO & 716K & $45.2$ & {\fontsize{5pt}{6pt}\selectfont\textcolor{gray}{$\pm\phantom{0}2.4$}} & -- & $[400]{\times}3$ \\
\quad +LN-c & 717K & $64.3$ & {\fontsize{5pt}{6pt}\selectfont\textcolor{gray}{$\pm\phantom{0}3.1$}} & -- & $[400]{\times}3$ \\
\quad +LN-c +FG-c & 717K & $62.0$ & {\fontsize{5pt}{6pt}\selectfont\textcolor{gray}{$\pm\phantom{0}2.7$}} & -- & $[400]{\times}3$ \\
\quad +LN-c +PopArt & 717K & $76.5$ & {\fontsize{5pt}{6pt}\selectfont\textcolor{gray}{$\pm\phantom{0}2.0$}} & -- & $[400]{\times}3$ \\
\quad +LN-c +PopArt +FG-c & 717K & $79.0$ & {\fontsize{5pt}{6pt}\selectfont\textcolor{gray}{$\pm\phantom{0}1.2$}} & -- & $[400]{\times}3$ \\
\textbf{TOPPO} & 717K & $\mathbf{79.3}$ & {\fontsize{5pt}{6pt}\selectfont\textcolor{gray}{$\pm\phantom{0}1.3$}} & -- & $[400]{\times}3$ \\
\midrule
\multicolumn{6}{l}{\emph{Meta-World+ MT50 SAC-family V1 baselines~\citep{metaworldplus2025}}} \\
\midrule
MT-MH-SAC & 2031K & $31.9$ & {\fontsize{5pt}{6pt}\selectfont\textcolor{gray}{$\pm\phantom{0}2.7$}} & $\sim\!3.4$M (3\%) & $[400]{\times}3$ shared, per-task heads \\
Soft Modularization & 8030K & $60.6$ & {\fontsize{5pt}{6pt}\selectfont\textcolor{gray}{$\pm\phantom{0}3.9$}} & $\sim\!12$M (12\%) & routing $[256]{\times}4{\times}4$ \\
MOORE & 10385K & $61.8$ & {\fontsize{5pt}{6pt}\selectfont\textcolor{gray}{$\pm\phantom{0}2.7$}} & $\sim\!13$M (13\%) & $[400]{\times}3$, $6$ experts \\
PaCo & 33909K & $18.6$ & {\fontsize{5pt}{6pt}\selectfont\textcolor{gray}{$\pm15.4$}} & $\sim\!2.1$M (2\%) & $[400]{\times}3$, $K{=}20$ \\
PCGrad-SAC & 2031K & $45.8$ & {\fontsize{5pt}{6pt}\selectfont\textcolor{gray}{$\pm\phantom{0}5.9$}} & $\sim\!5.5$M (5\%) & $[400]{\times}3$ shared, per-task heads \\
\bottomrule
\end{tabular}
\end{table}

\paragraph{Limitations.}
We do not claim empirical dominance of FG-c over all critic-side combiners: FG-c and PCGrad-c are tied within seed noise under LN-c+PopArt, and we adopt FG-c for the mechanism property of \cref{thm:scale-invariance}. We also did not exhaustively test actor-side aggregator choices beyond the per-side cube in \cref{tab:per-side}. Therefore, PCGrad-a should be read as a mechanism-aligned working choice rather than an optimal actor combiner.
Our reported architecture also uses separate actor and critic networks; if an actor--critic model uses a shared feature extractor before branching into policy and value heads, assigning critic-side balancing and actor-side surgery on the shared parameters becomes a coupled design choice that requires separate study.
Our PopArt uses Welford full-history sample statistics rather than the canonical EMA (\cref{app:popart-welford}). Under non-stationary RL returns,  the canonical EMA may be better suited.
Empirically, the two estimators are numerically closest during the early-training phase that decides the self-reinforcing tail-trap (\cref{app:popart-welford}), suggesting that the estimator choice is less likely to determine the headline outcome.

\section{Conclusion}\label{sec:conclusion}

We rethink PPO for MTRL under a vanilla shared MLP. Our results show that earlier conclusions drawn from MT-PPO's $\sim\!30\%$ MT10 success rate in the original Meta-World benchmark~\citep{yu2020metaworld} underestimated PPO by confounding the choice of optimizer family with the condition of the critic.
We identify critic-side gradient ill-conditioning as the main failure mode: large per-task scale spread, representation-level co-linear collapse, and unfair aggregation of imbalanced gradients.
TOPPO targets these failures with PopArt, critic LayerNorm, and critic-side FairGrad ($\alpha{=}1$), linked by the scale-invariance result of \cref{thm:scale-invariance,cor:fg-popart}.
With only $717$K parameters, TOPPO reaches $90.88 \pm 1.59\%$ MT50 final mean and $56.50 \pm 7.76\%$ on the worst-10 tail tasks, while matching ARS-LN-1024's full-budget mean at $\sim\!40\%$ of the training budget.
MT-PPO's tail-task recovery therefore comes from critic-side gradient intervention rather than architectural complexity.

\bibliographystyle{plainnat}
\bibliography{references}

\newpage
\appendix

\let\origappendixsubsection\subsection
\renewcommand{\subsection}{\FloatBarrier\origappendixsubsection}

\counterwithin{figure}{section}
\counterwithin{table}{section}
\counterwithin{equation}{section}
\counterwithin{algorithm}{section}
\counterwithin{theorem}{section}
\counterwithin{proposition}{section}
\counterwithin{corollary}{section}
\counterwithin{remark}{section}
\counterwithin{lemma}{section}
\counterwithin{definition}{section}
\renewcommand{\thefigure}{\thesection.\arabic{figure}}
\renewcommand{\thetable}{\thesection.\arabic{table}}
\renewcommand{\theequation}{\thesection.\arabic{equation}}
\renewcommand{\thealgorithm}{\thesection.\arabic{algorithm}}
\renewcommand{\thetheorem}{\thesection.\arabic{theorem}}
\renewcommand{\theproposition}{\thesection.\arabic{proposition}}
\renewcommand{\thecorollary}{\thesection.\arabic{corollary}}
\renewcommand{\theremark}{\thesection.\arabic{remark}}
\renewcommand{\thelemma}{\thesection.\arabic{lemma}}
\renewcommand{\thedefinition}{\thesection.\arabic{definition}}

\etocdepthtag.toc{appendix}
\begingroup
\etocsettagdepth{maintext}{none}
\etocsettagdepth{appendix}{subsection}
\etocsetnexttocdepth{subsection}
\etocsettocstyle
  {\section*{Appendix Contents}\noindent\rule{\linewidth}{0.4pt}\par\smallskip}
  {\par\smallskip\noindent\rule{\linewidth}{0.4pt}\par\bigskip}
\tableofcontents
\endgroup

\section{Extended Related Work}\label{app:related-extended}

\paragraph{Structure-based multi-task RL.}
Architectural interventions form one line of multi-task RL: Soft Modularization~\citep{yang2020softmodules}, CARE~\citep{sodhani2021care}, PaCo~\citep{sun2022paco}, and MOORE~\citep{hendawy2023moore}. The original Meta-World paper~\citep{yu2020metaworld} reported MT-PPO at $\sim 30\%$ on MT10, anchoring a community shift to SAC backbones for subsequent multi-task RL methods (CARE, PaCo, MOORE, PCGrad-SAC, ARS~\citep{cho2025ars}); benchmarks have largely standardized on the SAC stack~\citep{metaworldplus2025}. We hold the architecture fixed at a vanilla shared MLP and revisit the SAC--PPO comparison at this constant architecture. Concurrent work~\citep{joshi2025mtbench} observes via cross-task gradient cosine that conflicts dominate on the critic in MT-PPO and bypasses the critic via Monte-Carlo returns, whereas we diagnose a complementary scale-imbalance axis ($497\times$ vs.\ $4.1\times$ per-task norm spread) and fix the critic at the three layers above.

\paragraph{Imbalance interventions across pipeline layers.}
\emph{Per-task gradient aggregation:} scalar-signal reweighting (loss-level: uncertainty weighting~\citep{kendall2018uncertainty}, DWA~\citep{liu2019dwa}, FAMO~\citep{liu2023famo}; per-task gradient-norm: GradNorm~\citep{chen2018gradnorm}) is Gram-blind; direction surgery uses Gram inner products (PCGrad~\citep{yu2020pcgrad}, CAGrad~\citep{liu2021cagrad}); $\alpha$-fairness reweighting solves $\bG\bw = \bw^{-1/\alpha}$ on the full Gram (FairGrad~\citep{ban2024fairgrad}; at $\alpha{=}1$ equivalent to Nash bargaining~\citep{navon2022nashmtl}, \cref{rem:fg-nash}). \citet{kurin2022unitary} argue that across supervised MTL benchmarks and Meta-World multi-task SAC, the simple uniform sum of losses (unitary scalarization) matches specialized multi-task optimizers once standard regularization is matched; our diagnostic evidence (\cref{fig:f3b}: critic per-task gradient-norm spread $497\times$) is on MT-PPO specifically, a regime they do not test, where loss-level signals provably cannot recover the per-task gradient rescaling FairGrad ($\alpha{=}1$) absorbs (\cref{thm:scale-invariance,cor:fg-popart}). The Kurin SAC result is consistent with our PopArt-removed ablations on V1 (which has wider per-task return scale heterogeneity, \cref{tab:v1-sensitivity}) showing that direction-only loss-level signals fall short once the critic-side per-task scale spread is load-bearing. \cref{app:cone-containment} positions these methods as different points in the same shared cone $\mathrm{cone}^+\{\bg_i\}$; only FairGrad ($\alpha{=}1$) carries the scale-invariance of \cref{thm:scale-invariance} that links it to PopArt via \cref{cor:fg-popart}. \emph{Value-target normalization} (PopArt~\citep{vanhasselt2016popart,hessel2019popartmultitask}; ARS rescales the per-task reward fed into the Bellman target~\citep{cho2025ars}) and \emph{representation regularization} (LayerNorm~\citep{ba2016layernorm}, applied in deep RL by~\citep{lee2024simba,hansen2024tdmpc2}) are standard; \cref{cor:fg-popart} restricts the new PopArt$\to$FG-c content to PopArt's per-task gradient-rescaling effect $c_i = \sigma_i^{-2}$.

\FloatBarrier
\section{Implementation Details}\label{app:details}
\subsection{Detailed experimental setup}\label{app:setup-details}

This appendix expands the main-text \cref{sec:exp:setup} with the full benchmark, architecture, training, and evaluation description; hyperparameter values are tabulated in \cref{tab:hyperparameters} (\cref{app:hyperparameters}), PPO-specific adaptations of SAC-based gradient-surgery methods are documented in \cref{app:ppo-adaptations}, the source-counted parameter audit against published baselines is in \cref{app:param-audit}, and cross-paper protocol differences are mapped in \cref{app:reporting-protocol}.

\paragraph{Benchmark and observations.}
We use Meta-World~3.0.0~\citep{yu2020metaworld,metaworldplus2025} with V2 reward semantics throughout the paper unless explicitly labeled V1.
The $K$ tasks (reaching, picking, pushing, opening doors, peg-insertion, etc.) share the same continuous observation space $\mathcal{S}$ and continuous action space $\mathcal{A}$; what varies across tasks is the reward function and the dynamics.
The agent receives the state concatenated with a one-hot task identifier of length $K$ as input, and is trained on all $K$ tasks simultaneously via stratified minibatches.
Meta-World offers two reward variants: V1 is the original per-task reward set from~\citet{yu2020metaworld} with substantial per-task scale heterogeneity, while V2~\citep{metaworldplus2025} redesigns the per-task rewards with PPO solvability and comparable per-task scales as explicit design criteria.
We evaluate TOPPO on \emph{both} variants: on the load-bearing MT50 benchmark it outperforms the strongest matched-protocol published baseline under both V2 mean/worst-$k$ reporting and V1 IQM reporting, demonstrating robustness across the V1/V2 reward redesign rather than dependence on V2's scale rebalancing.
V2 carries the headline (MT50 V2 is the main number; MT10 V2 saturates near-ceiling under any reasonable critic-side intervention and is reported for completeness, \cref{sec:exp:headline}); V1 is the sensitivity sweep that also tests \cref{cor:fg-popart}'s scope under heterogeneous per-task return scales (\cref{app:v1-sensitivity}).

\paragraph{Architecture.}
Actor and critic are separate networks of identical $[400, 400, 400]$ shape (full architecture in \cref{tab:hyperparameters}); this is the standard SAC-family Meta-World+ backbone used by CAGrad~\citep{liu2021cagrad}, CARE~\citep{sodhani2021care}, PaCo~\citep{sun2022paco}, MOORE~\citep{hendawy2023moore}, FAMO~\citep{liu2023famo}, and FairGrad~\citep{ban2024fairgrad}, while ARS~\citep{cho2025ars} is the architectural outlier with a deeper $[400 \times 4]$ default or a wider $[1024 \times 4]$ ARS-LN-1024 variant.
The only per-task structure consists of the one-hot task identifier in the input and an $|\mathcal{A}|$-dimensional per-task log-stddev $\sigma_i$ for the diagonal-Gaussian action distribution; there are no per-task output heads, expert mixtures, LoRA adapters, modular routers, or task-specific subnetworks.
Headline parameter counts are $685$K (MT10) and $717$K (MT50); per-component breakdowns and ratios against published baselines are tabulated in \cref{tab:params} (\cref{app:param-audit}).

\paragraph{Training pipeline.}
Stratified minibatch sampling guarantees the per-task gradients $\bg_i^a$, $\bg_i^c$ on the shared parameters are well-defined every minibatch.
Per-task advantage normalization (zero-mean unit-variance computed within each task's slice of a minibatch) is applied before the actor surrogate; this task-wise adaptation of a PPO implementation option is the actor-side mechanism that drives the actor-critic asymmetry diagnosed in \cref{sec:method:pathology}.
We retain PPO's value clip and apply a post-aggregation gradient-norm clip uniformly across all configurations.
Full numerical hyperparameter values are in \cref{tab:hyperparameters}; PPO-specific adaptations made when porting SAC-based gradient-surgery methods (PCGrad, CAGrad, FairGrad) to PPO are documented in \cref{app:ppo-adaptations}.

\paragraph{Evaluation protocol.}
All numbers reported in the paper come from \emph{evaluation} rollouts under the deterministic policy mean (no exploration noise), not training rollouts.
Following the Meta-World convention~\citep{yu2020metaworld,metaworldplus2025}, at each evaluation checkpoint the agent is rolled out once per training goal position per task: $50$ goal positions $\times$ $K$ tasks $=$ $500$ episodes per evaluation on MT10 and $2{,}500$ on MT50.
A per-task ``success rate'' is the fraction of those episodes whose success flag is $1$ at \emph{any} step during the episode (the default Meta-World success criterion inherited from the benchmark code, not a termination-only criterion); the reported scalar is the cross-task mean of per-task success rates.

\paragraph{Reporting choices.}
We run $10$ seeds and report final-checkpoint success rates.
To match each baseline's published convention, we report mean $\pm$ std against the V2 baselines from~\citet{cho2025ars} (\cref{tab:headline,tab:worst-k}) and IQM $\pm$ std against the Meta-World+ V1 baselines (\cref{tab:v1-mwp}); for our rows, both views are computed from the same $10$-seed run set.
The full $4$-metric breakdown of every reported configuration is in \cref{tab:4metric}, cross-paper convention conversions are in \cref{app:reporting-protocol}, and per-task breakdowns are in \cref{app:per-task}.

\paragraph{Compute resources.}
TOPPO is trained on a single NVIDIA RTX~3090 GPU with 16 CPU cores and 64\,GB memory per run; one MT10 seed ($20$M environment steps) takes $\sim\!6$ hours and one MT50 seed ($100$M environment steps) takes $\sim\!24$ hours of wall-clock time.
These figures cover the full $20$M\,/\,$100$M-step training budget (matching the published-baseline convention) and include $51$ evaluation checkpoints performed during training (one every $2$ epochs across the $100$ training epochs, plus one at initialization); pure training time without evaluation is shorter.
TOPPO already converges to strong performance well before the full budget is exhausted (\cref{sec:exp:headline,fig:train-curve}); the full schedule is reported to match the published reporting protocol, while shorter runs can be sufficient when strict final-checkpoint comparability is not required.

\subsection{Hyperparameters and architecture}\label{app:hyperparameters}

\paragraph{Inherited from Meta-World+.}
The PPO numerical hyperparameters in \cref{tab:hyperparameters} (loss coefficients, GAE $\lambda$, clip range, learning rate, PPO repeats per collect, minibatch count, gradient-norm clip, MLP backbone, per-rollout step counts and total environment-step budgets) are inherited verbatim from the Meta-World+ companion repository~\citep{metaworldplus2025} (\url{https://github.com/rainx0r/metaworld-algorithms}); we do not separately tune any of these values.
The one substantive deviation from their MT50 example config is that we keep value clipping on (their MT50 example disables it).
Per-task advantage normalization and stratified minibatch sampling are PPO-side multi-task adaptations on top of this baseline (\cref{app:ppo-adaptations}); FairGrad / PCGrad / CAGrad / PopArt / LayerNorm hyperparameters are listed under their respective entries below.

\begin{table}[!htbp]
\centering
\caption{Production hyperparameters for the experiments in~\cref{sec:experiments}. The three FairGrad solver entries below the per-side $\alpha$ rows ($y_{\max}$, $\eta$, max Newton iters) are fixed solver constants, not swept.}
\label{tab:hyperparameters}
\small
\resizebox{\textwidth}{!}{\begin{tabular}{lll}
\toprule
Parameter & MT10 & MT50 \\
\midrule
\multicolumn{3}{l}{\emph{Network architecture (fully task-shared except per-task $\sigma$)}} \\
\quad actor $\btheta^a$ hidden sizes & $[400, 400, 400]$ & $[400, 400, 400]$ \\
\quad actor output projection & shared $\mathrm{Linear}(400, |\mathcal{A}|)$ for $\mu$ & shared $\mathrm{Linear}(400, |\mathcal{A}|)$ for $\mu$ \\
\quad critic $\btheta^c$ hidden sizes & $[400, 400, 400]$ & $[400, 400, 400]$ \\
\quad critic output projection & shared $\mathrm{Linear}(400, 1)$ for $V$ & shared $\mathrm{Linear}(400, 1)$ for $V$ \\
\quad per-task log-stddev $\sigma_i$ & $|\mathcal{A}|$ params per task & $|\mathcal{A}|$ params per task \\
\quad task identity input & one-hot, length $K = 10$ & one-hot, length $K = 50$ \\
\quad activation & ReLU & ReLU \\
\quad LayerNorm placement (TOPPO) & critic hidden layers only & critic hidden layers only \\
\midrule
\multicolumn{3}{l}{\emph{Training}} \\
\quad epochs & 100 & 100 \\
\quad env steps per epoch & 200{,}000 & 1{,}000{,}000 \\
\quad env steps per collect & 100{,}000 & 500{,}000 \\
\quad PPO repeats per collect & 16 & 16 \\
\quad minibatch size (= step/32) & 3{,}125 & 15{,}625 \\
\quad minibatch sampling & stratified & stratified \\
\quad learning rate (Adam) & $3\!\times\!10^{-4}$ & $3\!\times\!10^{-4}$ \\
\quad max gradient norm (post-aggregation) & $1.0$ & $1.0$ \\
\quad discount $\gamma$ & $0.99$ & $0.99$ \\
\midrule
\multicolumn{3}{l}{\emph{PPO}} \\
\quad GAE $\lambda$ & $0.97$ & $0.97$ \\
\quad clip range $\epsilon$ & $0.2$ & $0.2$ \\
\quad value clip & enabled & enabled \\
\quad value loss weight $c_v$ & $0.001$ & $0.001$ \\
\quad entropy bonus $c_e$ & $0.005$ & $0.005$ \\
\quad per-task advantage normalization & enabled & enabled \\
\quad recompute advantage per repeat & enabled & enabled \\
\midrule
\multicolumn{3}{l}{\emph{FairGrad (when active)}} \\
\quad actor $\alpha_a$ & $2.0$ & $2.0$ \\
\quad critic $\alpha_c$ & $1.0$ & $1.0$ \\
\quad solver y-cap $y_{\max}$ & $50$ & $50$ \\
\quad solver Newton tol $\eta$ & $10^{-2}$ & $10^{-2}$ \\
\quad solver max Newton iters & $50$ & $50$ \\
\midrule
\multicolumn{3}{l}{\emph{Evaluation}} \\
\quad eval episodes per task & $50$ & $50$ \\
\quad seeds reported & $10$ & $10$ \\
\bottomrule
\end{tabular}}
\end{table}
\FloatBarrier

\paragraph{LayerNorm placement notes.}
\begin{itemize}
\item Pre-activation LN on hidden \texttt{Linear} layers of critic network only; actor hidden layers are LN-free.
\item SimBa/TD-MPC2-style symmetric LN~\citep{lee2024simba,hansen2024tdmpc2} (LN on both actor and critic hidden layers) is the natural baseline; TOPPO deliberately uses critic-only LN. The symmetric-LN ablation is discussed with the intervention results in \cref{app:ablations}.
\item LN-enabled critic hidden \texttt{Linear} uses \texttt{bias=False} (LN's mean subtraction makes bias mathematically redundant); critic output projection retains its bias.
\end{itemize}

\subsection{Parameter audit (source-counted)}\label{app:param-audit}

\paragraph{Convention.}
Throughout the paper, every ``parameter count'' refers to the \emph{total number of parameters held in memory during training, excluding optimizer state} --- exactly the quantity that determines the GPU footprint of a method.
For our PPO this is just $\text{actor} + \text{critic}$ (no target network).
For SAC-style baselines this is $\text{actor} + 2{\times}\text{critic} + 2{\times}\text{target\_critic} + \alpha$, where the twin Q-critics are stacked on a $\texttt{num\_critics}{=}2$ axis and the Polyak-averaged target copy is updated by EMA rather than gradient descent.
We include the target copy because it occupies the same GPU memory as the live critic; under a strictly-trainable accounting (target excluded) every SAC row in \cref{tab:params} would shrink by roughly $\text{critic}$, without changing the ordering.

\paragraph{Provenance.}
Numbers in \cref{tab:params} were not lifted from the corresponding papers (most do not report scalar parameter counts); they were obtained by instantiating each method's example configuration in code and summing the leaves of the parameter pytree.
ARS-paper baselines (SAC-MT, MT-MH-SAC, Soft Modularization, PCGrad-SAC, MOORE, ARS, ARS-LN, ARS-LoRA) come from the official ARS JAX/Flax release~\citep{cho2025ars} (\url{https://github.com/ms-cho/Adaptive-Reward-Scaling-ARS-for-MTRL}); the Meta-World+ MT-family rows (MTMHSAC, PCGrad, Soft Modules, MOORE, PaCo) come from the companion \texttt{metaworld-algorithms} JAX repository~\citep{metaworldplus2025} (\url{https://github.com/rainx0r/metaworld-algorithms}).

\paragraph{Cross-implementation caveat.}
Same-named methods can differ between the two source codebases.
For example, MOORE is $6{,}890$K (MT10) $/$ $7{,}403$K (MT50) in the ARS reproduction (4 orthogonal $400$-wide experts on a SAC-MT backbone) and $6{,}810$K (MT10) $/$ $10{,}385$K (MT50) in the Meta-World+ reproduction (4 experts at MT10, 6 at MT50, plus per-task output heads).
We never mix counts across implementations within a single comparison row: \cref{tab:headline} uses the ARS-paper figures for every V2 baseline (matching the success rates which also come from Tables~1 and~2 of~\citet{cho2025ars}), and \cref{tab:v1-sensitivity} uses the Meta-World+ figures for V1 baselines.

\paragraph{Headline ratios.}
TOPPO at the headline configuration (LN-c $+$ PopArt on the critic; FG-c on the critic gradient aggregator adds $0$ parameters) is $685$K (MT10) / $717$K (MT50).
LN-c (LN affines net of dropped Linear biases on the three hidden layers) and PopArt's per-task affine $(W_i, b_i)$ together contribute $<1$K parameters at both MT10 and MT50; the no-LN-c, no-PopArt rung (the vanilla MT-PPO entry of the cumulative ladder in \cref{tab:intervention-isolation}, i.e., the early-stage rung) is therefore $684$K / $716$K.
Rounded ratios below all use the MT50 headline denominator ($717$K):
\begin{itemize}
    \item vs.\ ARS default ($2{,}597$K): we are $27.6\%$ ($1/3.6\times$).
    \item vs.\ ARS-LN-$1024$ ($16{,}267$K): we are $4.4\%$ ($1/22.7\times$).
    \item vs.\ ARS-LoRA ($16{,}123$K): we are $4.4\%$ ($1/22.5\times$).
    \item vs.\ PaCo K=$20$ in Meta-World+ ($33{,}909$K): we are $2.1\%$ ($1/47.3\times$).
    \item vs.\ MOORE in Meta-World+ ($10{,}385$K): we are $6.9\%$ ($1/14.5\times$).
\end{itemize}
Full table: \cref{tab:params}.

\begin{table}[!htbp]
\centering
\caption{\textbf{Source-counted parameter audit (training-memory totals).} TOPPO ($685$K MT10 $/$ $717$K MT50) is $1/3.6\times$ ARS-default, $1/22.7\times$ ARS-LN-$1024$, and $1/47.3\times$ PaCo K=$20$ at MT50, supporting the ``smaller-and-better'' framing. The ARS-paper block~\citep{cho2025ars} and Meta-World+ block~\citep{metaworldplus2025} are non-identical reimplementations of same-named methods; counts are never mixed within a comparison row (see \cref{app:param-audit}).}
\label{tab:params}
\small
\resizebox{\textwidth}{!}{\begin{tabular}{l r r l}
\toprule
Method (architecture) & Parameters (MT10) & Parameters (MT50) & Notes \\
\midrule
\textbf{Ours: TOPPO} $[400\times 3]$ & 685K & 717K & headline; FG-c aggregator adds 0 params \\
\quad Ours: Vanilla MT-PPO $[400\times 3]$ & 684K & 716K & no LN-c, no PopArt (Vanilla MT-PPO rung in \cref{tab:intervention-isolation}) \\
ARS paper: SAC-MT & 2,517K & 2,597K & actor $+2{\times}$critic $+2{\times}$target $+\alpha$ \\
\quad MT-MH-SAC & 2,546K & 2,754K & +per-task output head \\
\quad Soft Modularization & 3,454K & 3,534K & task-embed + 2 modules $\times$ depth 2 \\
\quad PCGrad-SAC & 2,517K & 2,597K & PCGrad arch-identical to SAC-MT \\
\quad MOORE & 6,890K & 7,403K & 4 orthogonal 400-wide experts \\
\quad ARS & 2,517K & 2,597K & SAC-MT backbone; differs only in loss \\
\quad ARS-LN (400) & 2,530K & 2,611K & +LN-c \\
\textbf{\quad ARS-LN (1024)} & 16,062K & 16,267K & $22.7\times$ our MT50 baseline \\
\quad ARS-LoRA & 3,878K & 16,123K & base + LoRA + 2 targets \\
\quad PaCo / SMT & -- & -- & not implemented in ARS reference repo \\
Meta-World+: MTMHSAC / PCGrad & 1,759K & 2,031K & shared $[400]{\times}3$ torso, per-task heads \\
\quad Soft Modules & 3,454K & 8,030K & (depth, num\_modules) = (2,2) MT10 / (4,4) MT50 \\
\quad MOORE & 6,810K & 10,385K & 4 experts MT10 / 6 experts MT50 \\
\quad PaCo (K=5 / K=20) & 8,476K & 33,909K & compositional Dense; K parameter sets \\
\bottomrule
\end{tabular}}
\end{table}

\subsection{PPO-specific adaptations of SAC-based gradient-surgery methods}\label{app:ppo-adaptations}

\paragraph{Provenance.}
PCGrad, CAGrad, and FairGrad are formulated as generic per-task gradient operators and evaluated primarily on supervised MTL benchmarks (CityScapes / NYUv2 / QM9); their Meta-World evaluations all use SAC (PCGrad / CAGrad on MT10 and MT50; FairGrad on MT10 only)~\citep{yu2020pcgrad,liu2021cagrad,ban2024fairgrad}.
Porting to PPO requires per-task loss decomposition, per-task backward, and a per-side combiner choice (\cref{sec:method:framework}); the standard PPO machinery (GAE, ratio clip, value clip, Adam, $[400{\times}3]$ MLP) is unchanged, and the four interventions of \cref{sec:method:framework} (\cref{tab:hyperparameters}) are the entire delta from vanilla MT-PPO.
This subsection documents the few items that are not pinned down by the table or the method section.

\paragraph{Baseline hyperparameters preserved verbatim.}
We do not re-tune any baseline's published hyperparameters.
CAGrad $c = 0.5$ (MT50) and $c = 0.9$ (MT10) per~\citet{liu2021cagrad}; PCGrad has no solver hyperparameters; per-side FairGrad $\alpha$ is in \cref{tab:hyperparameters}.

\paragraph{Solver call frequency.}
Whereas the SAC originals invoke the solver once per environment step, PPO invokes it once per minibatch inside each rollout's PPO-epoch loop.
Under our schedule ($16$ PPO repeats per collect $\times$ $32$ minibatches), that is $512$ solver calls per rollout collection on \emph{each} side where a per-task gradient combiner is enabled --- the dominant cost driver for any solver scaling worse than $\mathcal{O}(K^3)$ in $K$, and the practical reason the $y$-space Newton solver of \cref{alg:ynewton} is non-optional at $K = 50$.

\paragraph{Per-task $\sigma_i$ rows are decoupled from surgery.}
The per-task log-stddev $\sigma_i \in \R^{|\mathcal A|}$ receives gradient only from task $i$, so any per-task reweighting acts on row $i$ as a scalar rescale and never aggregates across tasks; only the fully task-shared parameters $\btheta^a, \btheta^c$ feed the per-side combiner.
The actor/critic partition is exposed to the combiner by a parameter-filter helper that slices the per-task gradient list to each side's flat slab before invoking it.
Code paths for a feature-shared preprocess net ($\btheta_\text{pre}$) and per-task mu/value heads (\texttt{mu\_heads}, \texttt{value\_heads}) exist but are empty in every reported config.

\paragraph{FairGrad-specific containment.}
Reference FairGrad does not regularize the Gram matrix and does not clip the output weights $\bw$, and we follow that choice.
At $\alpha = 1$ the KKT structurally demands large $w_i$ on small-norm rows --- $w_i \sim G_{ii}^{-1/2}$ as $\|\bg_i^c\| \to 0$ --- and clipping $w$ would suppress exactly the rescaling that \cref{thm:scale-invariance,cor:fg-popart} identify as the mechanism of FG-c.
Containment for fp64 safety is therefore split between the solver-internal soft cap $y_{\max} = 50$ and the post-aggregation $\|\cdot\|_2 \leq 1$ clip (both in \cref{tab:hyperparameters}, applied uniformly across all configurations so the comparison isolates the aggregator choice rather than a per-method clip schedule); per-task FG weights and contributions are logged every minibatch so any persistent imbalance is observable online.

\subsection{PopArt: implementation note (Welford full-sample statistics)}\label{app:popart-welford}

Our PopArt implementation updates the per-task running statistics $(\mu_i, \sigma_i)$ via Chan--Golub--LeVeque parallel Welford merges~\citep{welford1962,chan1983variance} over the full per-task return history. Canonical PopArt~\citep[Eq.~4]{vanhasselt2016popart} presents both options---$\beta_t = 1/t$ recovers the full-history sample mean (mathematically equivalent to our Welford accumulator), while a constant-rate EMA at $\beta$ is the recommended choice for non-stationary settings; the multi-task PopArt setup of~\citet{hessel2019popartmultitask} adopts the constant-$\beta$ EMA. Our Welford accumulator therefore matches the $\beta_t = 1/t$ branch of canonical PopArt rather than the constant-$\beta$ branch typically used in deep RL. After each collect of $n_{\text{batch}}$ return targets for task $i$ with batch mean $\bar R_i$ and batch sum-of-squared-deviations $M_2^{\text{batch}}$:
\begin{equation}\label{eq:welford}
\begin{aligned}
n_{\text{new}} &\leftarrow \mathrm{count}[i] + n_{\text{batch}}, \qquad
\delta \leftarrow \bar R_i - \mu_i, \\
\mu_i &\leftarrow \mu_i + \delta \cdot \tfrac{n_{\text{batch}}}{n_{\text{new}}}, \\
M_2[i] &\leftarrow M_2[i] + M_2^{\text{batch}} + \delta^2 \cdot \tfrac{\mathrm{count}[i]\, n_{\text{batch}}}{n_{\text{new}}}, \\
\sigma_i &\leftarrow \max\!\big(\sqrt{M_2[i] / n_{\text{new}}},\, \sigma_{\min}\big), \qquad
\mathrm{count}[i] \leftarrow n_{\text{new}},
\end{aligned}
\end{equation}
with $\sigma_{\min} = 10^{-2}$. The PopArt renormalization that adjusts $(W_i, b_i)$ to preserve $V_i^{\text{raw}}$ across $(\mu_i, \sigma_i)$ updates runs identically to canonical PopArt and is unaffected by the choice of estimator.

The effective update rate is $n_{\text{batch}}/n_{\text{new}}$, which decays as $\sim 1/n$ and becomes small after many collects. This makes the estimator closer to the marginal return distribution averaged over training than to the return distribution induced by the current policy. Under the non-stationary returns produced by RL training, \emph{the canonical EMA at a tuned $\beta$ is likely the better estimator}.

\paragraph{Effect on the paper's claims.}
\cref{thm:scale-invariance} and \cref{cor:fg-popart} hold for any positive task-wise rescaling $\bc \in \R^K_{>0}$, so FG-c absorbs Welford-PopArt's and EMA-PopArt's $c_i = \sigma_i^{-2}$ identically; the empirical numbers in \cref{tab:intervention-isolation,tab:worst-k} are about our Welford variant and we do not claim them as canonical-EMA-PopArt reproductions. The two estimators are closest precisely during the early-training phase that determines whether the self-reinforcing tail trap (\cref{sec:method:pathology}) is set or escaped: with $\mathrm{count}[i]$ small, Welford's effective update rate $n_{\text{batch}}/n_{\text{new}}$ is close to $1$, and a canonical EMA at small $\beta$ has not yet accumulated enough history for its geometric tail to dominate, so both $\sigma_i$ values are driven by the most recent batches. The two diverge only in mid-to-late training, where Welford freezes near the marginal-return distribution (rate $\sim 1/n$) while EMA keeps tracking the current policy's distribution at constant $\beta$; by that point the tail tasks have either been recovered or locked in, bounding the estimator choice's effect on final-outcome metrics. The same argument applies under V1's heterogeneous reward scales, supporting our interpretation (\cref{app:v1-sensitivity}) that the V1 Corollary~1 inversion is driven primarily by PopArt's non-rescaling channels rather than by the Welford-vs-EMA gap.

\FloatBarrier
\section{Theoretical Details}\label{app:theory}
\subsection{TOPPO pseudocode}\label{app:toppo-algorithm}

\cref{alg:toppo} expands the compact \cref{alg:toppo-compact} (\cref{sec:method:framework}) with the bookkeeping that the main-text version folds into surrounding loop headers, mirroring the released implementation.
The differences from vanilla MT-PPO are localized to four points: PopArt update before the minibatch loop (line~3), task-stratified minibatching (line~6), per-task loss decomposition with one backward per task (lines~9--12), and the per-side gradient combiners FG-c / PCGrad-a (lines~13--14).
PPO's GAE, ratio clip, value clip, recompute-advantage, post-aggregation gradient-norm clip, and Adam step are unchanged.

\begin{algorithm}[!htbp]
\caption{TOPPO: a PPO collect-and-update phase.}
\label{alg:toppo}
\begin{algorithmic}[1]
\REQUIRE Tasks $\{\mathcal{M}_i\}_{i=1}^K$, shared params $\btheta = (\btheta^a, \btheta^c)$ with \textbf{[LN-c]} on critic hidden layers and per-task PopArt affine $\{(\sigma_i, \mu_i)\}$, repeats $R$, minibatch size $b$, clip $\epsilon$, max gradient norm $\tau$
\STATE Rollout $\mathcal{B} = \bigcup_i \mathcal{B}_i$ under $\pi_{\btheta^a_{\text{old}}}$ (uniform per task); calculate per-task GAE $\hat A_t^{(i)}$ and value targets $\hat V_t^{(i),\text{targ}}$ on raw returns
\STATE \textbf{[PopArt]} update $(\mu_i, \sigma_i)$ via Welford on $\{\hat V_t^{(i),\text{targ}}\}$ (\cref{eq:welford}) and PopArt-renormalize the per-task affine head so raw $V$ is preserved \hfill \emph{(\cref{sec:method:popart})}
\FOR{$r = 1$ \TO $R$}
  \IF{$r > 1$}
    \STATE recompute $\hat A_t^{(i)}, \hat V_t^{(i),\text{targ}}$ on $\mathcal{B}$ under current $V_{\btheta^c}$
  \ENDIF
  \FOR{each task-stratified minibatch $\mathcal{B}^{\text{mb}} \subset \mathcal{B}$ of size $b$}
    \STATE normalize advantages within each task slice of $\mathcal{B}^{\text{mb}}$; map critic targets to PopArt space $(\hat V_t^{(i),\text{targ}} - \mu_i)/\sigma_i$ as in \cref{sec:method:popart}
    \FOR{$i = 1$ \TO $K$ such that $\mathcal{B}^{\text{mb}}_i \neq \emptyset$}
      \STATE $L_i \gets L_i^a(\btheta^a) + c_v L_i^{c,\mathrm{PopArt}}(\btheta^c) - c_e H_i(\btheta^a)$ on $\mathcal{B}^{\text{mb}}_i$ \hfill \emph{(\cref{eq:ppo})}
      \STATE backward $\bg_i \gets \nabla_{\btheta} L_i$ and cache flat actor and PopArt critic slabs $(\bg_i^a, \bg_i^{c,\mathrm{PopArt}})$
    \ENDFOR
    \STATE \textbf{[FG-c ($\alpha{=}1$)]} solve $\bG^c \bw = \bw^{-1}$ via \cref{alg:ynewton}, then $\bd^c \gets \sum_i w_i \bg_i^{c,\mathrm{PopArt}}$ \hfill \emph{(\cref{sec:method:fgc})}
    \STATE \textbf{[PCGrad-a]} $\bd^a \gets \mathrm{PCGrad}(\bg_1^a, \dots, \bg_K^a)$ \hfill \emph{actor-side surgery}
    \STATE rescale $(\bd^a, \bd^c)$ by $\min(1,\, \tau / \|(\bd^a, \bd^c)\|_2)$; Adam step on $\btheta$
  \ENDFOR
\ENDFOR
\end{algorithmic}
\end{algorithm}

\subsection{\texorpdfstring{$\alpha$-fairness family and KKT derivation}{alpha-fairness family and KKT derivation}}\label{app:alpha-fairness}

This appendix expands \cref{sec:method:fgc} with three pieces deferred from the main text: (i) the interpretation of $U_\alpha$ in \cref{eq:alpha-utility} as a one-parameter fairness family, (ii) the precise meaning of ``per-task contribution $x_i$'' in \cref{eq:fg-contribution} and the canonical scaling that pins down $\bw$ in the FairGrad program, and (iii) the derivation, via a strictly convex unconstrained program, of the KKT condition $\bG^c \bw = \bw^{-1/\alpha}$ stated in \cref{sec:method:fgc}.

\paragraph{Family interpretation.}
$U_\alpha$~\citep{mo2000fairness,kelly1997charging} is strictly concave on $\R_{>0}$ for every $\alpha > 0$, and as $\alpha$ varies over $(0, \infty)$ it interpolates classical social-welfare rules: utilitarian sum as $\alpha \to 0^+$ (each unit of $x_i$ has equal marginal value), Nash bargaining at $\alpha = 1$~\citep{nash1950bargaining} (the log-utility / geometric-mean solution), and Rawlsian max-min as $\alpha \to \infty$ (the smallest $x_i$ dominates). Larger $\alpha$ thus up-weights small-share agents --- equivalently, in our setting, gives more aggregator budget to tasks whose contribution to the combined critic gradient is currently small.

\paragraph{Per-task contribution and the canonical scaling.}
Write $\bd := \sum_i w_i \bg_i^c$ for the combined critic gradient. ``Per-task contribution'' refers to the inner product
\begin{equation}\label{eq:fg-contribution}
x_i \;:=\; \bg_i^{c\,\top} \bd \;=\; (\bG^c \bw)_i,
\end{equation}
i.e.\ how much task $i$'s gradient agrees with the combined direction; $x_i > 0$ is the linearized loss decrease task $i$ obtains under a unit step along $\bd$. The bare welfare $\sum_i U_\alpha((\bG^c \bw)_i)$ has no interior maximizer along positive rays $\bw \mapsto t\bw$ (for $0<\alpha \leq 1$ it grows without bound as $t \to \infty$; for $\alpha > 1$ it approaches its supremum $0$ only in that limit), so a normalization is needed to pin down a canonical $\bw$. Geometrically, $\bw^\top \bG^c \bw = \|\bd\|^2$, so any normalization of $\bw$ corresponds to a choice of $\|\bd\|$. Throughout the paper we adopt the \emph{canonical scaling} that yields the dimensionless KKT $\bG^c \bw = \bw^{-1/\alpha}$ used in \cref{sec:method:fgc,thm:scale-invariance,cor:fg-popart}; for $\alpha = 1$, this canonical $\bw$ satisfies $\|\bd\|^2 = \bw^\top \bG^c \bw = K$ rather than~$1$ (\cref{thm:scale-invariance}).

\paragraph{Derivation of the FairGrad KKT condition.}
For $\alpha = 1$ (the case used throughout the paper), the canonical-scaling solution is the unique minimizer of the strictly convex unconstrained program
\begin{equation}\label{eq:fg-convex-program}
\min_{\bw \in \R^K_{>0}} \;\; \Phi_1(\bw) \;:=\; \tfrac12\, \bw^\top \bG^c \bw \;-\; \sum_{i=1}^K \log w_i.
\end{equation}
Strict convexity follows from $\nabla^2 \Phi_1(\bw) = \bG^c + \diag(w_i^{-2}) \succ 0$ on $\R^K_{>0}$ (since $\bG^c \succ 0$ by the linear-independence assumption); existence and uniqueness then follow because $\Phi_1(\bw) \to +\infty$ both on $\partial\R^K_{>0}$ (via $-\sum_i \log w_i \to +\infty$ as any $w_i \to 0^+$) and as $\|\bw\| \to \infty$ (via $\bw^\top \bG^c \bw \geq \lambda_{\min}(\bG^c)\|\bw\|^2$). Stationarity $\nabla \Phi_1(\bw^*) = \mathbf 0$ reads $\bG^c \bw^* - (\bw^*)^{-1} = \mathbf 0$, i.e.\ the dimensionless KKT
\begin{equation*}
\bG^c \bw^* \;=\; (\bw^*)^{-1} \quad \text{(componentwise)},
\end{equation*}
stated in \cref{sec:method:fgc}, on which \cref{thm:scale-invariance,cor:fg-popart} build. This canonical solution coincides with the constrained $\alpha$-fairness program $\max \sum_i \log((\bG^c \bw)_i)$ subject to $\bw^\top \bG^c \bw = K$: the constraint value~$K$ is precisely what makes the Lagrange multiplier equal $1$ at the optimum, eliminating the rescaling step needed under the conventional unit-norm normalization $\bw^\top \bG^c \bw = 1$. The unconstrained form \cref{eq:fg-convex-program} avoids carrying a Lagrange multiplier through the derivation. For general $\alpha > 0$, the same construction with $\sum_i \log w_i$ replaced by $\frac{\alpha}{\alpha-1}\sum_i w_i^{1-1/\alpha}$ ($\alpha\neq 1$) gives the KKT $\bG^c \bw = \bw^{-1/\alpha}$ via the same stationarity calculation; the boundary argument is replaced by gradient blow-up $\partial \Phi_\alpha/\partial w_i = (\bG^c \bw)_i - w_i^{-1/\alpha} \to -\infty$ as $w_i \to 0^+$, which still forces an interior minimizer. The KKT couples per-task scale (diagonal $\bG^c_{ii}$, how loud each task is on its own) with direction conflict (off-diagonals, how compatibly two tasks pull).

\subsection{Aggregator family taxonomy and cone containment}\label{app:cone-containment}

\begin{theorem}[Shared cone containment]\label{thm:reachable-set}
Let $\cone\{\bg_1, \ldots, \bg_K\} := \{\sum_i w_i \bg_i : w_i \geq 0\}$ denote the closed conic hull (non-negative combinations) of the per-task gradient set. The combined gradient produced by FairGrad (any $\alpha > 0$), PCGrad~\citep{yu2020pcgrad}, CAGrad~\citep{liu2021cagrad}, FAMO~\citep{liu2023famo}, DWA~\citep{liu2019dwa}, and uncertainty weighting~\citep{kendall2018uncertainty} lies in $\cone\{\bg_1, \ldots, \bg_K\}$.
\end{theorem}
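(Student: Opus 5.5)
The plan is to check, one aggregator at a time, that the output can be written as $\sum_i w_i \bg_i$ with every $w_i \ge 0$; membership in $\cone\{\bg_1,\ldots,\bg_K\}$ then holds by definition. Each method is treated as a map from the per-task gradients (and, where relevant, scalar loss statistics) to a weight vector. So the work in each case is to show the weights are nonnegative. The methods fall into three groups.

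\emph{Loss-level reweighting (DWA, uncertainty weighting, FAMO).} These methods minimize $\sum_i \lambda_i L_i$, possibly plus a regularizer that does not depend on the parameters. Hence the update direction is $\sum_i \lambda_i \bg_i$ with scalar $\lambda_i$. For DWA the weights are a softmax and so are positive. For uncertainty weighting they are $\lambda_i = 1/(2\sigma_i^2) > 0$. For FAMO the weights come from a softmax over logits, possibly divided by positive per-task loss values in the log-loss form; either way they are positive.

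\emph{FairGrad (any $\alpha>0$).} By construction $\bd = \sum_i w_i \bg_i$ with $\bw \in \R^K_{>0}$. The derivation of $\bG\bw = \bw^{-1/\alpha}$ above produces an interior minimizer, so the weights are strictly positive. When $\bG$ is singular, the positive solution returned by the solver still gives a nonnegative combination.

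\emph{CAGrad.} The update is $\bd = \bar\bg + \tfrac{\sqrt{\phi}}{\|\bg_w\|}\bg_w$, where $\bg_w = \sum_i w_i \bg_i$ and $w$ lies in the simplex, with $\phi = c^2\|\bar\bg\|^2$. Both terms are nonnegative combinations, and the scalar $\sqrt{\phi}/\|\bg_w\|$ is nonnegative. The sum is therefore in the cone. The degenerate case $\bg_w = 0$ is handled by taking the coefficient to be zero, per the usual convention.

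\emph{PCGrad.} This is the main obstacle, because a projection subtracts multiples of other gradients. The projection step is $\bg_i' \leftarrow \bg_i' - \frac{\langle \bg_i',\bg_j\rangle}{\|\bg_j\|^2}\bg_j$, applied only when $\langle \bg_i',\bg_j\rangle < 0$. The coefficient on $\bg_j$ is then $+\frac{|\langle\bg_i',\bg_j\rangle|}{\|\bg_j\|^2} \ge 0$. So I would argue by induction over the sequence of projections: each projected $\bg_i'$ is a combination of $\bg_1,\ldots,\bg_K$ whose coefficients remain nonnegative. Because projection only subtracts negative multiples, the coefficient on $\bg_i$ stays $1$ and the others only increase. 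The final sum $\sum_i \bg_i'$, or its mean, is then a nonnegative combination.

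The subtle point is that PCGrad projects onto the original $\bg_j$ rather than onto the already-modified $\bg_j'$; this is what keeps the induction clean. If an implementation instead projected onto modified vectors, the same induction would still apply, since each modified vector is itself in the cone. Finally, I would note that the closed cone is the right set for limit and degenerate cases, such as zero-norm gradients or near-singular Grams handled by fallbacks.
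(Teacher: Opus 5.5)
Your proposal is correct and follows essentially the same route as the paper. Both proofs check each aggregator case by case and show its output is $\sum_i w_i \bg_i$ with $w_i \ge 0$: CAGrad is $\bar{\bg}$ plus a nonnegative multiple of a simplex combination (the paper's coefficients $\tfrac{1}{K} + w_i^*\lambda^*$), and PCGrad is handled by induction over projections against the original $\bg_j$, where each conflicting projection adds a positive multiple of $\bg_j$.
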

\begin{proof}
For FairGrad/FAMO/DWA/UW, the combined gradient is $\sum_i w_i \bg_i$ with $w_i \geq 0$ by construction (FairGrad's KKT solves on $\R^K_{>0}$ for $\alpha > 0$; FAMO/DWA/UW use loss-level reweighting with non-negative weights, so $\nabla \sum_i w_i L_i = \sum_i w_i \bg_i$). For CAGrad, the combined gradient is $\sum_i (\tfrac{1}{K} + w_i^* \lambda^*)\, \bg_i$ with $w_i^* \geq 0$ on the simplex and $\lambda^* \geq 0$~\citep{liu2021cagrad}, so the coefficient on each $\bg_i$ is $\geq \tfrac{1}{K} > 0$. For PCGrad, induct over the projection schedule. Let $S_{<k}$ denote the (ordered) set of tasks already projected against in task $i$'s inner loop before step $k$. Base case ($k=0$): $\bg_i' = \bg_i$, with coefficient $1$ on $\bg_i$ and $\mu_{i,m} = 0$ for all $m \neq i$. Inductive step: at step $k$ with $\bg_i' = \bg_i + \sum_{m \in S_{<k}} \mu_{i,m} \bg_m$ ($\mu_{i,m} \geq 0$), the projection is taken against the \emph{original} $\bg_{j_k}$ (\citealp{yu2020pcgrad}, Algorithm 1); If $\langle \bg_i', \bg_{j_k}\rangle \geq 0$ no update; else the update adds $-\langle \bg_i', \bg_{j_k}\rangle/\|\bg_{j_k}\|^2 > 0$ to the (non-negative) coefficient on $\bg_{j_k}$. Invariant preserved; final $\bg_i^\text{PC}$ is a non-negative combination of $\{\bg_1, \ldots, \bg_K\}$. The combined gradient $\bg^\text{PC} = \tfrac{1}{K}\sum_i \bg_i^\text{PC}$ is then a non-negative combination of the same set, hence in $\cone\{\bg_1, \ldots, \bg_K\}$.
\end{proof}

\begin{remark}[FG at $\alpha{=}1$ recovers Nash-MTL]\label{rem:fg-nash}
At $\alpha{=}1$, the $\alpha$-fairness utility is $U_1(x) = \sum_i \log x_i$, the Nash welfare~\citep{nash1950bargaining,kelly1997charging,mo2000fairness}. Applied to per-task improvement $u_i = \bg_i^{\!\top} \bd$ along a feasible descent direction $\bd \in \cone\{\bg_1, \ldots, \bg_K\}$, the FG $\alpha{=}1$ program reduces to $\bd^* = \arg\max_{\bd \in \cone\{\bg_i\}} \sum_i \log(\bg_i^{\!\top} \bd)$, which is the Nash bargaining problem solved by Nash-MTL~\citep{navon2022nashmtl}. Both methods share the same per-component KKT $w_i (\bG \bw)_i = 1$ (the precursor to the $\sqrt{K}$-energy identity of \cref{thm:scale-invariance}); they differ only in solver (CCP-via-CVXPY for Nash-MTL vs.\ scipy least-squares for the original FairGrad vs.\ our $y$-space Newton).
\end{remark}

\paragraph{Implication.} \cref{thm:reachable-set} reframes the gradient-aggregation design space: methods differ in which point inside the same shared cone they select, not in the cone itself. FAMO/DWA/UW pick a Gram-blind point; PCGrad picks a point determined by $\mathrm{sign}(\bG_{ij})$ via random-order projection; CAGrad picks the solution of a constrained QP on $\bG$; FairGrad picks the $\alpha$-fairness KKT solution on the full Gram. Among these, only $\alpha{=}1$ FG carries the closed-form $\sqrt{K}$-energy identity and scale-invariance of \cref{thm:scale-invariance}.

\subsection{Proof of Theorem 1 (scale invariance)}\label{app:scale-invariance-proof}
We write $\bw^*$ for the theorem's $\bw$ throughout, reserving $\bw$ as the optimization variable. Thus $\bw^* \in \R^K_{>0}$ denotes the unique positive solution of the $\alpha=1$ KKT $\bG\bw = \bw^{-1}$. Fix any $\bc \in \R^K_{>0}$, set $\tilde\bg_i := c_i \bg_i$ and $D := \diag(\bc)$, and write $\tilde\bG := D \bG D$ for the rescaled Gram. We proceed in four steps.

\smallskip
\emph{Step 1 (existence and uniqueness).}
The KKT $\bG\bw = \bw^{-1}$ is the $\alpha=1$ specialization of the strictly convex unconstrained program~\eqref{eq:fg-convex-program}, which admits a unique positive minimizer by the argument given there. Since $D$ is invertible and $\bG \succ 0$, congruence yields $\tilde\bG = D \bG D \succ 0$; applying the same argument to~\eqref{eq:fg-convex-program} with $\bG$ replaced by $\tilde\bG$ gives a unique positive solution of the rescaled KKT $\tilde\bG \tilde\bw = \tilde\bw^{-1}$.

\smallskip
\emph{Step 2 (norm identity $\|\sum_i w_i^* \bg_i\|^2 = K$).}
Componentwise, $(\bG\bw^*)_i = 1/w_i^*$, hence $w_i^* (\bG\bw^*)_i = 1$ for every $i$. Summing,
\[
(\bw^*)^\top \bG\bw^* \;=\; \sum_{i=1}^K w_i^* (\bG\bw^*)_i \;=\; K.
\]
Expanding the quadratic form via $\bG_{ij} = \langle \bg_i, \bg_j\rangle$ gives
\[
\Big\| \sum_i w_i^* \bg_i \Big\|^2 \;=\; \sum_{i,j} w_i^* w_j^* \langle \bg_i, \bg_j\rangle \;=\; (\bw^*)^\top \bG\bw^* \;=\; K.
\]

\smallskip
\emph{Step 3 (scale invariance at $\alpha = 1$).}
Define $\tilde\bw := \bw^* \oslash \bc$, i.e.\ $\tilde w_i = w_i^*/c_i$. We verify that $\tilde\bw$ solves the rescaled KKT and preserves the combined update.

\textbf{(3a) Rescaled KKT.}
Using $\tilde\bG_{ij} = c_i c_j \bG_{ij}$ and $c_j \tilde w_j = w_j^*$,
\[
(\tilde\bG\tilde\bw)_i
\;=\; c_i \sum_{j} \bG_{ij}\,(c_j \tilde w_j)
\;=\; c_i \sum_{j} \bG_{ij}\, w_j^*
\;=\; c_i\, (\bG\bw^*)_i
\;=\; \frac{c_i}{w_i^*}
\;=\; \frac{1}{\tilde w_i}.
\]
Hence $\tilde\bG\tilde\bw = \tilde\bw^{-1}$, and by Step~1's uniqueness $\tilde\bw$ is the rescaled FairGrad output.

\textbf{(3b) Combined-update preservation.}
Direct substitution gives
\[
\sum_{i} \tilde w_i \tilde\bg_i \;=\; \sum_{i} \frac{w_i^*}{c_i}\, (c_i \bg_i) \;=\; \sum_{i} w_i^* \bg_i.
\]
(In fact, combined-update preservation already pins down $\tilde\bw$: any $\tilde\bw \in \R^K_{>0}$ with $\sum_i \tilde w_i \tilde\bg_i = \sum_i w_i^* \bg_i$ satisfies $\sum_i (c_i \tilde w_i - w_i^*)\bg_i = 0$, and linear independence of $\{\bg_i\}$ forces $\tilde w_i = w_i^*/c_i$.)

\smallskip
\emph{Step 4 (necessity of $\alpha = 1$).}
Fix $0 < \alpha \neq 1$ and suppose, for contradiction, that the FairGrad output is invariant under all positive task-wise rescalings. The combined-update preservation argument in Step~3b uses only linear independence and is independent of $\alpha$, so by hypothesis the rescaled solution must again be $\tilde w_i = w_i^*/c_i$, where now $\bw^*$ solves the original $\alpha$-fairness KKT $(\bG\bw^*)_i = (w_i^*)^{-1/\alpha}$. The same factoring as in Step~3a then yields
\[
(\tilde\bG\tilde\bw)_i
\;=\; c_i\, (\bG\bw^*)_i
\;=\; c_i\, (w_i^*)^{-1/\alpha}
\;=\; c_i\, (c_i\, \tilde w_i)^{-1/\alpha}
\;=\; c_i^{\,1 - 1/\alpha}\, \tilde w_i^{-1/\alpha}.
\]
The rescaled $\alpha$-fairness KKT requires the right-hand side to equal $\tilde w_i^{-1/\alpha}$, so $c_i^{\,1 - 1/\alpha} = 1$ must hold for every $\bc \in \R^K_{>0}$, which forces $1 - 1/\alpha = 0$, i.e.\ $\alpha = 1$. Hence $\alpha = 1$ is the unique exponent in the family for which the FairGrad output is invariant under arbitrary positive task-wise rescalings.

\subsection{\texorpdfstring{$y$-Newton solver derivation and benchmark protocol}{y-Newton solver derivation and benchmark protocol}}\label{app:solver}

This appendix expands \cref{sec:method:framework} with derivations.

\paragraph{Setup and Newton step derivation.}
Reparametrize $\bw = e^{\bw_y}$ (positivity automatic; search variable on $\R^K$). The shipped solver writes the Armijo test on the residual norm itself rather than the squared norm; both are equivalent up to a $\sqrt{1{-}c_1 t}$ vs.\ $1{-}c_1 t$ factor (the unsquared form is slightly more aggressive). Tier-1 declares convergence when $\|F(\bw_y)\|$ drops below $\eta = 10^{-2}$; the source also exposes a tighter parameter $\tau = 10^{-8}$ for theorem-verification calls (\cref{sec:method:framework}, $\sqrt{K}$-energy reproducibility), but in the production solve path with $\eta = 10^{-2}$ the $r \le \tau$ branch is mathematically redundant and never independently triggers. The $y$-space residual and Jacobian are
\begin{align}
F(\bw_y) &= \bG^c \exp(\bw_y) - \exp(-\bw_y/\alpha), \label{eq:y-residual}\\
J(\bw_y) &= \bG^c \diag(\exp(\bw_y)) + \tfrac{1}{\alpha}\diag(\exp(-\bw_y/\alpha)). \label{eq:y-jacobian}
\end{align}

The componentwise derivative
\begin{equation}
\frac{\partial F_i}{\partial w_{y,j}} \;=\; \bG^c_{ij}\, \exp(w_{y,j}) \;+\; \frac{1}{\alpha}\,\delta_{ij}\, \exp(-w_{y,i}/\alpha)\,,
\end{equation}
assembles to~\eqref{eq:y-jacobian}; in code, $J$ is built by broadcasting $\exp(\bw_y)$ along the columns of $\bG^c$ plus a diagonal increment, avoiding explicit \texttt{diag} materialization. The full procedure --- diagonal warm-start, damped $y$-Newton with Armijo line search, and the two fallback tiers --- is given in \cref{alg:ynewton}.

\paragraph{Closed-form warm-start (uniqueness).}
\begin{proposition}[Diagonal closed form]\label{prop:warmstart}
If $\bG^c$ is diagonal with strictly positive entries, the unique positive solution of the FG-KKT is $w^{\text{diag}}_i = G_{ii}^{-\alpha/(\alpha+1)}$.
\end{proposition}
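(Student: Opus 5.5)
The plan has two parts: check that the claimed vector satisfies the KKT system, then invoke uniqueness from the convex-program argument of \cref{app:alpha-fairness}. The KKT in question is the general-$\alpha$ condition $\bG^c\bw=\bw^{-1/\alpha}$ of \cref{sec:method:fgc}.

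\emph{Verification.} When $\bG^c=\diag(G_{11},\dots,G_{KK})$, the system decouples into $K$ scalar equations $G_{ii}w_i = w_i^{-1/\alpha}$. Each is equivalent to $w_i^{1+1/\alpha}=G_{ii}^{-1}$, so for $w_i>0$ we get $w_i = G_{ii}^{-1/(1+1/\alpha)} = G_{ii}^{-\alpha/(\alpha+1)}$. Since $G_{ii}>0$, the map $t\mapsto t^{(\alpha+1)/\alpha}$ is a strictly increasing bijection of $\R_{>0}$. This shows directly that each scalar equation has exactly one positive root. Positivity of $G_{ii}$ is what makes the real power well defined and positive. As a sanity check, $\alpha=1$ gives $w_i=G_{ii}^{-1/2}=1/\|\bg_i\|$, matching the example in \cref{sec:method:fgc}.

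\emph{Uniqueness.} The decoupling already yields uniqueness, because the scalar root is unique. It is still worth noting consistency with the general theory: a diagonal matrix with positive entries is $\succ0$, so the strictly convex program \eqref{eq:fg-convex-program} (and its $\alpha\neq1$ analogue) has a unique positive stationary point. That point must coincide with the explicit vector above.

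The only real subtlety is the bookkeeping of the exponent $-\alpha/(\alpha+1)$ from $1+1/\alpha=(\alpha+1)/\alpha$. I do not expect a genuine obstacle.
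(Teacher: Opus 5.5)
Your proof is correct and follows the paper's: both decouple the diagonal KKT into $G_{ii}w_i = w_i^{-1/\alpha}$, rearrange to $G_{ii}w_i^{1+1/\alpha}=1$, and read off $w_i=G_{ii}^{-\alpha/(\alpha+1)}$. The only cosmetic difference is how uniqueness is argued: the paper uses strict monotonicity of $w\mapsto G_{ii}w-w^{-1/\alpha}$ on $\R_{>0}$, whereas you use that $t\mapsto t^{(\alpha+1)/\alpha}$ is a bijection of $\R_{>0}$. That bijection rests on $\alpha>0$, and $G_{ii}>0$ only ensures $G_{ii}^{-1}$ lies in its range; your appeal to the convex program is a harmless extra consistency check.
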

\begin{proof}
Diagonal $\bG^c$ decouples the KKT into $G_{ii} w_i = w_i^{-1/\alpha}$. Multiply by $w_i^{1/\alpha}$: $G_{ii}\, w_i^{1+1/\alpha} = 1$, so $w_i = G_{ii}^{-\alpha/(\alpha+1)}$. The map $w \mapsto G_{ii} w - w^{-1/\alpha}$ is strictly increasing on $\R_{>0}$ (derivative $G_{ii} + (1/\alpha) w^{-1-1/\alpha} > 0$) — solution unique.
\end{proof}

\begin{algorithm}[!htbp]
\caption{$y$-space projected damped Newton solver for the FG-KKT $\bG^c \bw = \bw^{-1/\alpha}$, with three-tier fallback. Called once per PPO minibatch (\cref{app:ppo-adaptations}).}
\label{alg:ynewton}
\begin{algorithmic}[1]
\REQUIRE Critic Gram $\bG^c \in \R^{K\times K}_{\succ 0}$ (fp64), exponent $\alpha > 0$, residual target $\eta = 10^{-2}$, soft cap $y_{\max} = 50$, max outer iters $N = 50$, max Armijo backtracks $B = 30$, Armijo $c_1 = 10^{-4}$
\ENSURE Weights $\bw \in \R^K_{>0}$ with $\|F(\bw_y)\| \leq \eta$ when convergent
\STATE $\bw_y \gets -\tfrac{\alpha}{\alpha+1} \log \max(\diag(\bG^c),\, \varepsilon)$
  \COMMENT{diagonal warm-start (\cref{prop:warmstart}); $\varepsilon$ guards $\log 0$}
\STATE $\bw_y \gets \mathrm{clip}(\bw_y,\, -y_{\max},\, y_{\max})$ \COMMENT{soft cap}
\FOR{$k = 1$ \TO $N$}
  \STATE $F \gets \bG^c \exp(\bw_y) - \exp(-\bw_y/\alpha)$
    \COMMENT{eq.~\eqref{eq:y-residual}}
  \IF{$\|F\| \leq \eta$}
    \RETURN $\exp(\bw_y)$
  \ENDIF
  \STATE $J \gets \bG^c \cdot \diag(\exp(\bw_y)) + \tfrac{1}{\alpha}\,\diag(\exp(-\bw_y/\alpha))$
    \COMMENT{eq.~\eqref{eq:y-jacobian}}
  \STATE solve $J\,\bd = -F$ in fp64
  \STATE $t \gets 1$;\ \ \emph{accepted} $\gets$ \texttt{false}
  \FOR{$j = 1$ \TO $B$}
    \STATE $\bw_y^{\text{trial}} \gets \min(\bw_y + t\,\bd,\, y_{\max})$
    \STATE $F^{\text{trial}} \gets \bG^c \exp(\bw_y^{\text{trial}}) - \exp(-\bw_y^{\text{trial}}/\alpha)$
    \IF{$\|F^{\text{trial}}\| < (1 - c_1 t)\,\|F\|$}
      \STATE $\bw_y \gets \bw_y^{\text{trial}}$;\ \ \emph{accepted} $\gets$ \texttt{true};\ \ \textbf{break out of inner loop}
    \ENDIF
    \STATE $t \gets t/2$
  \ENDFOR
  \IF{\emph{not accepted}}
    \STATE \textbf{break out of outer loop} \COMMENT{Armijo exhausted; pass current $\bw_y$ to Tier 2}
  \ENDIF
\ENDFOR
\STATE \emph{Tier 2: Newton did not reach $\eta$ $\to$ scipy least-squares warm-started from $\exp(\bw_y)$}
\STATE $\bw \gets \texttt{scipy.optimize.least\_squares}\big(F,\; x_0{=}\max(\exp(\bw_y),\,\varepsilon),\; \text{bounds}{=}(\varepsilon, \infty)\big)$
\IF{\texttt{res.success} and $\bw$ is finite}
  \RETURN $\bw$
\ENDIF
\STATE \emph{Tier 3: scipy also failed $\to$ uniform fallback, flag failure to online diagnostics}
\RETURN $\mathbf{1}_K / K$
\end{algorithmic}
\end{algorithm}

\paragraph{Why the warm-start matters.}
\begin{itemize}
\item For non-diagonal $\bG^c$: \cref{prop:warmstart} is exact under the diagonal-Gram approximation; mid-training PPO Gram is diagonal-dominant ($\bG^c_{ii} \gg \sum_{j \neq i} |\bG^c_{ij}|$ for active tasks), so $w^{\text{diag}}$ converges within $2$ Newton iterations on every regime in \cref{tab:solver} except the $K{=}50$ rand-struct stress test (row~3, where adversarial cross-coupling stalls the warm-start at the $50$-iter cap; see also row~3 caveat in \cref{tab:solver}).
\item Dead-task rows ($\bG^c_{ii} \to 0$): warm-start $\to \infty$ in the correct direction; uniform cold-start $w_y \equiv -\log K$ cannot reproduce this.
\item On Gram with $\geq 2$ simultaneously small-norm rows: Newton's last iterate is up to $40$ log-units closer to the truth in $y$-space than scipy's $1/K$ default (i.e.\ $\sim\!e^{40} \approx 2\times 10^{17}\times$ closer in $w$-space), which both improves Tier-2 success probability and shortens Tier-2 wall clock when it does fire. The diagnostics in \cref{app:ppo-adaptations} log \texttt{used\_scipy\_fallback} and \texttt{solver\_fallback} per minibatch so Tier-2 / Tier-3 firing rates are observable in the sweep.
\end{itemize}

\paragraph{Newton step in the small-norm regime.}
\begin{itemize}
\item Row $i$ with $\bG^c_{i,:} \to 0$: $F_i \to -\exp(-w_{y,i}/\alpha)$, $J_{ii} \to (1/\alpha) \exp(-w_{y,i}/\alpha)$.
\item Newton step: $\Delta w_{y,i} = -F_i / J_{ii} \to \alpha$ — exactly $\alpha$ in the row-deficient limit $\bG^c_{i,:} \to 0$.
\item Iter count for cold-start ($w_y \equiv -\log K$) to climb to the true small-norm-task weight at $y_i^* \approx 50$: $\lceil (y^* + \log K)/\alpha \rceil \approx 54$ at $\alpha = 1$ (the slowest setting in our matrix), exceeding the $N = 50$ outer cap. Cold-start nevertheless terminates the $r \le \eta$ check at $\sim 10$ outer iterations on rows~3--5 of \cref{tab:solver} (e.g.\ row 4: $r_{\text{cold}} = 6.2{\cdot}10^{-3} < \eta$), but at that point its small-norm weight is stalled near $w_i \sim e^{10}$ rather than the warm-start's $\sim e^{50}$ --- KKT-residual-converged yet tens of log-units off in the small-norm coordinate. The warm-start of \cref{prop:warmstart} skips this stall by initializing each $w_i$ at its diagonal-Gram closed form.
\item Structural origin of the $\sim 50$--$1{,}450\times$ speedup vs.\ scipy across realistic Gram regimes (\cref{tab:solver}). The absolute upper end ($1{,}442\times$, row~2: heterogeneous $K{=}10$) and the production-realistic upper end ($1{,}344\times$, row~4: $K{=}50$ $\perp$-active single small-norm row) are within $\sim 7\%$ of each other; the lower end ($48\times$, row~1) is the well-conditioned baseline where scipy's TRF still benefits from a forward-Jacobian fast path.
\end{itemize}

\paragraph{fp64 throughout.}
\begin{itemize}
\item Gram computation, residual, line search, final per-task contribution sum: all fp64.
\item Dead-task: $\|\bg_i^c\|^2$ in fp32 can underflow ($10^{-22}$ grad $\to 10^{-44}$, below fp32 smallest normal). fp64 retains $\sim 308$-decimal-digit dynamic range — distinguishes ``literally zero'' from ``very small but informative''.
\item Soft cap $w_y \leq y_{\max} = 50$ keeps $w \leq e^{50} \approx 5\!\times\!10^{21}$ — fp64-safe.
\item Combine step $\sum_i w_i \bg_i^c$ also fp64: intermediate products at $w_i \sim 10^{22}$ can transiently overflow fp32 even when final sum is bounded.
\end{itemize}

\paragraph{Wall-clock benchmark protocol.}
\begin{itemize}
\item Synthetic Gaussian per-task gradients $\bg_i \in \R^{D}$ with $D = 1024$ and $\alpha = 1$ (matching the per-side default of \cref{sec:method:framework}); Gram $\bG^c = [\bg_i^\top \bg_j]_{ij}$ assembled in fp64.
\item Five regimes spanning the (per-task norm spread, off-diagonal structure) plane: well-conditioned $K{=}50$ at spread $1\times$; heterogeneous $K{=}10$ with random-direction small-norm rows at $10^6\times$ spread; and three high-spread $K{=}50$ regimes at $10^8\times$ spread distinguished by structure --- random Gaussian (rand, synthetic stress test), orthogonal-to-active single small-norm row ($\perp$, the actual Meta-World PPO geometry under PopArt-rescale events), and orthogonal-to-active 25\% small-norm rows ($\perp$ 25\%, production worst case).
\item Four solvers per regime: scipy \texttt{least\_squares} (verbatim FairGrad-paper reference, \texttt{bounds=(0,$\infty$)}, no $\max(x,\varepsilon)$ clip), SGD$+$momentum (verbatim FairGrad-RL release, \texttt{lr=0.1}, \texttt{momentum=0.5}, \texttt{niter=20}, track-best), $y$-Newton with uniform cold start, and $y$-Newton with the diagonal warm-start of \cref{prop:warmstart} (our production solver).
\item Each (solver $\times$ regime) cell timed $7\times$ with warm timing (first run discarded); we report the median.
\item Speedup $=$ ratio scipy/Newton-warm in median per-call wall clock, with active-task contributions matched within $1\%$.
\item Numbers in \cref{tab:solver}. A separate production-measured benchmark on real MT10/MT50 PPO Gram captured during training reproduces the same warm-vs-scipy ordering at $1.5$--$3\times$ faster scipy absolute times.
\end{itemize}

\begin{table}[!htbp]
\centering
\caption{\textbf{FairGrad solver benchmark against verbatim upstream baselines.} Median wall time per solve (ms) and KKT residual $\|\bG^c\bw - \bw^{-1/\alpha}\|$ at termination across the five Gram regimes of \cref{app:solver}, with convergence target $\eta = 10^{-2}$. The $^{\times}$ marker flags that the FG-RL SGD solver exhausts its upstream-fixed \texttt{niter=20} budget without reaching $\eta$ on every regime; the Speedup column reports $y$-Newton-warm vs.\ \texttt{scipy}. Row~3 is the rand-struct $K{=}50$ stress test where warm-start cross-coupling stalls above $\eta$, so its $95\times$ entry is parenthetical; on row~5, warm $y$-Newton's residual $4.7\cdot 10^{-5}$ undercuts \texttt{scipy}'s $5.3\cdot 10^{-3}$ plateau.}
\label{tab:solver}
\scriptsize
\setlength{\tabcolsep}{3pt}
\resizebox{\textwidth}{!}{\begin{tabular}{l r r r r | r r r r | c}
\toprule
& \multicolumn{4}{c|}{Wall time per solve (ms)} & \multicolumn{4}{c|}{KKT residual at termination} & \\
Regime & scipy (FG paper) & SGD (FG-RL) & y-N (cold) & y-N (warm) & scipy (FG paper) & SGD (FG-RL) & y-N (cold) & y-N (warm) & Speedup \\
\midrule
$K{=}50$, spread $1\times$ & 14.5 & 10.6$^{\times}$ & 0.43 & 0.30 & $1.8{\cdot}10^{-13}$ & $5.0{\cdot}10^{1}$ & $3.4{\cdot}10^{-7}$ & $2.4{\cdot}10^{-5}$ & $48\times$ \\
$K{=}10$, rand, spread $10^{6}$ & 195 & 9.0$^{\times}$ & 0.64 & 0.14 & $7.7{\cdot}10^{-10}$ & $5.2{\cdot}10^{6}$ & $2.7{\cdot}10^{-9}$ & $4.8{\cdot}10^{-5}$ & $\mathbf{1{,}442\times}$ \\
$K{=}50$, rand, spread $10^{8}$ (stress test) & 733 & 10.1$^{\times}$ & 0.84 & 7.7 & $5.6{\cdot}10^{-7}$ & $5.6{\cdot}10^{1}$ & $6.2{\cdot}10^{-3}$ & $2.0{\cdot}10^{-2}$ (rand stall) & $95\times$ (rand-stall) \\
$K{=}50$, $\perp$, spread $10^{8}$ (production) & 288 & 10.2$^{\times}$ & 0.84 & 0.22 & $9.3{\cdot}10^{-11}$ & $5.6{\cdot}10^{1}$ & $6.2{\cdot}10^{-3}$ & $3.2{\cdot}10^{-5}$ & $\mathbf{1{,}344\times}$ \\
$K{=}50$, $\perp$ 25\%, spread $10^{8}$ & 27.4 & 10.2$^{\times}$ & 0.93 & 0.21 & $5.3{\cdot}10^{-3}$ & $7.8{\cdot}10^{1}$ & $7.9{\cdot}10^{-3}$ & $\mathbf{4.7{\cdot}10^{-5}}$ & $131\times$ \\
\bottomrule
\end{tabular}}
\end{table}

\FloatBarrier
\section{Additional Experimental Results}\label{app:results}
\subsection{Detailed ablation results}\label{app:ablations}

This appendix expands the main-text ablation summary (\cref{sec:exp:ablations}) with the underlying tables and per-cell analysis.

\paragraph{Critic-side intervention cube (V2).} \cref{tab:intervention-isolation} reports the $2{\times}2$ PopArt$\times$FG-c sub-cube of the intervention cube (LN-c on, actor$=$default); the full $5$-rung cumulative ladder and per-axis mechanism mapping are in \cref{tab:worst-k,fig:ladder} and \cref{sec:exp:ablations}. Two scope caveats matter for reading the table: the predicted PopArt-marginal collapse signature is at training-outcome scale, not a direct verification of the per-minibatch identity, and the table does not show that every non-aggregation channel of PopArt has zero effect. The synthetic per-minibatch verification of \cref{thm:scale-invariance} is in \cref{fig:f4} (\cref{app:mech-figs}); the training-time analog at the aggregator-output level is the (c)$\equiv$(d) coincidence in \cref{fig:f1}. The complementary regime where PopArt's non-rescaling channels become load-bearing is the V1 sweep below.

\begin{table}[!htbp]
\centering
\caption{\textbf{PopArt $\times$ $\alpha{=}1$ FG-c interaction on MT50.} The $2{\times}2$ sub-cube isolates whether adding FG-c absorbs PopArt's worst-10 marginal at training-outcome scale, the signature predicted by \cref{cor:fg-popart}; the deltas below the table confirm collapse from rows~1$\to$2 to rows~3$\to$4. All rows use actor$=$default with LN-c on; 10 seeds, final checkpoint, 100M env steps.}
\label{tab:intervention-isolation}
\small
\begin{tabular}{l c c r@{\,}l r@{\,}l}
\toprule
Configuration & PopArt & FG-c & \multicolumn{2}{c}{MT50 mean} & \multicolumn{2}{c}{MT50 worst-10} \\
\midrule
+LN-c (base) & -- & -- & $87.5$ & {\fontsize{5pt}{6pt}\selectfont\textcolor{gray}{$\pm\phantom{0}2.4$}} & $42.3$ & {\fontsize{5pt}{6pt}\selectfont\textcolor{gray}{$\pm10.1$}} \\
+LN-c +PopArt & \checkmark & -- & $88.9$ & {\fontsize{5pt}{6pt}\selectfont\textcolor{gray}{$\pm\phantom{0}2.3$}} & $48.9$ & {\fontsize{5pt}{6pt}\selectfont\textcolor{gray}{$\pm\phantom{0}9.0$}} \\
+LN-c +FG-c & -- & \checkmark & $89.7$ & {\fontsize{5pt}{6pt}\selectfont\textcolor{gray}{$\pm\phantom{0}1.6$}} & $52.6$ & {\fontsize{5pt}{6pt}\selectfont\textcolor{gray}{$\pm\phantom{0}7.6$}} \\
+LN-c +PopArt +FG-c & \checkmark & \checkmark & $90.1$ & {\fontsize{5pt}{6pt}\selectfont\textcolor{gray}{$\pm\phantom{0}2.3$}} & $52.4$ & {\fontsize{5pt}{6pt}\selectfont\textcolor{gray}{$\pm11.3$}} \\
\bottomrule
\end{tabular}
\\[2pt]
\footnotesize
$\Delta_{\mathrm{PopArt}}$ without FG-c (rows~1$\to$2): $1.36\%\uparrow$ mean, $\mathbf{6.64\%\uparrow}$ worst-10. \;
$\Delta_{\mathrm{PopArt}}$ with FG-c (rows~3$\to$4): $0.39\%\uparrow$ mean, $\mathbf{0.22\%\downarrow}$ worst-10.
\end{table}

\paragraph{LayerNorm placement ablation.}
On MT10 with PopArt held fixed (default actor and critic combiners, 10 seeds), symmetric LN on both actor and critic reaches $93.06 \pm 4.61\%$ final mean vs.\ $96.76 \pm 1.52\%$ for critic-only LN --- a $3.70\%\downarrow$ mean drop with $\sim\!3\times$ wider seed std.
We attribute this tentatively to PPO's clipped policy ratio: LN on actor pre-activations changes the mapping from hidden features to the action-mean projection across minibatch updates, which can inflate $|\log \pi(a|s)/\pi_{\text{old}}(a|s)|$ and trigger the ratio clip more frequently; the critic has no analogous ratio constraint and is unaffected by LN-c.
Given the MT10 result we did not extend symmetric LN to a 10-seed MT50 sweep.

\paragraph{Per-side combiner asymmetry.} \cref{tab:per-side} reports the per-side cube. PCGrad-a's contribution is concentrated on the tail tasks: switching the actor combiner from default to PCGrad on $+$LN-c$+$PopArt$+$FG-c lifts MT50 worst-10 tail tasks from $52.42 \to 56.50$ ($4.08\%\uparrow$), on top of the $0.83\%\uparrow$ final-mean and worst-10 tail-tasks seed-std stabilization already reported in \cref{sec:exp:ablations,tab:worst-k}. We expect the FG-c-only properties of \cref{thm:scale-invariance} (combined-gradient norm pinned at $\sqrt{K}$ every minibatch and PopArt absorption via \cref{cor:fg-popart}) to matter more in environments with more extreme per-task return-scale heterogeneity than V2 (\citealp{metaworldplus2025} App.~E.3), even when the headline V2 FG-c vs.\ PCGrad-c gap is within seed noise; both combiners select different points in the same shared cone $\mathrm{cone}^+\{\bg_i^c\}$ (\cref{thm:reachable-set}, \cref{app:cone-containment}).

\begin{table}[!htbp]
\centering
\caption{\textbf{Per-side combiner asymmetry on MT50 (10 seeds, 100M steps).} Holding $[400\times 3]$ MLP, PopArt, and LN-c fixed, we vary only the per-side gradient combiner; the asymmetric PCGrad-a $+$ FG-c ($\alpha{=}1$) row wins, matching the diagnosis that critic gradients suffer scale spread ($497\times$, addressed by scale-invariant FG, \cref{thm:scale-invariance}) while actor gradients exhibit direction conflict at modest scale spread ($4.1\times$, addressed by PCGrad). FG-c and PCGrad-c are within seed noise (gaps $<\sigma$) and lie in the same shared cone (\cref{thm:reachable-set}); we adopt FG-c for its scale-invariance guarantee.}
\label{tab:per-side}
\small
\begin{tabular}{l l r@{\,}l r@{\,}l c}
\toprule
Actor method & Critic method & \multicolumn{2}{c}{MT50 best} & \multicolumn{2}{c}{MT50 final} & MT50 IQM \\
\midrule
default & default & $90.7$ & {\fontsize{5pt}{6pt}\selectfont\textcolor{gray}{$\pm\phantom{0}1.8$}} & $88.9$ & {\fontsize{5pt}{6pt}\selectfont\textcolor{gray}{$\pm\phantom{0}2.3$}} & $89.0$ \\
default & FG ($\alpha{=}1$) & $91.2$ & {\fontsize{5pt}{6pt}\selectfont\textcolor{gray}{$\pm\phantom{0}2.1$}} & $90.0$ & {\fontsize{5pt}{6pt}\selectfont\textcolor{gray}{$\pm\phantom{0}2.3$}} & $89.7$ \\
\textbf{PCGrad} & \textbf{FG ($\alpha{=}1$)} & $\mathbf{92.1}$ & {\fontsize{5pt}{6pt}\selectfont\textcolor{gray}{$\pm\phantom{0}1.7$}} & $\mathbf{90.9}$ & {\fontsize{5pt}{6pt}\selectfont\textcolor{gray}{$\pm\phantom{0}1.6$}} & $\mathbf{90.8}$ \\
FG ($\alpha{=}2$) & FG ($\alpha{=}1$) & $90.5$ & {\fontsize{5pt}{6pt}\selectfont\textcolor{gray}{$\pm\phantom{0}2.0$}} & $89.2$ & {\fontsize{5pt}{6pt}\selectfont\textcolor{gray}{$\pm\phantom{0}2.1$}} & $89.2$ \\
PCGrad & PCGrad & $91.9$ & {\fontsize{5pt}{6pt}\selectfont\textcolor{gray}{$\pm\phantom{0}1.5$}} & $90.5$ & {\fontsize{5pt}{6pt}\selectfont\textcolor{gray}{$\pm\phantom{0}1.4$}} & $90.4$ \\
CAGrad & CAGrad & $90.7$ & {\fontsize{5pt}{6pt}\selectfont\textcolor{gray}{$\pm\phantom{0}1.3$}} & $89.3$ & {\fontsize{5pt}{6pt}\selectfont\textcolor{gray}{$\pm\phantom{0}1.4$}} & $89.2$ \\
\bottomrule
\end{tabular}
\end{table}

\subsection{Diagnostic dump-run provenance}\label{app:diagnostic-runs}

The mechanism figures (\cref{fig:f1,fig:f3,fig:f3-early,fig:f3a,fig:f3b,fig:f4,fig:fae}) are derived from MT10 gradient-snapshot dump runs that are \emph{separate} from the headline 10-seed sweep used for the success-rate tables.
Each dump run instruments \texttt{MultiTaskPPO} to write per-task flat actor and critic gradients (plus PopArt $\sigma_i$, FairGrad weights when on, long-EMA per-task success rate) to disk every $5$ collects ($\sim\!0.5$M env steps, $\sim\!137$~MB per snapshot in fp32) for offline replay.
The figures use four such dumps, one per intervention cell:
\begin{itemize}\itemsep1pt\parskip0pt
\item \textbf{default mean aggregation $+$ PopArt $+$ no LN} --- backs the no-LN columns of \cref{fig:f3,fig:f3-early}.
\item \textbf{default mean aggregation $+$ PopArt $+$ LN-c} --- backs the LN-c columns of \cref{fig:f3,fig:f3-early}, all of \cref{fig:f3a,fig:f3b} (so the $497\times$ vs.\ $4.1\times$ critic-vs-actor spread is read off the same minibatches), and one column of \cref{fig:fae}.
\item \textbf{FG ($\alpha{=}2$ actor / $\alpha{=}1$ critic) $+$ PopArt $+$ no LN} --- backs all four panels of \cref{fig:f1} (the FG-trained source keeps every task visibly active on the log-y axis without changing the cols (a, b) per-task ratios) and one column of \cref{fig:fae}.
\item \textbf{FG ($\alpha{=}2$ actor / $\alpha{=}1$ critic) $+$ PopArt $+$ LN-c} --- backs the per-minibatch realized cell of \cref{fig:f4} (the synthetic scaling-sweep panels of \cref{fig:f4} are generated independently from a Gaussian Gram).
\end{itemize}

Cols (a, b) of \cref{fig:f1} and the no-PopArt rows of \cref{fig:f3} are obtained by $\sigma^2$-inverting the dumped PopArt-normalized gradient: under PopArt the per-task shared-critic gradient is $\bg^{\rm norm} = \bg^{\rm vanilla} / \sigma_i^2$ because the normalized critic loss is $L^c_{\rm norm} = (R - V_{\rm raw})^2 / \sigma_i^2$, so multiplying back by $\sigma_i^2$ recovers what the unmodified MT-PPO loss would have produced on the same critic-network weights.
\cref{fig:f1} cols (c, d) and \cref{fig:f4}'s training-time panel additionally rely on offline tight-tolerance FairGrad replay ($\eta\!=\!10^{-13}$ on the $y$-space Newton solver, vs.\ production training $\eta\!=\!10^{-2}$); the tight tolerance is needed only for visualization, so that \cref{cor:fg-popart}'s (c)$\equiv$(d) bit-identity is visible at fp64 floor instead of being masked by the $O(10^{-2})$ drift the production solver tolerates.

\subsection{Mechanism evidence figures}\label{app:mech-figs}

\paragraph{Critic Gram heatmaps, intervention ablation across stages (\cref{fig:f3}).}
\begin{itemize}\itemsep1pt\parskip0pt
\item PopArt (cols a$\to$b, c$\to$d) preserves off-diagonal cosines exactly (closed-form identity from per-task $\sigma^{-2}$ rescaling) and compresses the diagonal magnitude palette in the early/late stages. The MID-row lnC pair shows a transient inversion ($525\times \to 1428\times$) attributable to the Welford-accumulated $\sigma_i$ lagging mid-training return-distribution shifts.
\item LN (cols a$\to$c, b$\to$d) reduces off-diagonal direction conflict (early-stage mean off-diag $|\cos|$ from $0.34$ no-LN to $0.20$ with LN-c on the F3 default-mean dump; the matched FG-trained dump in \cref{fig:fae} reads $0.24 \to 0.10$) while leaving the diagonal magnitude palette roughly intact --- a representation-level intervention, not a target-side fix.
\item Per-task gradient-norm spread $\max_i\|\bg_i^c\|/\min_i\|\bg_i^c\|$ (vanilla, $\sigma^2$-inverted) up to $\sim 10{,}000\times$ at early training, decaying to $\sim 200\times$ by mid-training and rebounding to $\sim 700\times$ by late training under noLN --- still orders of magnitude worse than the actor's $2.7$--$4.6\times$ across the same training stages (\cref{fig:f3a}).
\end{itemize}

\begin{figure}[!htbp]
\centering
\includegraphics[width=\linewidth]{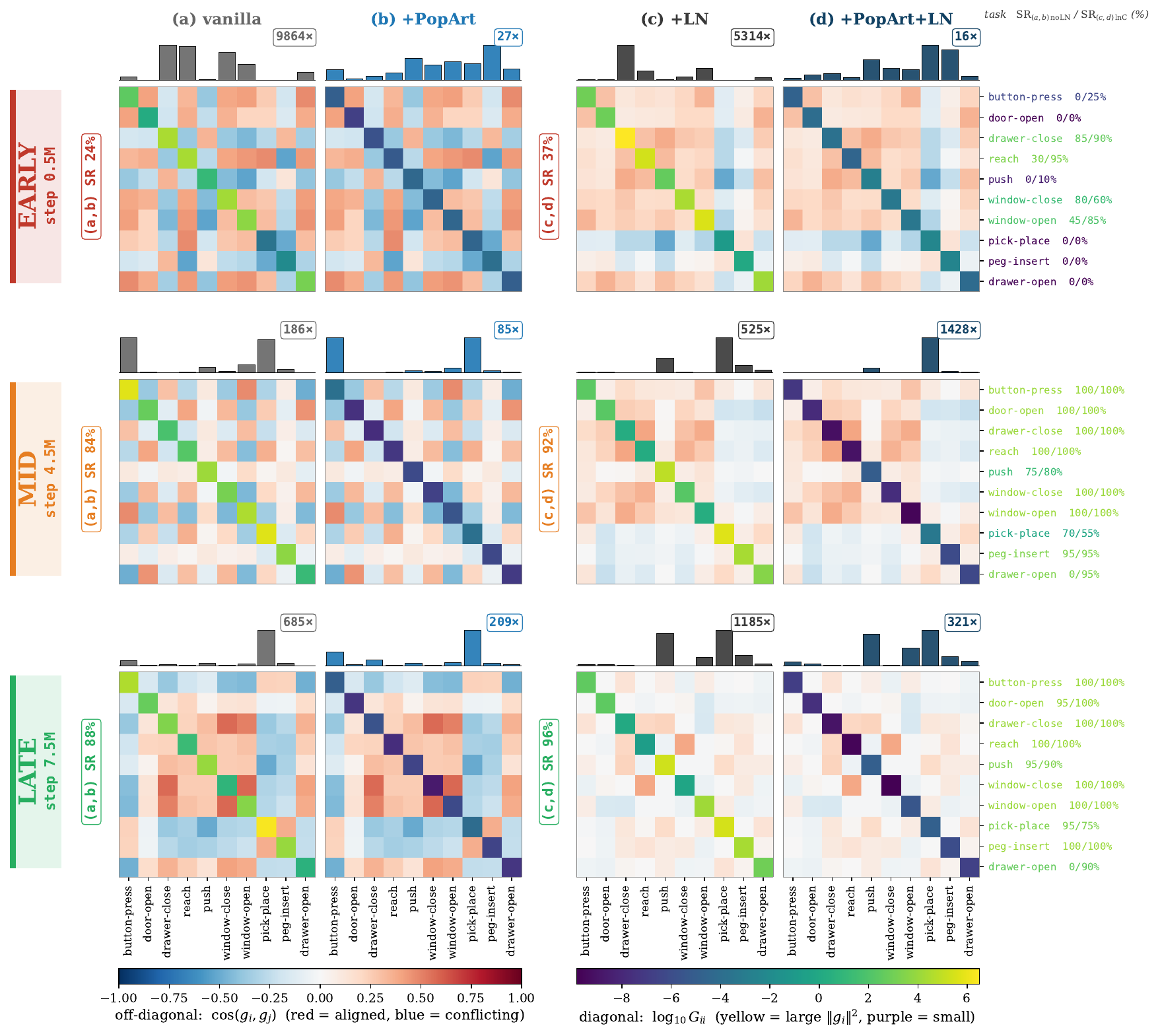}
\caption{\textbf{Critic Gram heatmap matrix: 3 stages (rows: early/mid/late) $\times$ 4 intervention conditions (cols: vanilla, +PopArt, +LN, +PopArt+LN).} Top sidebar = per-task mean-aggregator contribution $\|\bg_i\|/K = \sqrt{\bG_{ii}}/K$, in-bar badge = per-cell spread $\max_i\|\bg_i\|/\min_i\|\bg_i\|$; main grid off-diagonal = cosines $\bG_{ij}/\sqrt{\bG_{ii}\bG_{jj}}$; diagonal color = $\log_{10}\bG_{ii}$. PopArt (a$\to$b, c$\to$d) preserves cosines exactly and compresses the diagonal magnitude palette in early/late stages, with a transient mid-row spread inversion under LN-c ($525\times \to 1428\times$) tracked to Welford-lagged $\sigma_i$. LN (a$\to$c, b$\to$d) reduces off-diagonal conflict at the representation level while leaving the diagonal palette roughly intact.}
\label{fig:f3}
\end{figure}

\begin{figure}[!htbp]
\centering
\includegraphics[width=\linewidth]{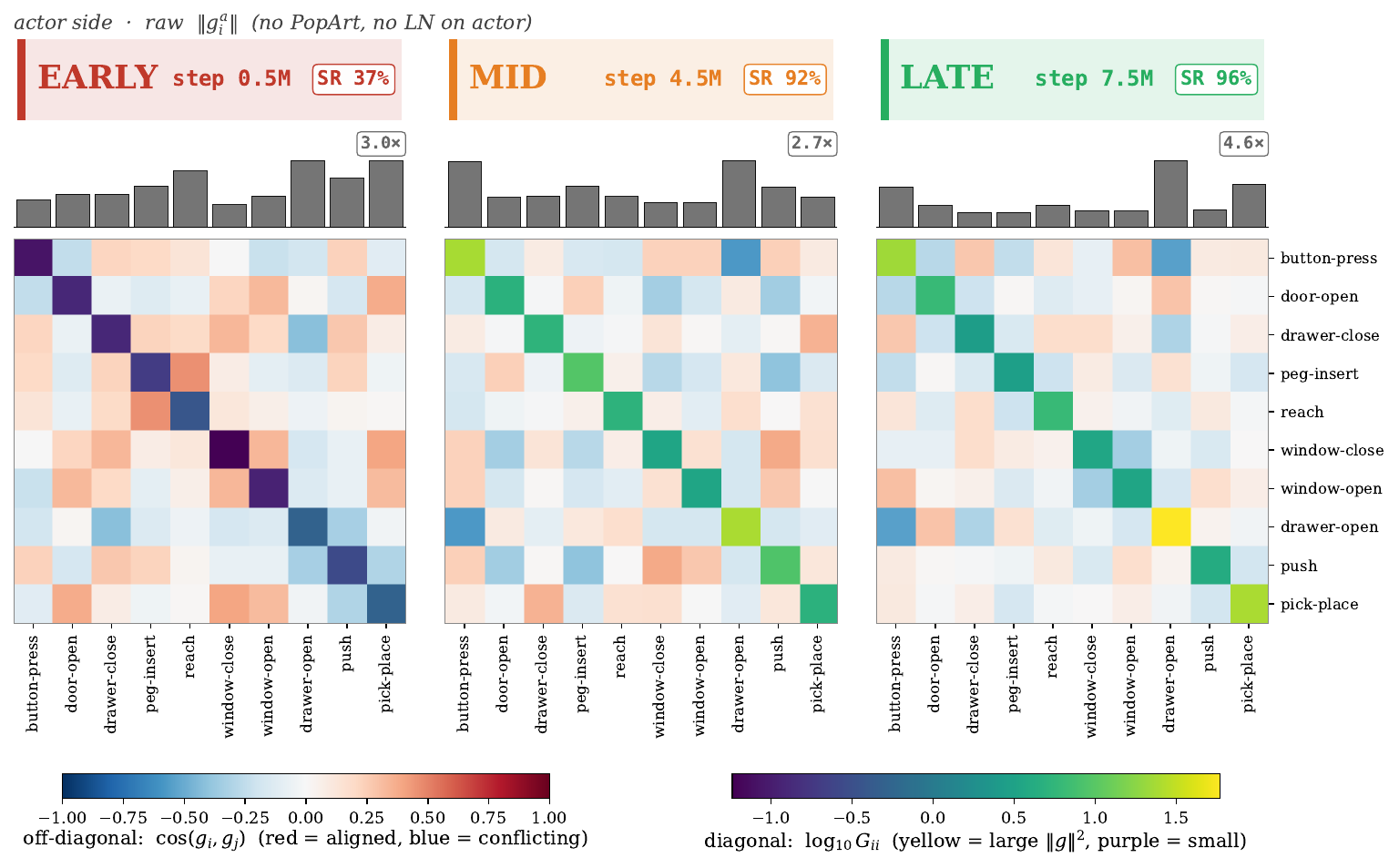}
\caption{\textbf{Actor Gram heatmaps at 3 training stages (EARLY $0.5$M / MID $4.5$M / LATE $7.5$M steps; raw $\|\bg_i^a\|$, no PopArt or LN on actor).} Top sidebar shows per-task gradient-norm bars with the in-bar badge giving the spread ratio $\max_i\|\bg_i^a\|/\min_i\|\bg_i^a\|$ ($3.0\times$, $2.7\times$, $4.6\times$); main grid plots off-diagonal cosines $\bG_{ij}/\sqrt{\bG_{ii}\bG_{jj}}$ (red aligned, blue conflicting) with the diagonal colored by $\log_{10}\bG_{ii}$ (yellow large, purple small). The mild $2.7$--$4.6\times$ actor diagonal spread contrasts sharply with the Vanilla MT-PPO critic's $9864\times$ early spread in \cref{fig:f3}, while off-diagonals shift from early inter-task conflict to late near-orthogonal alignment.}
\label{fig:f3a}
\end{figure}

\paragraph{LN $\times$ FG: algebraically orthogonal critic-Gram health metrics (\cref{fig:fae}).}
Each mechanism targets one health metric and leaves the other un-redressed --- LN-c modifies $\bG^c$ but does not systematically reduce top-1 weighted contribution mass (which drifts $0.45 \to 0.57$ on this matched-seed pair, within cross-snapshot variability), while FG-c modifies only $\bw$ and so cannot affect cosine structure at all. Composing both is required.

\begin{figure}[!htbp]
\centering
\includegraphics[width=\linewidth]{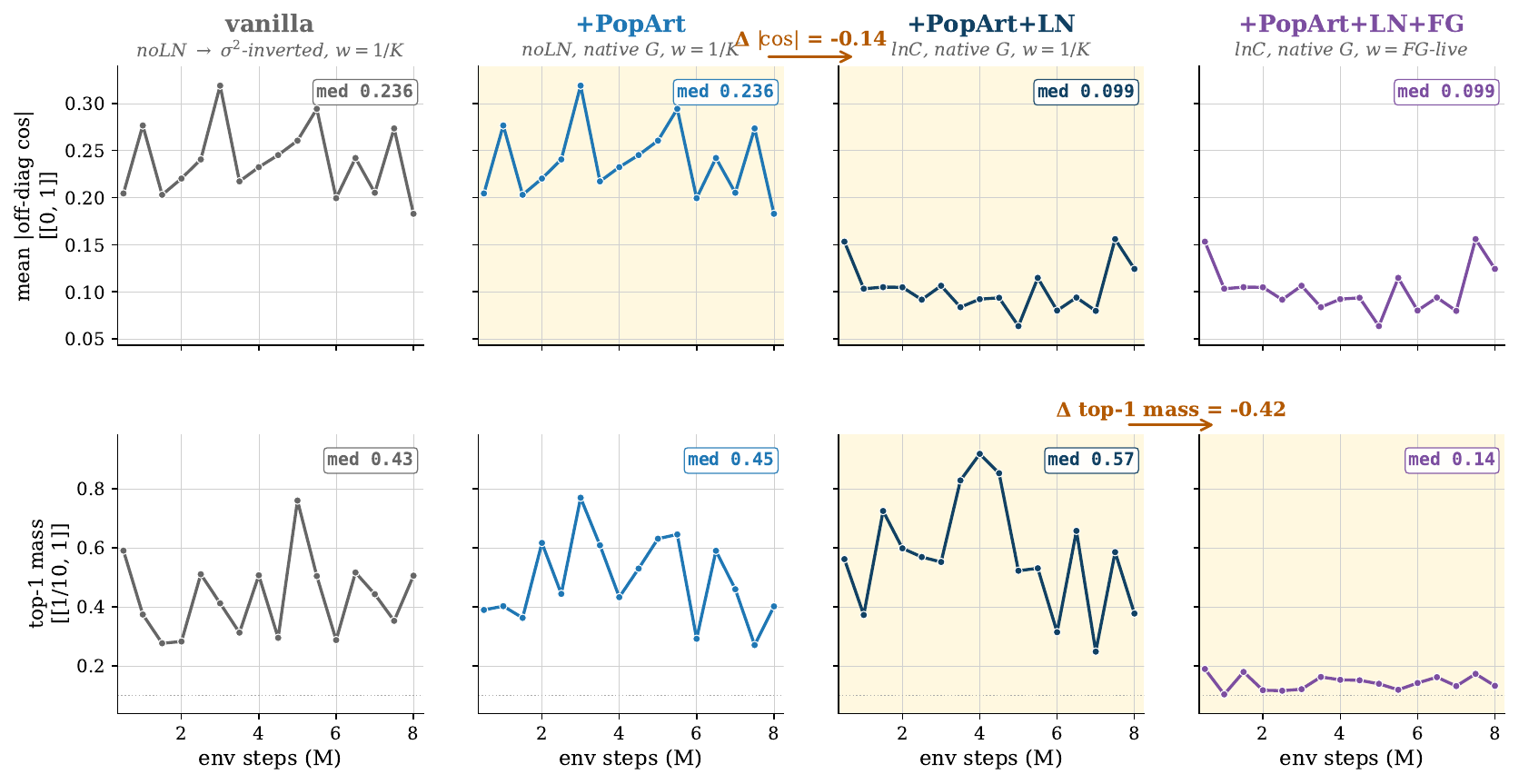}
\caption{\textbf{LN and FG act on algebraically orthogonal critic-Gram health metrics.} Top row: mean off-diagonal $|\cos|$; bottom row: top-1 weighted contribution mass. LN (col 2$\to$3) cuts off-diag $|\cos|$ from $0.24$ to $0.10$ without reducing top-1 mass; FG (col 3$\to$4) cuts top-1 mass to $0.14$ while leaving cosine structure unchanged (FG modifies only $\bw$, not $\bG$).}
\label{fig:fae}
\end{figure}

\paragraph{FairGrad ($\alpha{=}1$) equalizes per-task contribution; per-minibatch verification of \cref{cor:fg-popart} (\cref{fig:f1}).}
The (c)$\equiv$(d) coincidence is the training-time empirical realization of \cref{cor:fg-popart}, complementary to the synthetic per-minibatch verification of \cref{thm:scale-invariance} in \cref{fig:f4}; offline tight-tolerance replay methodology in \cref{app:diagnostic-runs}.

\begin{figure}[!htbp]
\centering
\includegraphics[width=\linewidth]{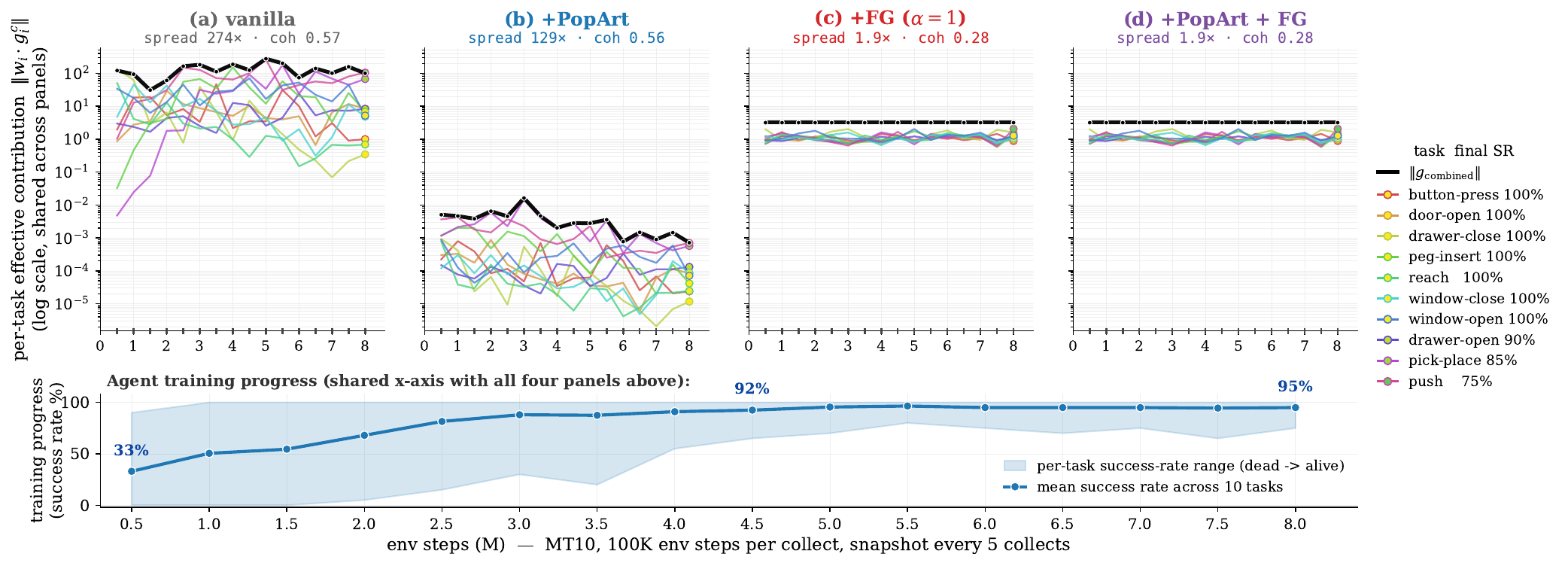}
\caption{\textbf{FairGrad ($\alpha{=}1$) equalizes per-task contribution; PopArt's task-wise rescaling effect is absorbed at the aggregator (\cref{cor:fg-popart}).} Per-task effective contribution $\|w_i \bg_i^c\|$ across training, with spreads (a) Vanilla MT-PPO $\sim 274\times$, (b) +PopArt $\sim 129\times$, (c) +FG ($\alpha{=}1$) on raw gradients $\sim 1.9\times$, and (d) +FG ($\alpha{=}1$) on PopArt-rescaled gradients $\sim 1.9\times$. Panels (c)$\equiv$(d) is the per-minibatch corollary of \cref{thm:scale-invariance}.}
\label{fig:f1}
\end{figure}

\paragraph{\cref{thm:scale-invariance} verification at fp64 noise floor (\cref{fig:f4}).}
The fp64 $L_2$ floor of $\sim 10^{-15}$ in panel (b) makes the invariance \emph{bit-identical}, not merely directional --- a stronger statement than panel (a)'s $\cos = 1$ alone. \cref{fig:f1} is the training-time analog across MT10 minibatches.

\begin{figure}[!htbp]
\centering
\includegraphics[width=\linewidth]{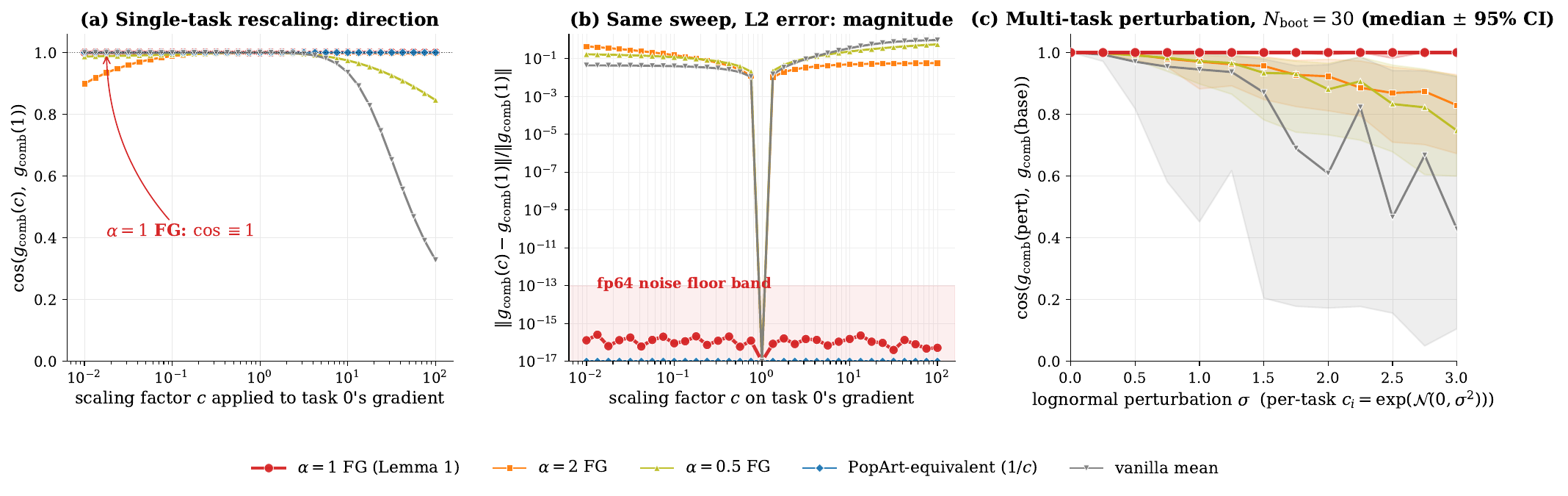}
\caption{\textbf{\cref{thm:scale-invariance} verified at fp64 noise floor.} (a) Single-task scaling sweep over $c \in [10^{-2}, 10^2]$: $\alpha{=}1$ FG (red) holds $\cos = 1$ exactly while every other combiner dips. (b) Same sweep, $L_2$ distance: $\alpha{=}1$ FG sits in the fp64 noise floor ($\sim 10^{-15}$), making the invariance bit-identical rather than merely directional. (c) Multi-task lognormal perturbation $c_i = \exp(\mathcal{N}(0, \sigma^2))$ ($\sigma \in [0, 3]$, $N_{\text{boot}}{=}30$, median $\pm$ 95\% CI): $\alpha{=}1$ FG stays pinned at $\cos \approx 1$ across the entire range while all other combiners disperse as $\sigma$ grows.}
\label{fig:f4}
\end{figure}

\FloatBarrier  %

\subsection{Headline V2 learning curves}\label{app:v2-curves}

\cref{fig:headline-curves} provides time-resolved learning curves for the headline numbers in \cref{tab:combined-headline-tail,sec:exp:headline}: 10-seed mean $\pm$ std training trajectories on MT10 (panel a, $20$M total env steps) and MT50 (panel b, $100$M), with each panel's right-margin endpoint markers reproducing the published ARS-paper V2 baselines at the full-budget endpoint. The MT50 worst-$k$ tail companion is the page-1 teaser \cref{fig:train-curve}; the V1 sibling of this figure is \cref{fig:train-curve-v1} below.

\begin{figure}[!htbp]
\centering
\includegraphics[width=\linewidth]{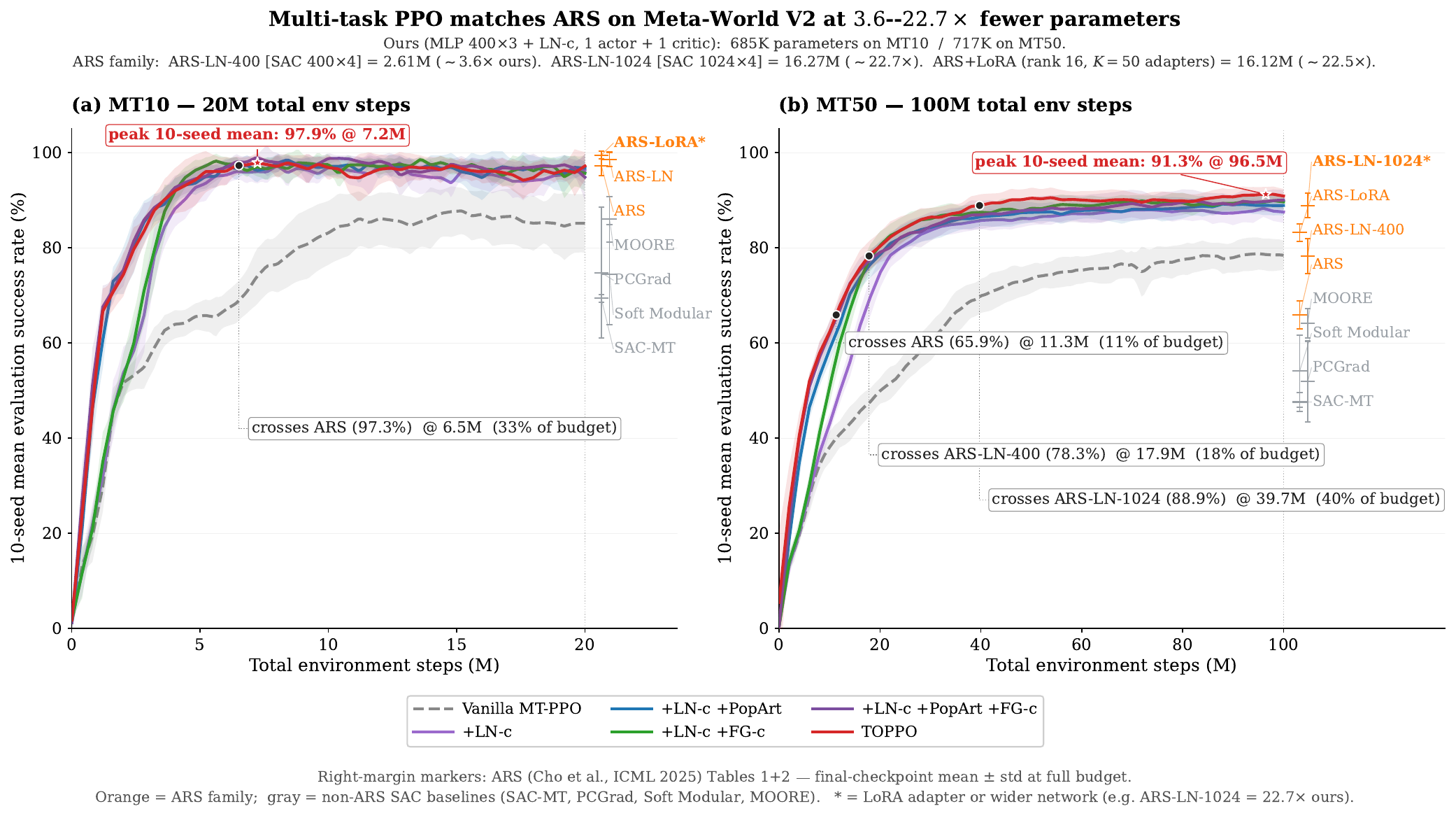}
\caption{\textbf{MT10 + MT50 V2 learning curves vs.\ ARS-paper baselines.} 10-seed mean $\pm$ std for the 6 V2 intervention rungs (foreground) on the shared MLP $400{\times}3$ backbone (685K params on MT10, 717K on MT50); right-margin markers are full-budget endpoints from~\citet{cho2025ars} Tables~1--2. \textbf{(a)} MT10 ($20$M steps): TOPPO peaks at $97.9 \pm 2.1\%$ at $7.2$M, matching ARS's full-budget mean ($97.3\%$) at $33\%$ of budget. \textbf{(b)} MT50 ($100$M steps): TOPPO peaks at $91.3 \pm 1.6\%$ at $96.5$M, crossing every published baseline within $\sim\!40\%$ of its own budget (ARS at $11\%$, ARS-LN-400 at $18\%$, ARS-LN-1024 at $40\%$). Asterisks ($*$) mark methods exceeding our parameter budget (e.g.\ ARS-LN-1024 $= 22.7\times$ ours); the MT50 worst-$k$ tail companion is page-1 teaser \cref{fig:train-curve}.}
\label{fig:headline-curves}
\end{figure}

\subsection{V1 reward sensitivity}\label{app:v1-sensitivity}
Meta-World V1 rewards have substantially wider per-task scale heterogeneity than V2~\citep{metaworldplus2025}, exactly the regime in which \cref{cor:fg-popart}'s scope language predicts PopArt's non-rescaling channels (Welford running statistics, Adam moments, downstream actor effects through $\hat A_i$) become individually load-bearing. We re-ran the same intervention-stack rungs as \cref{tab:intervention-isolation} plus the PCGrad-a headline on MT50 V1 (10 seeds each, 100M env steps); the upper panel of \cref{tab:v1-sensitivity} reports our final-checkpoint mean $\pm$ std alongside the matched V2 number, and the lower panel carries the Meta-World+ MT50 SAC-family V1 baselines~\citep{metaworldplus2025}. The matched-metric V1 head-to-head is in \cref{tab:v1-mwp}.

\begin{table}[!htbp]
\centering
\caption{\textbf{V1-reward sensitivity on MT50.} Intervention rungs of \cref{tab:intervention-isolation} re-run under the original Meta-World V1 rewards (10 seeds, 100M env steps, final-checkpoint IQM\,$\pm$\,std at $25\%$ trim under the Meta-World+ rliable protocol of~\citet{metaworldplus2025}); the lower panel transcribes the matched-protocol V1 SAC-family numbers from \citet{metaworldplus2025} Appendix~D. V1's wider per-task return-scale heterogeneity flips the \cref{cor:fg-popart} signature: PopArt's marginal over LN-c$+$FG-c grows to $17.01\%\uparrow$ (rows 4$\to$5) versus $0.32\%\uparrow$ on V2, the regime in which PopArt's non-rescaling channels become load-bearing rather than absorbed at the FG-c aggregator, and TOPPO clears every V1 SAC baseline by $\geq\!17.6$~pp under matched-metric IQM (\cref{tab:v1-mwp}).}
\label{tab:v1-sensitivity}
\small
\begin{tabular}{l r@{\,}l r@{\,}l c}
\toprule
Configuration & \multicolumn{2}{c}{V1 IQM} & \multicolumn{2}{c}{V2 IQM} & V1 $\to$ V2 gap \\
\midrule
\multicolumn{6}{l}{\emph{Our intervention cube (10 seeds, IQM\,$\pm$\,std)}} \\
\midrule
Vanilla MT-PPO & $45.2$ & {\fontsize{5pt}{6pt}\selectfont\textcolor{gray}{$\pm\phantom{0}2.4$}} & $78.0$ & {\fontsize{5pt}{6pt}\selectfont\textcolor{gray}{$\pm\phantom{0}3.3$}} & $+32.76$ \\
+LN-c & $64.3$ & {\fontsize{5pt}{6pt}\selectfont\textcolor{gray}{$\pm\phantom{0}3.1$}} & $87.7$ & {\fontsize{5pt}{6pt}\selectfont\textcolor{gray}{$\pm\phantom{0}2.4$}} & $+23.43$ \\
+LN-c +PopArt & $76.5$ & {\fontsize{5pt}{6pt}\selectfont\textcolor{gray}{$\pm\phantom{0}2.0$}} & $89.0$ & {\fontsize{5pt}{6pt}\selectfont\textcolor{gray}{$\pm\phantom{0}2.3$}} & $+12.51$ \\
+LN-c +FG-c & $62.0$ & {\fontsize{5pt}{6pt}\selectfont\textcolor{gray}{$\pm\phantom{0}2.7$}} & $89.4$ & {\fontsize{5pt}{6pt}\selectfont\textcolor{gray}{$\pm\phantom{0}1.6$}} & $+27.39$ \\
+LN-c +PopArt +FG-c & $79.0$ & {\fontsize{5pt}{6pt}\selectfont\textcolor{gray}{$\pm\phantom{0}1.2$}} & $89.7$ & {\fontsize{5pt}{6pt}\selectfont\textcolor{gray}{$\pm\phantom{0}2.3$}} & $+10.70$ \\
\textbf{TOPPO} & $\mathbf{79.3}$ & {\fontsize{5pt}{6pt}\selectfont\textcolor{gray}{$\pm\phantom{0}1.3$}} & $\mathbf{90.8}$ & {\fontsize{5pt}{6pt}\selectfont\textcolor{gray}{$\pm\phantom{0}1.6$}} & $\mathbf{+11.49}$ \\
\midrule
\multicolumn{6}{l}{\emph{Meta-World+ MT50 SAC-family baselines (10 seeds, IQM\,$\pm$\,std)~\citep{metaworldplus2025}}} \\
\midrule
MT-MH-SAC & $31.9$ & {\fontsize{5pt}{6pt}\selectfont\textcolor{gray}{$\pm\phantom{0}2.7$}} & $54.9$ & {\fontsize{5pt}{6pt}\selectfont\textcolor{gray}{$\pm\phantom{0}1.1$}} & $+22.98$ \\
Soft Modularization & $60.6$ & {\fontsize{5pt}{6pt}\selectfont\textcolor{gray}{$\pm\phantom{0}3.9$}} & $65.8$ & {\fontsize{5pt}{6pt}\selectfont\textcolor{gray}{$\pm\phantom{0}1.9$}} & $+5.19$ \\
MOORE & $61.8$ & {\fontsize{5pt}{6pt}\selectfont\textcolor{gray}{$\pm\phantom{0}2.7$}} & $72.0$ & {\fontsize{5pt}{6pt}\selectfont\textcolor{gray}{$\pm\phantom{0}2.9$}} & $+10.22$ \\
PaCo & $18.6$ & {\fontsize{5pt}{6pt}\selectfont\textcolor{gray}{$\pm15.4$}} & $58.9$ & {\fontsize{5pt}{6pt}\selectfont\textcolor{gray}{$\pm\phantom{0}4.6$}} & $+40.30$ \\
PCGrad-SAC & $45.8$ & {\fontsize{5pt}{6pt}\selectfont\textcolor{gray}{$\pm\phantom{0}5.9$}} & $69.5$ & {\fontsize{5pt}{6pt}\selectfont\textcolor{gray}{$\pm\phantom{0}2.7$}} & $+23.65$ \\
\bottomrule
\end{tabular}
\\[2pt]
\footnotesize
$\Delta_{\mathrm{PopArt}}$ IQM on V1 without FG-c (rows~2$\to$3): $\mathbf{12.26\%\uparrow}$. \;
$\Delta_{\mathrm{PopArt}}$ IQM on V1 with FG-c (rows~4$\to$5): $\mathbf{17.01\%\uparrow}$.
On V2 the same two deltas in IQM are $1.35\%\uparrow$ and $0.32\%\uparrow$ respectively.
\end{table}

The Corollary~1 signature inversion (PopArt-marginal sign flip across V1/V2) and the V1 head-to-head numbers vs.\ Meta-World+ baselines are reported in \cref{sec:exp:ablations,tab:v1-sensitivity,tab:v1-mwp}. Two appendix-only mechanism points add to that.

\textbf{Why FG-c without PopArt hurts on V1.} Without PopArt's value-target normalization, V1's wide per-task return scales push some tasks' raw critic gradients into the small-norm regime that the $\alpha{=}1$ KKT translates into very large $w_i$; the post-aggregation $\|\cdot\|_2 \leq 1$ clip then redistributes a fixed budget across an unbalanced set of effective contributions, and the net update lands no better than the unweighted mean. Once PopArt is on (rows 3$\to$5 of \cref{tab:v1-sensitivity}), FG-c's V1 IQM marginal recovers to $2.46\%\uparrow$. PopArt's value-target normalization is what keeps the per-task gradient norms inside the regime where the $\alpha{=}1$ KKT solves to a sensible $\bw$.

\textbf{V1$\to$V2 gap: narrower for our PopArt-equipped rows than for the SAC baselines.} Under matched final-checkpoint IQM (\cref{tab:v1-sensitivity}), the headline V1$\to$V2 gap is $11.49\%\uparrow$; every fully-stacked row with PopArt on lies in a tight $10\%\uparrow$ to $13\%\uparrow$ band ($12.51\%\uparrow$ for +LN-c+PopArt, $10.70\%\uparrow$ for +LN-c+PopArt+FG-c, $11.49\%\uparrow$ for headline). The Meta-World+ SAC-family baselines span $5.19\%\uparrow$ (Soft Modularization) to $40.30\%\uparrow$ (PaCo) under the same V1$\to$V2 reward switch, with three of the five methods showing $\geq 20\%\uparrow$ jumps (MTMHSAC $22.98\%\uparrow$, PaCo $40.30\%\uparrow$, PCGrad-SAC $23.65\%\uparrow$). The narrowness is specifically PopArt's contribution, not a property of the PPO backbone: our \emph{no}-PopArt rows (vanilla $32.76\%\uparrow$, +LN-c $23.43\%\uparrow$, +LN-c+FG-c $27.39\%\uparrow$) sit in the same $23\%\uparrow$ to $33\%\uparrow$ band as the SAC baselines. \cref{fig:train-curve-v1} shows the corresponding V1 learning curves on the same axes for each intervention rung.

\begin{figure}[!htbp]
\centering
\includegraphics[width=\linewidth]{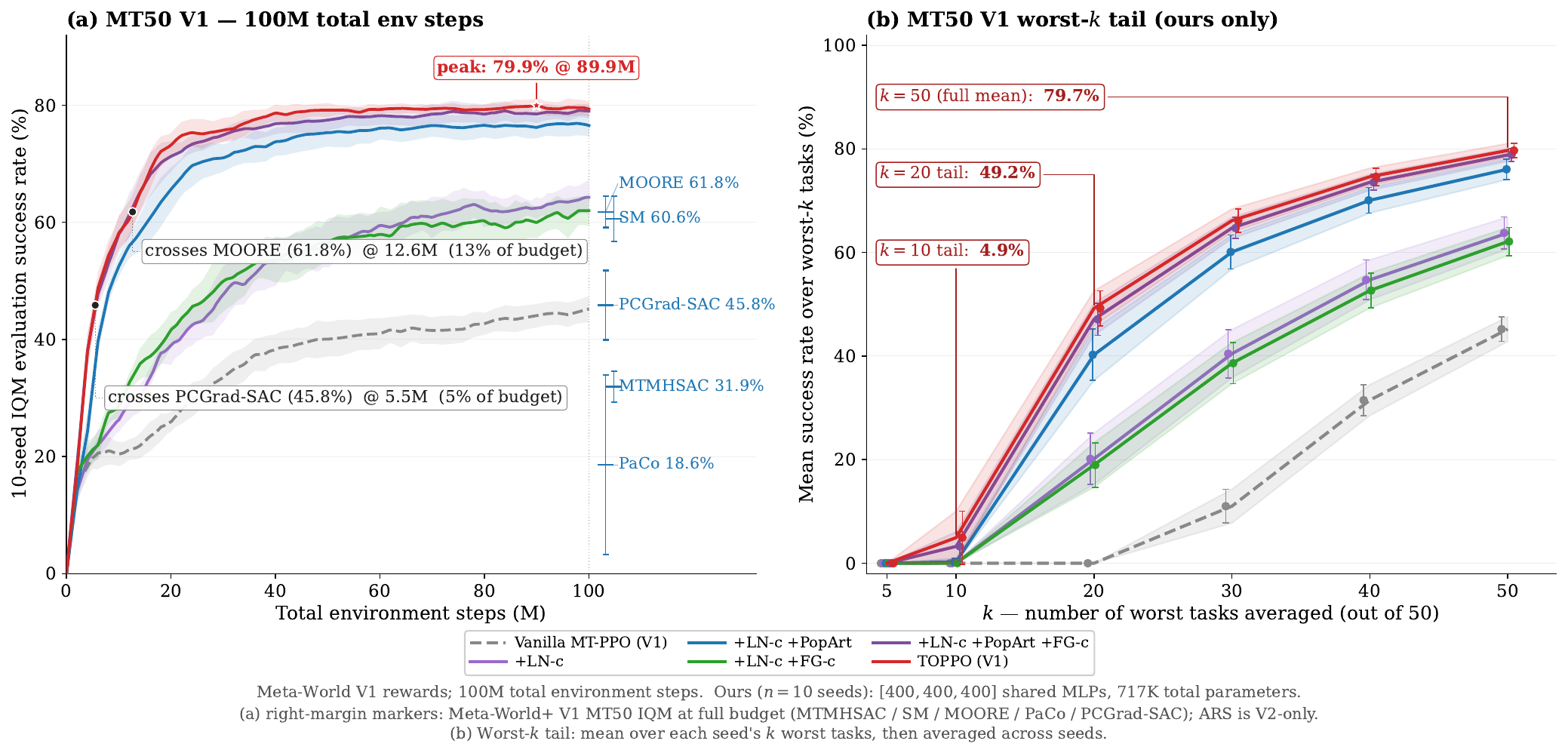}
\caption{\textbf{MT50 V1 learning curves: intervention stack vs.\ Meta-World+ V1 baselines.} 10-seed IQM with sample std band on MT50 V1 (100M env steps) for the six intervention rungs of \cref{tab:v1-sensitivity}, with Meta-World+ V1 IQM baselines~\citep{metaworldplus2025} as right-margin markers (ARS is V2-only). The headline TOPPO V1 stack reaches IQM $79.3\%$ (\cref{tab:v1-mwp}), $+17.6$~pp over MOORE ($61.8\%$). Unlike V2, the V1 ladder makes PopArt the dominant step and FG-c without PopArt tracks LN-c, while the full TOPPO stack remains best.}
\label{fig:train-curve-v1}
\end{figure}

\paragraph{V1 worst-$k$ tail tasks.}
The mean/IQM sensitivity above hides how much of V1 remains concentrated in the tail, so \cref{tab:worst-k-v1} reports the same bottom-$k$ diagnostic used for the V2 headline.
\begin{table}[!htbp]
\centering
\caption{\textbf{Worst-$k$ tail-tasks success rate on Meta-World MT50 under V1 rewards.}
V1 sibling of the V2 worst-$k$ panel in \cref{tab:worst-k}: per-seed worst-$k$ average for $k\in\{5,10,20,30,40,50\}$, then mean $\pm$ std across $10$ seeds at the $[400{\times}3]$ MLP backbone, final-checkpoint, across the six intervention-stack rungs from Vanilla MT-PPO to TOPPO.
PopArt is the dominant tail step on V1 and FG-c lifts the tail only once PopArt is on; TOPPO is best at every $k\!\geq\!10$.
\textbf{Bold} = column maximum; $k{=}5$ is degenerate ($0\pm 0$ across all rows) and unbolded.}
\label{tab:worst-k-v1}
\scriptsize
\setlength{\tabcolsep}{3.5pt}
\resizebox{\textwidth}{!}{\begin{tabular}{l r@{\,}l r@{\,}l r@{\,}l r@{\,}l r@{\,}l r@{\,}l}
\toprule
 & \multicolumn{12}{c}{Average Worst-$k$ Success Rate (\%)} \\
\cmidrule(lr){2-13}
Algorithm & \multicolumn{2}{c}{$k{=}5$} & \multicolumn{2}{c}{$k{=}10$} & \multicolumn{2}{c}{$k{=}20$} & \multicolumn{2}{c}{$k{=}30$} & \multicolumn{2}{c}{$k{=}40$} & \multicolumn{2}{c}{$k{=}50$} \\
\midrule
Vanilla MT-PPO & $0.0$ & {\fontsize{5pt}{6pt}\selectfont\textcolor{gray}{$\pm\phantom{0}0.0$}} & $0.0$ & {\fontsize{5pt}{6pt}\selectfont\textcolor{gray}{$\pm\phantom{0}0.0$}} & $0.0$ & {\fontsize{5pt}{6pt}\selectfont\textcolor{gray}{$\pm\phantom{0}0.0$}} & $11.0$ & {\fontsize{5pt}{6pt}\selectfont\textcolor{gray}{$\pm\phantom{0}3.3$}} & $31.5$ & {\fontsize{5pt}{6pt}\selectfont\textcolor{gray}{$\pm\phantom{0}3.0$}} & $45.2$ & {\fontsize{5pt}{6pt}\selectfont\textcolor{gray}{$\pm\phantom{0}2.4$}} \\
+LN-c & $0.0$ & {\fontsize{5pt}{6pt}\selectfont\textcolor{gray}{$\pm\phantom{0}0.0$}} & $0.0$ & {\fontsize{5pt}{6pt}\selectfont\textcolor{gray}{$\pm\phantom{0}0.1$}} & $20.2$ & {\fontsize{5pt}{6pt}\selectfont\textcolor{gray}{$\pm\phantom{0}5.0$}} & $40.5$ & {\fontsize{5pt}{6pt}\selectfont\textcolor{gray}{$\pm\phantom{0}4.7$}} & $54.7$ & {\fontsize{5pt}{6pt}\selectfont\textcolor{gray}{$\pm\phantom{0}3.8$}} & $63.8$ & {\fontsize{5pt}{6pt}\selectfont\textcolor{gray}{$\pm\phantom{0}3.1$}} \\
+PopArt +LN-c & $0.0$ & {\fontsize{5pt}{6pt}\selectfont\textcolor{gray}{$\pm\phantom{0}0.0$}} & $0.3$ & {\fontsize{5pt}{6pt}\selectfont\textcolor{gray}{$\pm\phantom{0}0.7$}} & $40.2$ & {\fontsize{5pt}{6pt}\selectfont\textcolor{gray}{$\pm\phantom{0}5.0$}} & $60.1$ & {\fontsize{5pt}{6pt}\selectfont\textcolor{gray}{$\pm\phantom{0}3.3$}} & $70.1$ & {\fontsize{5pt}{6pt}\selectfont\textcolor{gray}{$\pm\phantom{0}2.5$}} & $76.1$ & {\fontsize{5pt}{6pt}\selectfont\textcolor{gray}{$\pm\phantom{0}2.0$}} \\
+FG-c +LN-c & $0.0$ & {\fontsize{5pt}{6pt}\selectfont\textcolor{gray}{$\pm\phantom{0}0.0$}} & $0.0$ & {\fontsize{5pt}{6pt}\selectfont\textcolor{gray}{$\pm\phantom{0}0.1$}} & $19.0$ & {\fontsize{5pt}{6pt}\selectfont\textcolor{gray}{$\pm\phantom{0}4.3$}} & $38.6$ & {\fontsize{5pt}{6pt}\selectfont\textcolor{gray}{$\pm\phantom{0}4.0$}} & $52.6$ & {\fontsize{5pt}{6pt}\selectfont\textcolor{gray}{$\pm\phantom{0}3.4$}} & $62.1$ & {\fontsize{5pt}{6pt}\selectfont\textcolor{gray}{$\pm\phantom{0}2.7$}} \\
+PopArt +LN-c +FG-c & $0.0$ & {\fontsize{5pt}{6pt}\selectfont\textcolor{gray}{$\pm\phantom{0}0.0$}} & $3.2$ & {\fontsize{5pt}{6pt}\selectfont\textcolor{gray}{$\pm\phantom{0}2.8$}} & $47.1$ & {\fontsize{5pt}{6pt}\selectfont\textcolor{gray}{$\pm\phantom{0}3.1$}} & $64.7$ & {\fontsize{5pt}{6pt}\selectfont\textcolor{gray}{$\pm\phantom{0}2.1$}} & $73.5$ & {\fontsize{5pt}{6pt}\selectfont\textcolor{gray}{$\pm\phantom{0}1.5$}} & $78.8$ & {\fontsize{5pt}{6pt}\selectfont\textcolor{gray}{$\pm\phantom{0}1.2$}} \\
\textbf{PCGrad-a + FG-c +PopArt +LN-c (TOPPO)} & $0.0$ & {\fontsize{5pt}{6pt}\selectfont\textcolor{gray}{$\pm\phantom{0}0.0$}} & $\mathbf{4.9}$ & {\fontsize{5pt}{6pt}\selectfont\textcolor{gray}{$\pm\phantom{0}5.1$}} & $\mathbf{49.2}$ & {\fontsize{5pt}{6pt}\selectfont\textcolor{gray}{$\pm\phantom{0}3.4$}} & $\mathbf{66.1}$ & {\fontsize{5pt}{6pt}\selectfont\textcolor{gray}{$\pm\phantom{0}2.2$}} & $\mathbf{74.6}$ & {\fontsize{5pt}{6pt}\selectfont\textcolor{gray}{$\pm\phantom{0}1.7$}} & $\mathbf{79.7}$ & {\fontsize{5pt}{6pt}\selectfont\textcolor{gray}{$\pm\phantom{0}1.3$}} \\
\bottomrule
\end{tabular}}
\end{table}

\cref{tab:worst-k-v1} is the V1 sibling of \cref{tab:worst-k}; row-by-row reading is in the table. Three appendix-only points: (i)~FG-c without PopArt is worst-$k$-indistinguishable from LN-c at every $k$, the worst-$k$ analog of the FG-c-needs-PopArt-on-V1 finding from the mean column. (ii)~The dead-task floor on V1 is thicker than V2 (V1 headline $k{=}10$ is $4.9 \pm 5.1$ vs.\ V2 $56.5$), reflecting that the worst-10 V1 tail tasks have low-density reward shaping that no aggregator can synthesize from a stale critic. (iii)~$k{=}5$ is uniformly $0\pm 0$ across all six rows (every V1 seed has $\geq\!5$ tasks at zero), which is reported as a degenerate column and not bolded.

The V1 per-task table (\cref{tab:per-task-mt50-v1}) and per-task learning curves (\cref{fig:per-task-curves-mt50-v1}) are reported alongside the V2 per-task in \cref{app:per-task}; they expose the FG-c-without-PopArt regression and the thicker V1 dead-task floor row-by-row.

\FloatBarrier  %

\subsection{Per-task success rates}\label{app:per-task}

This appendix reports final-checkpoint per-task success ($10$ seeds, mean $\pm$ std) and the matching per-task learning curves for the three reported regimes: MT10 V2 (\cref{tab:per-task-mt10,fig:per-task-curves-mt10-v2}), MT50 V2 (\cref{tab:per-task-mt50,fig:per-task-curves-mt50-v2}), and MT50 V1 (\cref{tab:per-task-mt50-v1,fig:per-task-curves-mt50-v1}). All tables are sorted worst-first by the TOPPO column; the six columns trace the same cumulative intervention build-up used throughout the paper (Vanilla MT-PPO $\to$ +LN-c $\to$ +LN-c+PopArt $\to$ +LN-c+FG-c $\to$ +LN-c+PopArt+FG-c $\to$ TOPPO). Column definitions and bimodal-row ($\sigma > \mu$) reading guidance are in each table caption. The companion learning-curve figures share the same seeds and intervention ladder, so the trajectory leading to each table cell can be read directly --- which tasks the Vanilla baseline never lifts off zero, at what step the LN-c rung begins to make hard tasks tractable, and where TOPPO's PopArt + FG-c + PCGrad-a stack pulls ahead.

\paragraph{MT50 V2 (\cref{tab:per-task-mt50,fig:per-task-curves-mt50-v2}).} Of the $5$ hardest tasks under vanilla MT-PPO (assembly $0.0\%$, disassemble $0.2\%$, shelf-place $3.2\%$, stick-pull $8.6\%$, pick-place-wall $19.6\%$), $3/5$ recover to $\geq\!70\%$ under TOPPO (shelf-place, stick-pull, pick-place-wall); disassemble partially recovers ($0.2 \to 19.6\%$); only assembly-v3 remains $0\%$ (community-acknowledged unsolvable). TOPPO regresses on push-back ($30.0 \to 10.0\%$, bimodal $\sigma > \mu$). MT10 V2 saturates near-ceiling under any rung beyond Vanilla MT-PPO (\cref{fig:per-task-curves-mt10-v2}), so the headline-level $4{\to}6$-rung gap is visible only on \texttt{pick-place-v3} and \texttt{peg-insert-side-v3}; MT10 is reported for completeness while MT50 carries the load-bearing per-task evidence.

\paragraph{MT50 V1 (\cref{tab:per-task-mt50-v1,fig:per-task-curves-mt50-v1}).} The same drilldown under V1 rewards exposes two regressions diagnosed in \cref{app:v1-sensitivity}. (i)~\emph{FG-c without PopArt}: the +LN-c+FG-c column tracks +LN-c rather than +LN-c+PopArt on most mid-difficulty tasks (e.g.\ \texttt{coffee-pull}, \texttt{lever-pull}, \texttt{handle-pull-side}), the per-task realization of the $-2.29\%\downarrow$ V1 IQM regression. (ii)~\emph{Thicker dead-task floor}: every rung is pinned at $0\%$ on \texttt{hand-insert}, \texttt{handle-pull}, \texttt{peg-insert-side}, \texttt{stick-pull}, \texttt{stick-push}, accounting for the $k{=}5$ degenerate column of \cref{tab:worst-k-v1}. Where PopArt does lift LN-c, the gain is concentrated on the same mid-difficulty tasks where TOPPO V1 also pulls ahead, so V1 is the regime in which PopArt's non-rescaling channels become individually load-bearing rather than absorbed at the FG-c aggregator (\cref{cor:fg-popart}).

\begin{table}[!htbp]
\centering
\caption{\textbf{Per-task final-checkpoint success on MT10 V2 (10-seed mean $\pm$ std).} Rows are Meta-World MT10 tasks sorted worst-first by TOPPO; the six columns are the cumulative V2 intervention rungs Vanilla MT-PPO $\to$ $+$LN-c $\to$ $+$LN-c$+$PopArt $\to$ $+$LN-c$+$FG-c ($\alpha{=}1$) $\to$ $+$LN-c$+$PopArt$+$FG-c $\to$ TOPPO ($+$PCGrad-a), matching the curves in \cref{fig:per-task-curves-mt10-v2}. Rows with std $>$ mean (e.g.\ \texttt{drawer-open-v3} Vanilla, $20.0 \pm 42.2$) are bimodal across seeds and should be read as a frequency-of-success summary.}
\label{tab:per-task-mt10}
\scriptsize
\setlength{\tabcolsep}{4pt}
\resizebox{\textwidth}{!}{\begin{tabular}{l r@{\,}l r@{\,}l r@{\,}l r@{\,}l r@{\,}l r@{\,}l}
\toprule
Task & \multicolumn{2}{c}{Vanilla MT-PPO} & \multicolumn{2}{c}{+ LN-c} & \multicolumn{2}{c}{+ LN-c + PopArt} & \multicolumn{2}{c}{+ LN-c + FG-c} & \multicolumn{2}{c}{+ LN-c + PopArt + FG-c} & \multicolumn{2}{c}{\textbf{TOPPO}} \\
\midrule
\texttt{pick-place-v3} & $62.8$ & {\fontsize{5pt}{6pt}\selectfont\textcolor{gray}{$\pm32.8$}} & $86.2$ & {\fontsize{5pt}{6pt}\selectfont\textcolor{gray}{$\pm20.0$}} & $86.0$ & {\fontsize{5pt}{6pt}\selectfont\textcolor{gray}{$\pm11.7$}} & $85.2$ & {\fontsize{5pt}{6pt}\selectfont\textcolor{gray}{$\pm17.2$}} & $80.8$ & {\fontsize{5pt}{6pt}\selectfont\textcolor{gray}{$\pm17.3$}} & $83.8$ & {\fontsize{5pt}{6pt}\selectfont\textcolor{gray}{$\pm20.9$}} \\
\texttt{push-v3} & $83.2$ & {\fontsize{5pt}{6pt}\selectfont\textcolor{gray}{$\pm27.1$}} & $81.6$ & {\fontsize{5pt}{6pt}\selectfont\textcolor{gray}{$\pm28.4$}} & $87.6$ & {\fontsize{5pt}{6pt}\selectfont\textcolor{gray}{$\pm19.5$}} & $79.8$ & {\fontsize{5pt}{6pt}\selectfont\textcolor{gray}{$\pm25.2$}} & $81.8$ & {\fontsize{5pt}{6pt}\selectfont\textcolor{gray}{$\pm24.2$}} & $90.8$ & {\fontsize{5pt}{6pt}\selectfont\textcolor{gray}{$\pm14.8$}} \\
\texttt{peg-insert-side-v3} & $98.0$ & {\fontsize{5pt}{6pt}\selectfont\textcolor{gray}{$\pm\phantom{0}5.7$}} & $97.4$ & {\fontsize{5pt}{6pt}\selectfont\textcolor{gray}{$\pm\phantom{0}4.2$}} & $99.0$ & {\fontsize{5pt}{6pt}\selectfont\textcolor{gray}{$\pm\phantom{0}1.4$}} & $94.8$ & {\fontsize{5pt}{6pt}\selectfont\textcolor{gray}{$\pm10.5$}} & $85.8$ & {\fontsize{5pt}{6pt}\selectfont\textcolor{gray}{$\pm25.6$}} & $98.4$ & {\fontsize{5pt}{6pt}\selectfont\textcolor{gray}{$\pm\phantom{0}3.7$}} \\
\texttt{button-press-topdown-v3} & $99.8$ & {\fontsize{5pt}{6pt}\selectfont\textcolor{gray}{$\pm\phantom{0}0.6$}} & $98.6$ & {\fontsize{5pt}{6pt}\selectfont\textcolor{gray}{$\pm\phantom{0}3.3$}} & $97.0$ & {\fontsize{5pt}{6pt}\selectfont\textcolor{gray}{$\pm\phantom{0}5.9$}} & $97.6$ & {\fontsize{5pt}{6pt}\selectfont\textcolor{gray}{$\pm\phantom{0}5.4$}} & $100.0$ & {\fontsize{5pt}{6pt}\selectfont\textcolor{gray}{$\pm\phantom{0}0.0$}} & $99.4$ & {\fontsize{5pt}{6pt}\selectfont\textcolor{gray}{$\pm\phantom{0}1.3$}} \\
\texttt{window-open-v3} & $98.4$ & {\fontsize{5pt}{6pt}\selectfont\textcolor{gray}{$\pm\phantom{0}5.1$}} & $99.4$ & {\fontsize{5pt}{6pt}\selectfont\textcolor{gray}{$\pm\phantom{0}1.3$}} & $99.4$ & {\fontsize{5pt}{6pt}\selectfont\textcolor{gray}{$\pm\phantom{0}1.3$}} & $99.4$ & {\fontsize{5pt}{6pt}\selectfont\textcolor{gray}{$\pm\phantom{0}1.3$}} & $100.0$ & {\fontsize{5pt}{6pt}\selectfont\textcolor{gray}{$\pm\phantom{0}0.0$}} & $99.6$ & {\fontsize{5pt}{6pt}\selectfont\textcolor{gray}{$\pm\phantom{0}1.3$}} \\
\texttt{door-open-v3} & $89.6$ & {\fontsize{5pt}{6pt}\selectfont\textcolor{gray}{$\pm31.5$}} & $100.0$ & {\fontsize{5pt}{6pt}\selectfont\textcolor{gray}{$\pm\phantom{0}0.0$}} & $100.0$ & {\fontsize{5pt}{6pt}\selectfont\textcolor{gray}{$\pm\phantom{0}0.0$}} & $100.0$ & {\fontsize{5pt}{6pt}\selectfont\textcolor{gray}{$\pm\phantom{0}0.0$}} & $100.0$ & {\fontsize{5pt}{6pt}\selectfont\textcolor{gray}{$\pm\phantom{0}0.0$}} & $100.0$ & {\fontsize{5pt}{6pt}\selectfont\textcolor{gray}{$\pm\phantom{0}0.0$}} \\
\texttt{drawer-close-v3} & $100.0$ & {\fontsize{5pt}{6pt}\selectfont\textcolor{gray}{$\pm\phantom{0}0.0$}} & $100.0$ & {\fontsize{5pt}{6pt}\selectfont\textcolor{gray}{$\pm\phantom{0}0.0$}} & $100.0$ & {\fontsize{5pt}{6pt}\selectfont\textcolor{gray}{$\pm\phantom{0}0.0$}} & $100.0$ & {\fontsize{5pt}{6pt}\selectfont\textcolor{gray}{$\pm\phantom{0}0.0$}} & $100.0$ & {\fontsize{5pt}{6pt}\selectfont\textcolor{gray}{$\pm\phantom{0}0.0$}} & $100.0$ & {\fontsize{5pt}{6pt}\selectfont\textcolor{gray}{$\pm\phantom{0}0.0$}} \\
\texttt{drawer-open-v3} & $20.0$ & {\fontsize{5pt}{6pt}\selectfont\textcolor{gray}{$\pm42.2$}} & $99.0$ & {\fontsize{5pt}{6pt}\selectfont\textcolor{gray}{$\pm\phantom{0}3.2$}} & $98.6$ & {\fontsize{5pt}{6pt}\selectfont\textcolor{gray}{$\pm\phantom{0}4.4$}} & $100.0$ & {\fontsize{5pt}{6pt}\selectfont\textcolor{gray}{$\pm\phantom{0}0.0$}} & $100.0$ & {\fontsize{5pt}{6pt}\selectfont\textcolor{gray}{$\pm\phantom{0}0.0$}} & $100.0$ & {\fontsize{5pt}{6pt}\selectfont\textcolor{gray}{$\pm\phantom{0}0.0$}} \\
\texttt{reach-v3} & $99.8$ & {\fontsize{5pt}{6pt}\selectfont\textcolor{gray}{$\pm\phantom{0}0.6$}} & $100.0$ & {\fontsize{5pt}{6pt}\selectfont\textcolor{gray}{$\pm\phantom{0}0.0$}} & $100.0$ & {\fontsize{5pt}{6pt}\selectfont\textcolor{gray}{$\pm\phantom{0}0.0$}} & $100.0$ & {\fontsize{5pt}{6pt}\selectfont\textcolor{gray}{$\pm\phantom{0}0.0$}} & $100.0$ & {\fontsize{5pt}{6pt}\selectfont\textcolor{gray}{$\pm\phantom{0}0.0$}} & $100.0$ & {\fontsize{5pt}{6pt}\selectfont\textcolor{gray}{$\pm\phantom{0}0.0$}} \\
\texttt{window-close-v3} & $100.0$ & {\fontsize{5pt}{6pt}\selectfont\textcolor{gray}{$\pm\phantom{0}0.0$}} & $100.0$ & {\fontsize{5pt}{6pt}\selectfont\textcolor{gray}{$\pm\phantom{0}0.0$}} & $100.0$ & {\fontsize{5pt}{6pt}\selectfont\textcolor{gray}{$\pm\phantom{0}0.0$}} & $100.0$ & {\fontsize{5pt}{6pt}\selectfont\textcolor{gray}{$\pm\phantom{0}0.0$}} & $100.0$ & {\fontsize{5pt}{6pt}\selectfont\textcolor{gray}{$\pm\phantom{0}0.0$}} & $100.0$ & {\fontsize{5pt}{6pt}\selectfont\textcolor{gray}{$\pm\phantom{0}0.0$}} \\
\bottomrule
\end{tabular}}
\end{table}

\begin{figure}[!htbp]
\centering
\includegraphics[width=\linewidth]{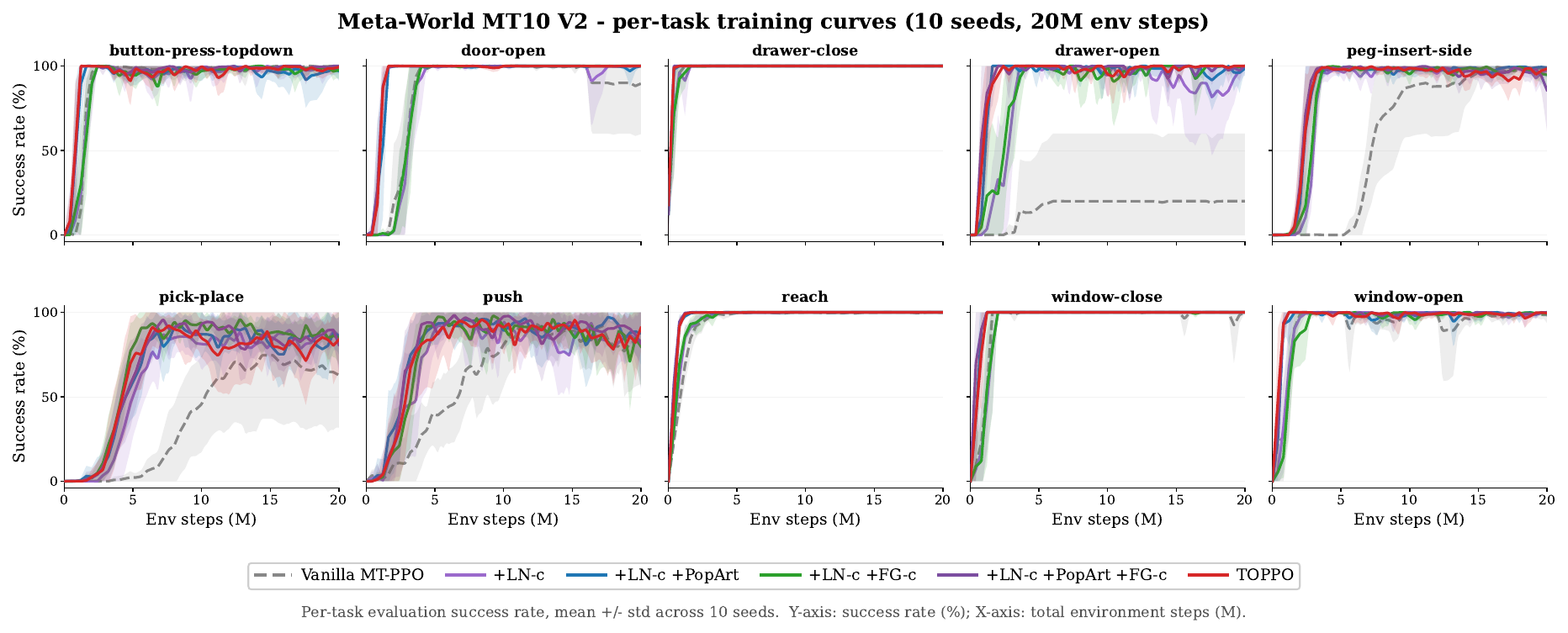}
\caption{\textbf{Per-task MT10 V2 learning curves (10 seeds, mean $\pm$ std across the 6 V2 intervention rungs).} Companion to \cref{tab:per-task-mt10}: ten task panels under the same seeds and $20$M-step budget, with Vanilla MT-PPO $\to$ TOPPO traced as the six curve series. Most tasks saturate near $100\%$ within $\sim\!5$M steps once LN-c is enabled, so the rung-to-rung gap is visible mainly on \texttt{pick-place-v3} and \texttt{peg-insert-side-v3}, while the load-bearing per-task evidence lives in the MT50 panel below.}
\label{fig:per-task-curves-mt10-v2}
\end{figure}
\begin{table}[!htbp]
\centering
\caption{\textbf{Per-task success rate on MT50 V2 (final-checkpoint, 10-seed mean $\pm$ std).} Tasks sorted ascending by TOPPO mean (worst first). Columns trace the cumulative TOPPO build-up: \emph{Vanilla MT-PPO} (default actor and critic aggregators, no PopArt, no LN), $+$ \emph{LN-c} (add LayerNorm on the critic), $+$ \emph{LN-c + PopArt} (add PopArt return rescaling), $+$ \emph{LN-c + FG-c} (instead add $\alpha{=}1$ FairGrad on the critic), $+$ \emph{LN-c + PopArt + FG-c} (combine both interventions), and \emph{TOPPO} $=$ $+$ LN-c $+$ PopArt $+$ $\alpha{=}1$ FG-c $+$ PCGrad-a (full method). Entries with std $>$ mean (e.g.\ \texttt{push-back-v3} TOPPO ($10.0 \pm 31.6$), \texttt{disassemble-v3} TOPPO ($19.6 \pm 40.8$)) have a bimodal seed distribution (some seeds at $\sim 0\%$, some near $100\%$); the reported $\pm$std should be read as a frequency-of-success summary rather than a Gaussian dispersion in those rows.}
\label{tab:per-task-mt50}
\scriptsize
\setlength{\tabcolsep}{4pt}
\resizebox{\textwidth}{!}{
}
\end{table}

\FloatBarrier
\begin{figure}[!htbp]
\centering
\includegraphics[width=0.97\linewidth,height=0.86\textheight,keepaspectratio]{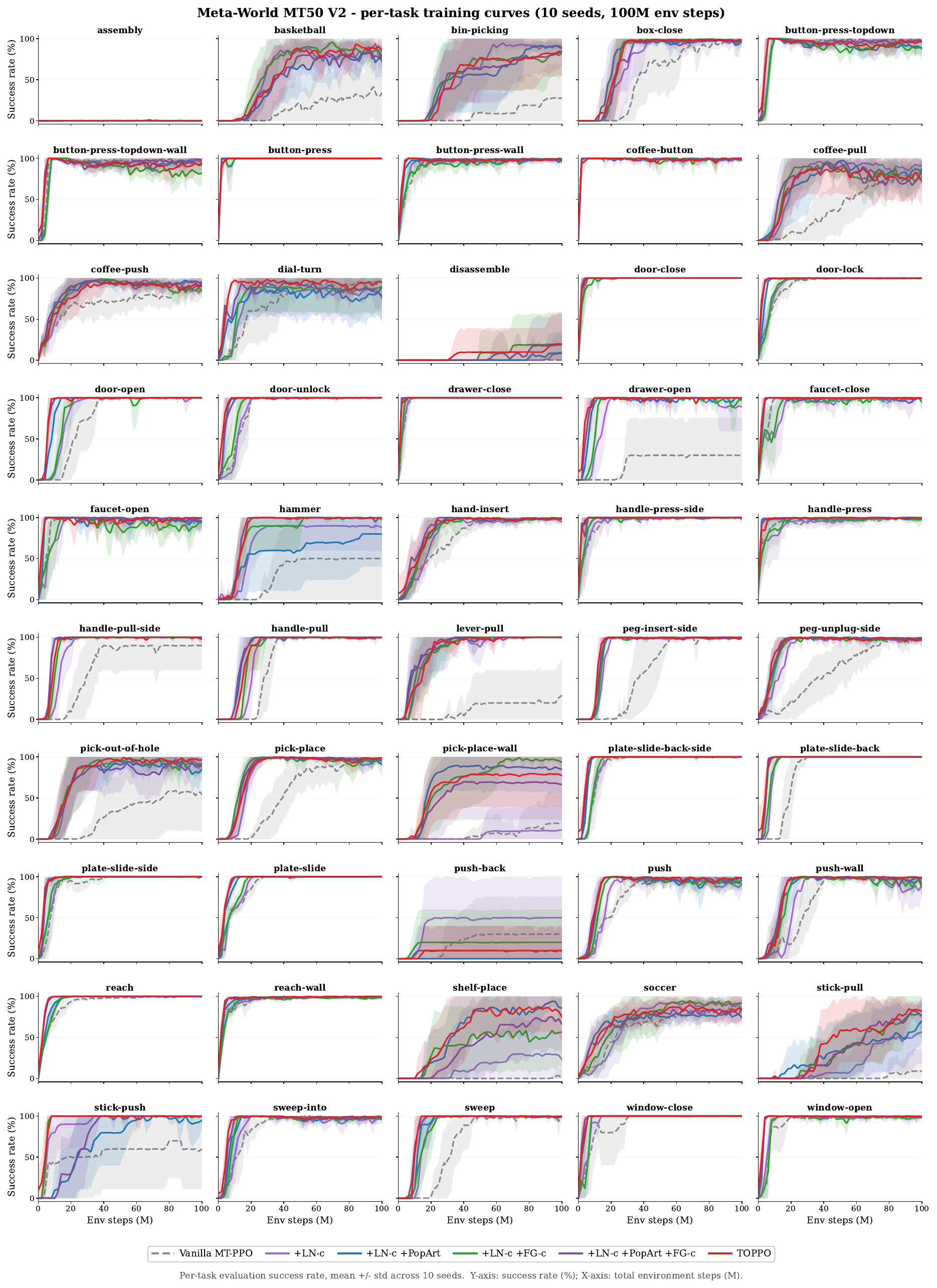}
\caption{\textbf{Per-task MT50 V2 learning curves: 10-seed mean $\pm$ std for each of the 6 intervention rungs.}
Companion to the final-checkpoint MT50 V2 column of \cref{tab:per-task-mt50}; same 10 seeds and $100$M-step budget. Vanilla MT-PPO (gray) flatlines near $0\%$ on the worst-$k$ tail (\texttt{assembly}, \texttt{disassemble}, \texttt{shelf-place}, \texttt{stick-pull}, \texttt{pick-place-wall}) where LN-c lifts most tasks around $20$--$40$M, PopArt and FG-c add headroom, and TOPPO (red) brings $3/5$ of the tail to $\geq\!70\%$ by the budget --- this figure carries the load-bearing per-task evidence for the headline $4{\to}6$-rung gap.}
\label{fig:per-task-curves-mt50-v2}
\end{figure}

\begin{table}[!htbp]
\centering
\caption{\textbf{Per-task final-checkpoint success on MT50 V1 (10-seed mean $\pm$ std).} Rows are Meta-World MT50 tasks sorted worst-first by TOPPO (V1); the six columns are the cumulative V1 intervention rungs Vanilla MT-PPO $\to$ $+$LN-c $\to$ $+$LN-c$+$PopArt $\to$ $+$LN-c$+$FG-c ($\alpha{=}1$) $\to$ $+$LN-c$+$PopArt$+$FG-c $\to$ TOPPO (V1) ($+$PCGrad-a), matching the curves in \cref{fig:per-task-curves-mt50-v1}; V1's wider per-task reward scale~\citep{metaworldplus2025} produces a thicker dead-task floor (\cref{tab:worst-k-v1}) and the FG-c-without-PopArt regression of \cref{app:v1-sensitivity}. Rows with std $>$ mean (e.g.\ \texttt{plate-slide-back-v3} Vanilla, $30.0 \pm 48.3$) are bimodal across seeds and should be read as a frequency-of-success summary.}
\label{tab:per-task-mt50-v1}
\scriptsize
\setlength{\tabcolsep}{4pt}
\resizebox{\textwidth}{!}{
}
\end{table}

\FloatBarrier
\begin{figure}[!htbp]
\centering
\includegraphics[width=0.97\linewidth,height=0.86\textheight,keepaspectratio]{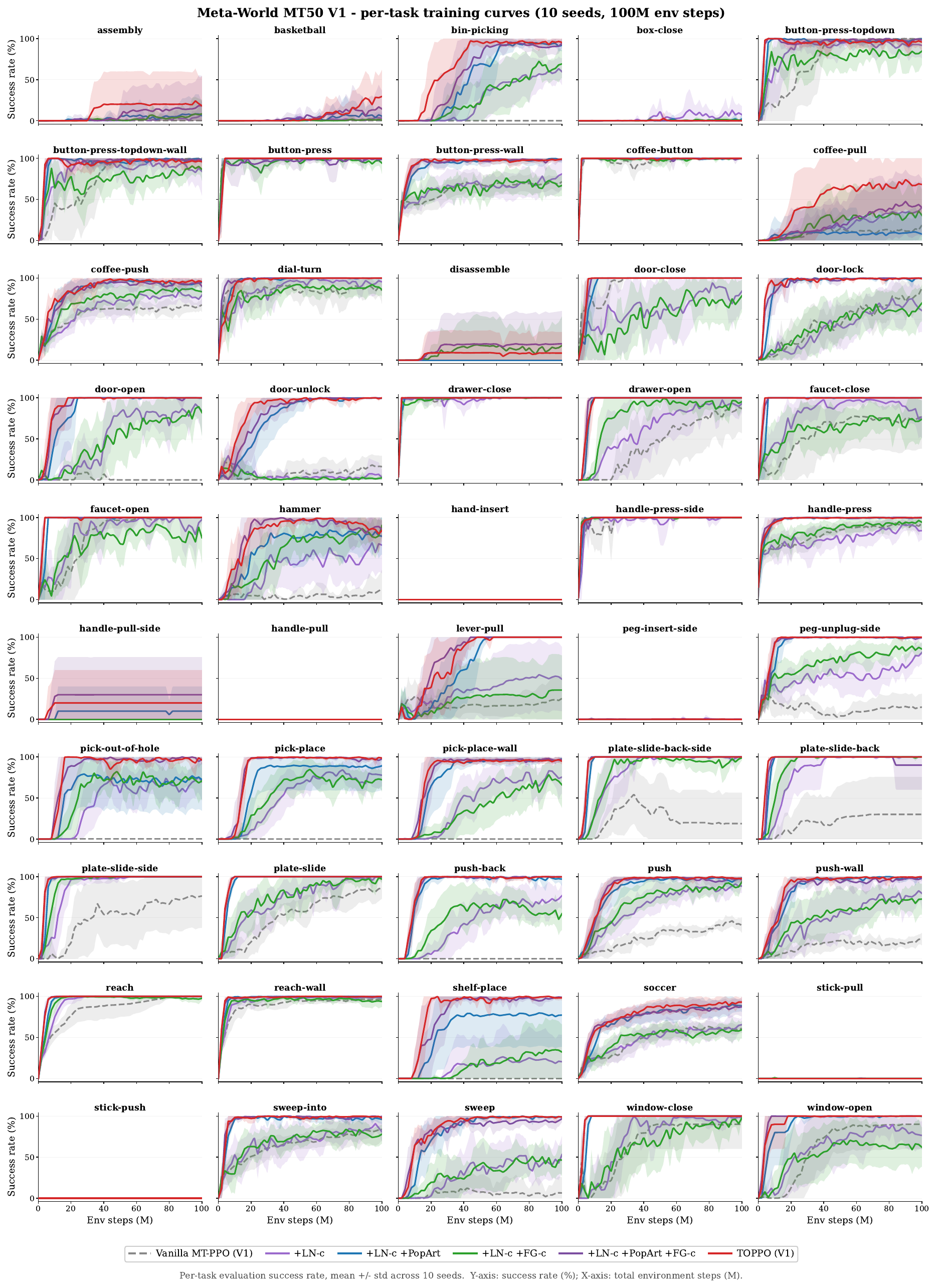}
\caption{\textbf{Per-task MT50 V1 learning curves: 10-seed mean $\pm$ std for each of the 6 V1 intervention rungs.}
Companion to \cref{tab:per-task-mt50-v1}; same seeds and $100$M-step budget as the V2 panel, run on V1 rewards under the intervention ladder of \cref{tab:v1-sensitivity}. The V1 dead-task floor is markedly thicker than V2 --- several hardest tasks remain at $0\%$ across every rung throughout training (consistent with the $k{=}5$ degenerate column of \cref{tab:worst-k-v1}), the +LN-c+FG-c (no PopArt) trace tracks +LN-c closely on most tasks (the per-task realization of the V1 FG-c-without-PopArt regression in \cref{app:v1-sensitivity}), and PopArt's marginal lift over LN-c concentrates on the same mid-difficulty tasks where TOPPO V1 also pulls ahead (e.g.\ \texttt{coffee-pull}, \texttt{lever-pull}, \texttt{handle-pull-side}).}
\label{fig:per-task-curves-mt50-v1}
\end{figure}

\FloatBarrier  %

\subsection{Cross-paper reporting-protocol map}\label{app:reporting-protocol}

Reporting conventions across the MTRL literature are not standardized: papers differ on the aggregator (final-checkpoint vs.\ mean-curve peak vs.\ best-per-seed), the dispersion measure (std vs.\ SEM vs.\ IQM), the seed count, and the V1/V2 reward semantics. \cref{tab:protocol} maps each cited baseline to its source paper's choices and \cref{tab:step-convention} verifies step-budget alignment; \cref{tab:4metric,tab:bestckpt-steps} report the four-metric breakdown and per-seed best-checkpoint-step distribution for our headline configurations so readers can convert to any baseline's protocol. Headline-relevant points beyond what those captions already disclose:

\begin{itemize}\itemsep1pt\parskip0pt
\item \textbf{SEM vs.\ std reviewer trap.} The mtrl-codebase family (CAGrad / CARE / FAMO / FairGrad) reports SEM, not std; SEM $\equiv$ std/$\sqrt{n}$ is roughly $3.16\times$ tighter at $n{=}10$ and is paired with mean-curve-peak aggregation, which is upward-biased. Worked conversions for direct verification: FairGrad MT10 $0.84 \pm 0.07$ SEM $\equiv \pm 0.221$ std; CAGrad MT10 $0.83 \pm 0.045$ SEM $\equiv \pm 0.142$ std; CAGrad MT50 $0.52 \pm 0.023$ SEM $\equiv \pm 0.073$ std.
\item \textbf{Step-budget alignment is exact.} All cited baselines except Soft Modularization (15M/50M paper-original; we cite the MW+ 20M/100M re-run instead) train at our 20M/100M budget, so direct comparison is valid without renormalization. The unannotated Meta-World+ MT50 numbers are validated against the companion \texttt{metaworld-algorithms} repo configs (including the misnamed \texttt{paco\_mt10.py} file whose internal config is verbatim MT50 K=20); details in \cref{tab:step-convention} caption.
\item \textbf{V1$\to$V2 disadvantage acknowledgment.} CAGrad / CARE / FAMO / FairGrad were pinned to a pre-V2 Meta-World commit and report V1 numbers; Meta-World+ App.~E.3 documents that all tested algorithms perform better on V2 than V1, so the V1-baseline / V2-ours comparison in \cref{tab:headline} is mildly unfavorable to us in absolute level. The V1/V2 sensitivity sweep in \cref{app:v1-sensitivity} reports both rewards on the same TOPPO stack so the cross-version effect can be read directly.
\end{itemize}

\begin{table}[!htbp]
\centering
\caption{\textbf{Step-budget alignment across cited baselines.} We normalize every paper's reporting convention --- ``per-task'', ``per mtrl-iteration'', ``epochs $\times$ batch'' --- to total environment steps (every parallel-env step counts $+1$) and verify all rows train at our $20\mathrm{M}$\,/\,$100\mathrm{M}$ budget except Soft Modularization (15M\,/\,50M paper-original; we cite its Meta-World+ re-run instead), so cross-paper comparison in \cref{tab:headline} requires no renormalization.}
\label{tab:step-convention}
\small
\resizebox{\textwidth}{!}{\begin{tabular}{l l c c l}
\toprule
Paper & Reported & Total MT10 & Total MT50 & Verification \\
\midrule
\textbf{Ours (PPO)} & 20M / 100M (total) & 20M & 100M & baseline \\
Original Meta-World (2020) & 500 epochs $\times$ batch & 20M & 100M & verified from garage scripts \\
CAGrad (2021) & 2M (mtrl iter) & 20M & 100M & $\texttt{num\_envs}=\texttt{num\_tasks}$ \\
CARE (2021) & 2M per task & 20M & 100M & explicit \\
FAMO (2023) & follows CAGrad exactly & 20M & -- & MT10 only \\
FairGrad (2024) & 2M / batch 1280 & 20M & -- & MT10 only \\
PaCo (2022) & 20M total / 100M total & 20M & 100M & explicit \\
MOORE (2024) & 20 ep $\times$ 100K vec $\times$ envs & 20M & 100M & verified from \texttt{VecCore} \\
ARS (2025) & $2{\cdot}10^7$ / $10^8$ & 20M & 100M & matches ours exactly \\
Soft Modularization (2020) & 15M / 50M & \textbf{15M} & \textbf{50M} & outlier; verified from JSON configs \\
Meta-World+ (2025) multi-task & not explicitly tabulated & 20M & 100M & inferred from companion repo (\cref{app:reporting-protocol}) \\
\bottomrule
\end{tabular}}
\end{table}

\begin{table}[!htbp]
\centering
\caption{\textbf{Reporting protocol across cited baselines.} For each cited paper we tabulate seed count, aggregator (A: final-checkpoint average; B: best-per-seed then average; C: mean-curve peak; D: late-window mean; E: other), dispersion measure (std / SEM / IQM), eval episodes per task, and reward version, so readers can align cross-paper comparisons in \cref{tab:headline,tab:worst-k,tab:v1-mwp} with the matching column of \cref{tab:4metric}. V1$^*$ marks rows inferred from a pinned pre-V2 Meta-World commit; $^\dagger$~Meta-World+ Appendix~D scalar tables carry no explicit dispersion label, the paper's main text only commits to $95\%$~CIs for the IQM-through-training plots, and the companion \texttt{metaworld-algorithms} repository contains no table-generation / cross-seed aggregation script -- so whether the reported $\pm$ is std, SEM, or a $95\%$~CI cannot be verified from primary sources. We treat it as std throughout the paper (\cref{tab:v1-mwp,tab:v1-sensitivity}, the V1 head-to-head); this is a working assumption.}
\label{tab:protocol}
\small
\setlength{\tabcolsep}{4pt}
\resizebox{\textwidth}{!}{\begin{tabular}{l c c l c c c}
\toprule
Paper & Seeds & Aggreg. & Description & Disp. & Eval eps. & Reward \\
\midrule
\textbf{Ours (PPO)} & 10 & all of A/B/C/D & best/peak/final/late, all reported & std & 50 & V2 \\
MT-SAC (orig 2020) & 10 & (E) & max-over-time per (seed,task) & none & 5 & V1 \\
Soft-Mod (2020) & 3 & (E) & final-policy 100-ep dedicated pass & (graphical only) & 100 & V1 \\
PCGrad (2020) & ? & -- & figure eyeball, no num table & -- & ? & V1 \\
CAGrad (2021) & 10 & (C) & mean-curve peak (App B.3) & \textbf{SEM} & 1 & V1* \\
CARE (2021) & 10 & (C) & mean-curve peak & SEM & 5 & V1 \\
PaCo (2022) & 10 & (A) & final 20M, explicit anti-peak & std & 10 & \textbf{V2} \\
FAMO (2023) & 10 & (C) & inherits CAGrad & SEM & 1 & V1* \\
MOORE (2024) & 10 & (A) & final ckpt (20M MT10 / 100M MT50) & std & 10 & \textbf{V2} \\
FairGrad (2024) & 10 & (C) & mean-curve peak (\S 5.3) & SEM & 1 & V1* \\
Meta-World+ (2025) & 10 & (A/B)$^?$ & IQM (final-step or best ckpt; ambiguous) & IQM, $\pm$ unspec.$^\dagger$ & 50 & V1+V2 \\
\textbf{ARS (2025)} & \textbf{8} & (A) & final EvalFinalSR & std & 10 & \textbf{V2} \\
\bottomrule
\end{tabular}}
\end{table}

\begin{table}[!htbp]
\centering
\caption{\textbf{Four-metric disclosure for our headline configurations.} Each row reports MT10 and MT50 success ($\%$, mean$\,\pm\,$std over 10 seeds at 20M / 100M total environment steps) under four aggregators that span the conventions used by cited baselines: \emph{best} (per-seed argmax then averaged), \emph{peak} (argmax of the mean curve), \emph{final} (terminal step), and \emph{late} (mean of the last 10\% of steps). Reporting all four lets readers convert our numbers to any baseline's protocol (\cref{tab:protocol}).}
\label{tab:4metric}
\scriptsize
\setlength{\tabcolsep}{3pt}
\resizebox{\textwidth}{!}{\begin{tabular}{l r@{\,}l r@{\,}l r@{\,}l r@{\,}l r@{\,}l r@{\,}l r@{\,}l r@{\,}l}
\toprule
& \multicolumn{8}{c}{MT10} & \multicolumn{8}{c}{MT50} \\
\cmidrule(lr){2-9} \cmidrule(lr){10-17}
Configuration & \multicolumn{2}{c}{best} & \multicolumn{2}{c}{peak} & \multicolumn{2}{c}{final} & \multicolumn{2}{c}{late} & \multicolumn{2}{c}{best} & \multicolumn{2}{c}{peak} & \multicolumn{2}{c}{final} & \multicolumn{2}{c}{late} \\
\midrule
Vanilla MT-PPO & $89.5$ & {\fontsize{5pt}{6pt}\selectfont\textcolor{gray}{$\pm\phantom{0}4.6$}} & $87.8$ & {\fontsize{5pt}{6pt}\selectfont\textcolor{gray}{$\pm\phantom{0}5.1$}} & $85.2$ & {\fontsize{5pt}{6pt}\selectfont\textcolor{gray}{$\pm\phantom{0}6.5$}} & $85.1$ & {\fontsize{5pt}{6pt}\selectfont\textcolor{gray}{$\pm\phantom{0}5.7$}} & $79.6$ & {\fontsize{5pt}{6pt}\selectfont\textcolor{gray}{$\pm\phantom{0}3.2$}} & $78.7$ & {\fontsize{5pt}{6pt}\selectfont\textcolor{gray}{$\pm\phantom{0}3.7$}} & $78.5$ & {\fontsize{5pt}{6pt}\selectfont\textcolor{gray}{$\pm\phantom{0}3.3$}} & $78.5$ & {\fontsize{5pt}{6pt}\selectfont\textcolor{gray}{$\pm\phantom{0}3.5$}} \\
+LN-c & $99.5$ & {\fontsize{5pt}{6pt}\selectfont\textcolor{gray}{$\pm\phantom{0}0.3$}} & $97.9$ & {\fontsize{5pt}{6pt}\selectfont\textcolor{gray}{$\pm\phantom{0}1.6$}} & $96.2$ & {\fontsize{5pt}{6pt}\selectfont\textcolor{gray}{$\pm\phantom{0}3.4$}} & $95.3$ & {\fontsize{5pt}{6pt}\selectfont\textcolor{gray}{$\pm\phantom{0}1.9$}} & $89.2$ & {\fontsize{5pt}{6pt}\selectfont\textcolor{gray}{$\pm\phantom{0}1.9$}} & $88.3$ & {\fontsize{5pt}{6pt}\selectfont\textcolor{gray}{$\pm\phantom{0}1.9$}} & $87.5$ & {\fontsize{5pt}{6pt}\selectfont\textcolor{gray}{$\pm\phantom{0}2.4$}} & $87.7$ & {\fontsize{5pt}{6pt}\selectfont\textcolor{gray}{$\pm\phantom{0}2.2$}} \\
+LN-c +PopArt & $99.9$ & {\fontsize{5pt}{6pt}\selectfont\textcolor{gray}{$\pm\phantom{0}0.2$}} & $98.5$ & {\fontsize{5pt}{6pt}\selectfont\textcolor{gray}{$\pm\phantom{0}1.2$}} & $96.8$ & {\fontsize{5pt}{6pt}\selectfont\textcolor{gray}{$\pm\phantom{0}1.5$}} & $96.4$ & {\fontsize{5pt}{6pt}\selectfont\textcolor{gray}{$\pm\phantom{0}2.3$}} & $90.7$ & {\fontsize{5pt}{6pt}\selectfont\textcolor{gray}{$\pm\phantom{0}1.8$}} & $89.2$ & {\fontsize{5pt}{6pt}\selectfont\textcolor{gray}{$\pm\phantom{0}2.2$}} & $88.9$ & {\fontsize{5pt}{6pt}\selectfont\textcolor{gray}{$\pm\phantom{0}2.3$}} & $89.0$ & {\fontsize{5pt}{6pt}\selectfont\textcolor{gray}{$\pm\phantom{0}1.8$}} \\
+LN-c +FG-c & $99.8$ & {\fontsize{5pt}{6pt}\selectfont\textcolor{gray}{$\pm\phantom{0}0.2$}} & $98.6$ & {\fontsize{5pt}{6pt}\selectfont\textcolor{gray}{$\pm\phantom{0}0.7$}} & $95.7$ & {\fontsize{5pt}{6pt}\selectfont\textcolor{gray}{$\pm\phantom{0}4.0$}} & $96.2$ & {\fontsize{5pt}{6pt}\selectfont\textcolor{gray}{$\pm\phantom{0}2.4$}} & $91.2$ & {\fontsize{5pt}{6pt}\selectfont\textcolor{gray}{$\pm\phantom{0}1.6$}} & $89.9$ & {\fontsize{5pt}{6pt}\selectfont\textcolor{gray}{$\pm\phantom{0}1.7$}} & $89.7$ & {\fontsize{5pt}{6pt}\selectfont\textcolor{gray}{$\pm\phantom{0}1.6$}} & $89.6$ & {\fontsize{5pt}{6pt}\selectfont\textcolor{gray}{$\pm\phantom{0}1.5$}} \\
+LN-c +PopArt +FG-c & $99.8$ & {\fontsize{5pt}{6pt}\selectfont\textcolor{gray}{$\pm\phantom{0}0.2$}} & $99.0$ & {\fontsize{5pt}{6pt}\selectfont\textcolor{gray}{$\pm\phantom{0}0.6$}} & $94.8$ & {\fontsize{5pt}{6pt}\selectfont\textcolor{gray}{$\pm\phantom{0}5.1$}} & $96.7$ & {\fontsize{5pt}{6pt}\selectfont\textcolor{gray}{$\pm\phantom{0}2.0$}} & $91.2$ & {\fontsize{5pt}{6pt}\selectfont\textcolor{gray}{$\pm\phantom{0}2.1$}} & $90.1$ & {\fontsize{5pt}{6pt}\selectfont\textcolor{gray}{$\pm\phantom{0}2.3$}} & $90.1$ & {\fontsize{5pt}{6pt}\selectfont\textcolor{gray}{$\pm\phantom{0}2.3$}} & $89.7$ & {\fontsize{5pt}{6pt}\selectfont\textcolor{gray}{$\pm\phantom{0}2.2$}} \\
\textbf{+PCGrad-a (TOPPO)} & $99.6$ & {\fontsize{5pt}{6pt}\selectfont\textcolor{gray}{$\pm\phantom{0}0.4$}} & $97.9$ & {\fontsize{5pt}{6pt}\selectfont\textcolor{gray}{$\pm\phantom{0}2.2$}} & $97.2$ & {\fontsize{5pt}{6pt}\selectfont\textcolor{gray}{$\pm\phantom{0}3.1$}} & $96.0$ & {\fontsize{5pt}{6pt}\selectfont\textcolor{gray}{$\pm\phantom{0}1.6$}} & $92.1$ & {\fontsize{5pt}{6pt}\selectfont\textcolor{gray}{$\pm\phantom{0}1.7$}} & $91.3$ & {\fontsize{5pt}{6pt}\selectfont\textcolor{gray}{$\pm\phantom{0}1.6$}} & $90.9$ & {\fontsize{5pt}{6pt}\selectfont\textcolor{gray}{$\pm\phantom{0}1.6$}} & $91.0$ & {\fontsize{5pt}{6pt}\selectfont\textcolor{gray}{$\pm\phantom{0}1.7$}} \\
\bottomrule
\end{tabular}}
\end{table}

\begin{table}[!htbp]
\centering
\caption{\textbf{Per-seed best-checkpoint step distribution} (median, IQR, min--max in env-step millions across 10 seeds, per configuration). Best-per-seed is the protocol used by the mtrl-codebase baselines in \cref{tab:protocol}; medians well before the 20M/100M endpoint show that end-of-budget reporting would systematically understate these stacks, so apples-to-apples comparison requires the per-seed peak rather than the final checkpoint.}
\label{tab:bestckpt-steps}
\small
\setlength{\tabcolsep}{4pt}
\begin{tabular}{l c c c c c}
\toprule
Configuration & Bench. & Budget (M) & Median (M) & IQR (M) & min--max (M) \\
\midrule
Vanilla MT-PPO & MT10 & 20 & 15.0 & [13.4, 17.2] & 10.0--19.6 \\
+LN-c & MT10 & 20 & 11.8 & [10.9, 16.4] & 6.8--20.0 \\
+LN-c +PopArt & MT10 & 20 & 13.4 & [9.7, 15.9] & 6.4--17.6 \\
+LN-c +FG-c & MT10 & 20 & 8.4 & [6.7, 9.7] & 5.6--19.2 \\
+LN-c +PopArt +FG-c & MT10 & 20 & 8.4 & [7.0, 9.8] & 6.4--12.4 \\
\textbf{+PCGrad-a (TOPPO)} & MT10 & 20 & 12.6 & [9.4, 17.2] & 7.2--19.6 \\
\midrule
Vanilla MT-PPO & MT50 & 100 & 93.5 & [89.0, 97.2] & 83.0--99.0 \\
+LN-c & MT50 & 100 & 77.0 & [72.0, 86.5] & 48.0--96.0 \\
+LN-c +PopArt & MT50 & 100 & 88.0 & [67.0, 92.0] & 40.0--96.0 \\
+LN-c +FG-c & MT50 & 100 & 84.0 & [70.0, 97.0] & 58.0--100.0 \\
+LN-c +PopArt +FG-c & MT50 & 100 & 86.0 & [66.0, 97.5] & 40.0--100.0 \\
\textbf{+PCGrad-a (TOPPO)} & MT50 & 100 & 84.0 & [51.0, 93.5] & 46.0--96.0 \\
\bottomrule
\end{tabular}
\end{table}

\end{document}